\numberwithin{equation}{section}
\theoremstyle{plain}
\newcommand{\thisHypName}{}
\newtheorem*{genericHyp}{\thisHypName}
\newenvironment{namedHyp}[1]
  {\renewcommand{\thisHypName}{#1}%
   \begin{genericHyp}}
  {\end{genericHyp}}
\newcommand{\hideall}[1]{}
\newcommand{\spa}{\mathcal{X}}
\newcommand{\labSpa}{\mathcal{Y}}
\newcommand{\dist}{\rho}
\newcommand{\vcDim}{\mathcal{V}_{\mathcal{B}}}
\newcommand{\cBall}[2]{B(#1 , #2 )}
\newcommand{\ballColl}{\mathcal{B}}
\newcommand{\tJoinProb}{Q}
\newcommand{\sJoinProb}{P}
\newcommand{\featVar}{X}
\newcommand{\labVar}{Y}
\newcommand{\tProb}{\tJoinProb_{\featVar}}
\newcommand{\sProb}{\sJoinProb_{\featVar}}
\newcommand{\sCondProb}{\sJoinProb_{\labVar | \featVar}}
\newcommand{\tCondProb}{\tJoinProb_{\labVar | \featVar}}
\newcommand{\sReCondProb}[1]{\sJoinProb_{\labVar | #1}}
\newcommand{\tReCondProb}[1]{\tJoinProb_{\labVar | #1}}
\newcommand{\sDom}{\mathcal{X}_{\sJoinProb}}
\newcommand{\tDom}{\mathcal{X}_{\tJoinProb}}
\newcommand{\regFct}{\eta}
\newcommand{\family}{\mathcal{T}}
\newcommand{\dmFam}{\family_{\dm}}
\newcommand{\bcnFam}{\family_{\bcn}}
\newcommand{\totN}{n}
\newcommand{\nn}{k}
\newcommand{\sN}{n_{\sJoinProb}}
\newcommand{\tN}{n_{\tJoinProb}}
\newcommand{\sampleBase}{({\bf X, Y})}
\newcommand{\sample}{\sampleBase}
\newcommand{\sSample}{\sampleBase_{\sJoinProb}}
\newcommand{\tSample}{\sampleBase_{\tJoinProb}}
\newcommand{\featVectBase}{\bold{\featVar}}
\newcommand{\featVect}{\featVectBase}
\newcommand{\sFeatVect}{\featVectBase_{\sJoinProb}}
\newcommand{\tFeatVect}{\featVectBase_{\tJoinProb}}
\newcommand{\labVectBase}{\bold{\labVar}}
\newcommand{\labVect}{\labVectBase}
\newcommand{\sLabVect}{\labVectBase_{\sJoinProb}}
\newcommand{\err}{\text{err}_{\tJoinProb}}
\newcommand{\exErr}{\mathcal{E}_{\tJoinProb}}
\newcommand{\h}{h}
\newcommand{\hStar}{h^{*}}
\newcommand{\hHat}{\hat{\h}}
\newcommand{\hPQ}{\hat{h}_{k} }
\newcommand{\empReg}{\hat{\regFct}}
\newcommand{\nnFeat}[2]{\featVar_{(#1)}^{#2}}
\newcommand{\nnLab}[2]{\labVar_{(#1)}^{#2}}
\newcommand{\holderExp}{\alpha}
\newcommand{\holderCoeff}{C_{\holderExp}}
\newcommand{\tTsyCoeff}{C_{\tTsyExp}}
\newcommand{\tTsyExp}{\beta}
\newcommand{\domBnd}{D}
\newcommand{\tCovDim}{d}
\newcommand{\tCovCoeff}{C_{\tCovDim}}
\newcommand{\sDmDim}{d}
\newcommand{\tDmDim}{d}
\newcommand{\tDmCoeff}{C_{\tDmDim}}
\newcommand{\transMarginCoeff}{C_{\transMarginExp}}
\newcommand{\transMarginExp}{\gamma}
\newcommand{\bcn}{\text{(BCN)}}
\newcommand{\dm}{\text{(DM)}}
\newcommand{\diamDom}{\Delta_\spa}
\newcommand{\bigElement}{\Phi}
\newcommand{\dmLbConst}{c}
\newcommand{\s}{s}
\newcommand{\diff}{\, \text{d}}
\newcommand{\upConst}{C}
\newcommand{\lowConst}{c}
\newcommand{\rates}{d_0}
\newcommand{\M}{M}
\newcommand{\m}{m}
\newcommand{\alp}{\kappa}
\newcommand{\semiMetric}{\bar \rho}
\newcommand{\semiDist}[2]{\semiMetric \left( #1 , #2 \right)}
\newcommand{\KLDiv}[2]{\mathcal{D}_{\text{kl}} \left(#1 | #2 \right)}
\newcommand{\radius}{r}
\newcommand{\tempRadius}{r'}
\newcommand{\w}{w}
\newcommand{\wConst}{\lowConst_{\w}}
\newcommand{\rConst}{\lowConst_{\radius}}
\newcommand{\mConst}{\lowConst_{\m}}
\newcommand{\grid}{\mathcal{Z}}
\newcommand{\p}{z}
\newcommand{\ball}{B}
\newcommand{\lbda}{q}
\newcommand{\PDensity}{p}
\newcommand{\holderFct}{u}
\newcommand{\sig}{\sigma}
\newcommand{\hammingDist}[2]{\rho_{H}\left(#1,#2 \right)}
\newcommand{\sampleDist}{\Pi}
\newcommand{\modelClass}{\mathcal{H}}
\newcommand{\coverSet}{R}
\newcommand{\nnZero}{\nn_{0}}
\newcommand{\lowerBd}{\hat \eta^\text{-}}
\newcommand{\upperBd}{\hat \eta^\text{+}}
\newcommand{\nR}{\totN_{\coverSet}}
\newcommand{\est}{\hat \eta }
\DeclareMathOperator*{\Expectation}{\mathbb{E}}
\newcounter{myremark}
\newenvironment{myremark}[1][]{\refstepcounter{myremark}\medskip
 \noindent \textbf{Remark~\themyremark} {\hspace{0.1mm} (#1)} \rmfamily}{\medskip}
 \newtheorem{mydefinition}{Definition}
 \newtheorem{assumption}{Assumption}
\newtheorem{example}{Example} 
\newtheorem{theorem}{Theorem}
\newtheorem{lemma}{Lemma}
 \newtheorem{myproposition}{Proposition}
\begin{document}

\begin{frontmatter}
\title{Marginal Singularity, and the Benefits of Labels in Covariate-Shift} 
\runtitle{Minimax Transfer}

\begin{aug}
\author{\fnms{Samory} \snm{Kpotufe}\thanksref{t1,t2,m1}\ead[label=e1]{skk2175@columbia.edu}},
\author{\fnms{Guillaume} \snm{Martinet}\thanksref{t1, t2}\ead[label=e2]{ggm2@princeton.edu}}

\thankstext{t1}{Authors are listed in alphabetic order.}
\thankstext{t2}{Author was at Princeton University, ORFE, for a major portion of the project.}
\runauthor{Kpotufe and Martinet}

\affiliation{Columbia University, Statistics\thanksmark{m1} and Princeton University, ORFE\thanksmark{t2}}

\address{Address of the First and Second authors\\
Usually a few lines long\\
\printead{e1}\\
\phantom{E-mail:\ }\printead*{e2}}

\end{aug}

\begin{abstract}
\emph{Transfer Learning} addresses common situations in Machine Leaning where little or no labeled data is available for a target prediction problem -- corresponding to a distribution $Q$, but much labeled data is available from some related but different data distribution $P$. 
This work is concerned with the fundamental limits of transfer, i.e., the limits in target performance in terms of (1) sample sizes from $P$ and $Q$, and (2) differences in data distributions $P, Q$. In particular, we aim to address practical questions such as how much \emph{target} data from $Q$ is sufficient given a certain amount of related data from $P$, and how to optimally \emph{sample} such target data for labeling.

We present new minimax results for transfer in \emph{nonparametric} classification (i.e. for situations where little is known 
about the target classifier), under the common assumption that the marginal distributions of covariates differ between $P$ and $Q$ (often termed \emph{covariate-shift}). Our results are first to concisely capture the relative benefits of source and target labeled data in these settings through information-theoretic limits. Namely, we show that the benefits of target labels are tightly controlled by a \emph{transfer-exponent} $\gamma$ that encodes how \emph{singular} $Q$ is locally with respect to $P$, and interestingly paints a more favorable picture of transfer than what might be believed from insights from previous work. In fact, while previous work rely largely on refinements of traditional metrics and divergences between distributions, and often only yield a coarse view of when transfer is possible or not, our analysis -- in terms of $\gamma$ -- reveals a \emph{continuum of new regimes} ranging from easy to hard transfer. 

We then address the practical question of how to efficiently sample target data to label, by 
showing that a recently proposed semi-supervised procedure -- based on $k$-NN classification, can be refined to adapt to unknown $\gamma$, and therefore requests target labels only when beneficial, while achieving nearly minimax-optimal transfer rates without knowledge of distributional parameters. Of independent interest, we obtain new minimax-optimality results for vanilla $k$-NN classification in regimes with non-uniform marginals. 
\end{abstract}

\begin{keyword}[class=MSC]
\kwd[Primary ]{60K35}
\kwd{60K35}
\kwd[; secondary ]{60K35}
\end{keyword}

\begin{keyword}
\kwd{sample}
\kwd{\LaTeXe}
\end{keyword}

\end{frontmatter}

\section{Introduction}
Transfer learning addresses the many practical situations where much labeled data is available from a \emph{source} distribution $P$, but relatively little labeled data is available from a \emph{target} distribution $Q$. The aim is to harness 
source data to improve prediction on the target $Q$, assuming the source $P$ is informative about $Q$. Therefore, the main goal in transfer is to use as few target labels as possible, as these are typically expensive or hard to obtain in motivating applications: 
in Speech or Image Processing, much data might be available from a given population, while collecting and labeling speech or image data from a new target population is typically expensive. 
Typically, practitioners do not know a priori how related the two data distributions are, and therefore are left guessing how much labeled target data is needed to attain a desired prediction performance. Such basic questions motivate this work: we aim to quantify the benefits of labeled target data, given a certain amount of \emph{related} source data, and furthermore yield advice on how to efficiently select target data to label.

Naturally, a main theoretical question is in understanding relations (or the amount of divergence) between $P$ and $Q$ that allow information transfer, and in particular, which characterize the relative benefits of source and target labeled samples and thus help inform practice.   

We focus on the problem of classification, i.e., predicting labels $Y$ of future $X$ drawn from $Q$, in \emph{nonparametric} regimes, i.e., assuming little knowledge of the classification patterns encoded by $P$ and $Q$. We adopt the most common transfer setting in the literature, i.e., that of \emph{covariate-shift} where $P_{Y|X} = Q_{Y|X}$, allowing $Q_X$ to differ from $P_X$. While equal conditionals may seem restrictive, it is well motivated by driving applications of transfer (e.g. image, speech, or document classification) where covariates determine the label distribution. 

Our first theoretical question is then how to capture those differences between marginals $P_X, Q_X$ that are relevant to transfer.  While this question is not new, much of the existing work has focused on refinements of traditional metrics and divergences between distributions, which unfortunately have so far been unable to capture the relative benefits of labeled source and target data. In fact, as we will show, traditional measures of change between distributions (e.g. total-variation and common refinements, Wasserstein distance, KL-divergence) paint an overpessimistic view of transfer (at least in the settings of interest here) as they are inherently designed for other purpose (see examples and Remark \ref{remark:divergences} in Section \ref{sec:transferexponent}). 

We present new minimax results that concisely capture the relative benefits of source and target labeled data, under {covariate-shift}. Namely, we show that the benefits of target labels are controlled by a \emph{transfer-exponent} $\gamma$ that encodes how \emph{singular} $Q_X$ is locally with respect to $P_X$, and interestingly allows situations where transfer did not seem possible under previous insights. In fact, our new minimax analysis -- in terms of $\gamma$ -- reveals a \emph{continuum of regimes} ranging from situations where target labels have little benefit, to regimes where target labels dramatically improve classification. 

The notion of transfer-exponent follows a natural intuition, also present in prior work, that transfer is hardest if $P_X$ does not properly cover regions of large $Q_X$ mass. In particular, $\gamma$ parametrizes the behavior of ball-mass ratios 
$Q(B_r)/P(B_r)$ as a function of neighborhood size $r$ (see Definition \ref{def:transferCoefficient}), namely, that these ratios behave like $r^{-\gamma}$. We will see, through both lower and upper-bounds, that transfer is easiest as $\gamma \to 0$ and hardest as $\gamma \to \infty$. There are two essential departures from more traditional measures: first, $\gamma$ {\color{black} is not symmetric}, i.e., $P$ might have information about $Q$ but not the other way around -- which is natural to expect in hindsight (e.g. if the support of $Q_X$ is a proper subset of that of $P_X$), and exemplifies the inadequacy of traditional \emph{metrics} between probability measures at capturing transfer; second, but more subtle, is that $\gamma$ accounts for neighborhood size (through $r$), which negates artifacts of the resolutions at which distributions are compared.

Interestingly, $\gamma$ is well defined even when $Q$ is singular with respect to $P$ -- in which case common notions 
of \emph{density-ratio} and information-theoretic divergences (KL or Renyi) fail to exist. We note that singularity of $Q$ with respect to $P$ is likely common in practice where high-dimensional data is often very structured, and transfer often involves going from a generic dataset from a domain $P$ to a more structured subdomain $Q$ {\color{black}(e.g. going from a generic image repository -- say of a city, to an application with less variety in images -- say mostly stop signs)}. Here, our results can directly inform practice: target labels yield greater performance the lower the dimension of $Q_X$ w.r.t. that of $P_X$; if $Q$ were of higher dimension than $P$, the benefits of source labels quickly saturate. Now when $Q$ and $P$ are of the same dimension, even 
sharing the same support, the notion of $\gamma$ reveals yet a rich set of regimes where transfer is possible at different rates, while more traditional notions might indicate otherwise. 

As stated earlier, the practical question motivating this work, is whether, given a large database of source data, acquiring additional target data might further improve classification; this is usually difficult to test given the costs and unavailability of target data. Here, by capturing the interaction of source and target sample sizes in our rates, in terms of $\gamma$, we can sharply characterize those sampling regimes where target or source data are most beneficial. We then show that it is in fact possible to \emph{adapt} to unknown $\gamma$, i.e., request target labels only when beneficial, while also attaining near- optimal rates in terms of unknown distributional parameters. 

\subsection*{Detailed Results and Related Work}
Many interesting notions of divergence have been proposed that successfully capture a general sense of when transfer is possible. In fact, the literature on transfer is by now expansive, and we cannot hope to truly do it justice. 

A first line of work considers refinements of total-variation that encode changes in error over the classifiers being used (as defined by a hypothesis class $\mathcal{H}$). The most common such measures are the so-called $d_{\mathcal{A}}$-divergence \citep{ben2010theory, david2010impossibility, germain2013pac} and $\mathcal{Y}$-discrepancy \citep{mansour2009domain, mohri2012new, cortesadaptation}. These notions are the first to capture -- through \emph{differences} in mass over space -- the intuition that transfer is easiest when $P$ has sufficient mass in regions of substantial $Q$-mass. Typical excess-error bounds on classifiers learned from source data (and perhaps some target data) are of the form $$o_p(1) + C\cdot \text{distance}(P, Q).$$  In other words, 
transfer seems impossible when these divergences are large; this is certainly the case in very general situations. However, as we show, there are ranges of reasonable situations ($0\leq \gamma < \infty$) where transfer is possible, even at fast rates $o(1/\sqrt{n})$ without the benefit of target data, while the above metrics remain uncharacteristically large (see Remark \ref{remark:divergences} of Section \ref{sec:transferexponent}). Also, as discussed earlier, \emph{metrics} on $P, Q$ 
carry the wrong intuition that transfer is symmetric, i.e., a metric treats the difficulty of transfer equally in both directions. 

Another prominent line or work, which has led to many practical procedures, considers so-called ratios of densities $f_{Q_X}/f_{P_X}$ or similarly Radon-Nikodym derivatives $d{Q_X}/d{P_X}$ as a way to capture the similarity between $P$ and $Q$ 
\citep{quionero2009dataset, sugiyama2012density}. It is often assumed in such work that $d{Q_X}/d{P_X}$ is bounded 
which corresponds to the regime $\gamma = 0$ in our case (see Example \ref{ex:boundedDensity} of Section \ref{sec:transferexponent}). Typical excess-error bounds are dominated by the estimation rates for $d{Q_X}/d{P_X}$ (see e.g. rates for $\alpha$-H\"older $d{Q_X}/d{P_X}$, $\alpha\to 0$, in \cite{kpotufe2017lipschitz}), which unfortunately could be \emph{arbitrarily} higher than the achievable rates we establish for the corresponding setting with $\gamma = 0$. Furthermore, as previously mentioned, $d{Q_X}/d{P_X}$ is ill-defined in common scenarios with structured data, 
or can be unbounded even while $\gamma$ remains small (see Example \ref{ex:unboundedDensity} of Section \ref{sec:transferexponent}). 

Another line of work, instead considers information-theoretic measures such as KL-divergence or Renyi divergence 
\citep{sugiyama2008direct, mansour2009multiple}. In particular, such divergences are closer in spirit to our notion of 
transfer-exponent $\gamma$ (viewing it as roughly characterizing the log of ratios between $Q_X$ and $P_X$). However, similarly to density ratios, these divergences are undefined in typical scenarios with structured data. 

Our upper-bounds are established for the now classical nonparametric settings of \cite{audibert2007fast}, which parametrize the interaction between label variable $Y$ and covariates $X$ via smoothness and noise conditions; this allows us to understand the interaction between $\gamma$ and traditional classification-complexity parameters, and capture regimes where target classification remains easy despite large $\gamma$. Our minimax upper-bounds are achieved by a generic $k$-NN classifier defined over the combined source and target sample. In particular, our results imply new convergence rates of independent interest for vanilla $k$-NN in the context of non-uniform marginals (see Remark \ref{rem:newKNNbounds}, Section \ref{sec:upperbounds}). 

Our lower-bounds are established over any learner with access to both source and target samples, and interestingly, allow the learner \emph{access to infinite unlabeled source and target data} (i.e., is allowed to know $P_X$ and $Q_X$). In other words, our lower-bounds imply that, at least in a minimax sense, unlabeled data only has marginal benefits in transfer, which is interesting as much research efforts has gone into leveraging unlabeled in various aspects of learning (see e.g. \citep{huang2007correcting, ben2012hardness} in the context of transfer). 

Finally, we address efficient target sampling in the context of \emph{semisupervised} or \emph{active} transfer, 
where, given labeled source data and unlabeled target data, the goal is to request as few target labels as possible to improve 
over using source data alone \citep{saha2011active, chen2011co, pmlr-v28-chattopadhyay13}. An early theoretical treatment can be found in \citep{yang2013theory}, but which however considers a fundamentally different setting with fixed marginal ($P_X = Q_X$) but varying conditionals ($P_{Y|X} \neq Q_{Y|X}$). The recent work of \cite{berlind2015active} gives a nice first theoretical treatment of the problem under similar nonparametric conditions as ours; however their work is less concerned with understanding the information-theoretic limits of the problem, but rather in deriving algorithmic strategies towards reducing label requests. We will build on their algorithmic insights, and show how to refine their main procedure to achieve near minimax transfer rates, \emph{without prior knowledge} of distributional parameters, while requesting target labels only when necessary, i.e., when the unknown $\gamma$ is large with respect to source sample size.  

We note that a conference abstract of the work appeared earlier \cite{pmlr-v75-kpotufe18a}. The abstract gives a combined \emph{sketch} of Theorems \ref{thm:minimaxLowerBound} and \ref{thm:expErrRates}, but provides neither rigorous statements of these results, nor their analysis; furthermore, it does not provide any of the adaptive rates and sampling results of Theorems \ref{thm:genericadaptivity} and \ref{thm:labelingComplexity}.

\section*{Paper Outline}
We start with definitions in Section \ref{sec:prelim}, followed by an overview of results in 
Section \ref{sec:overview}. We provide discussions and detailed proofs of some of the lower-bounds in Section \ref{sec:lowerboundAnalysis}, followed by the essential ingredients of the upper-bound analysis in Section \ref{sec:upper-bounds}. Remaining proofs, along with extended settings, are covered in the appendix (supplementary material).

\section{Preliminaries}
\label{sec:prelim}
\subsection{Basic Distributional Setting}
We consider a classification setting where the input variable $X$ belongs to a compact metric space $(\spa, \dist)$ of diameter $\diamDom$, and the label variable $Y$ belongs to $\labSpa \equiv \{0,1\}$. We consider a {\em source} distribution $\sJoinProb$ and a {\em target} distribution $\tJoinProb$ over $\spa \times \labSpa$. We let $\sProb, \tProb,\sCondProb, \tCondProb$ denote the corresponding marginal and conditional distributions. 

We assume the common {\em covariate-shift} setting, where marginals \emph{shift} from source to target, although conditionals remain the same. This is formalized below. 

\begin{mydefinition}[Covariate-shift]\label{def:covShiftSet}
There exists a measurable $\regFct : \spa \rightarrow [0,1]$, called {\em regression function}, such that
$\sReCondProb{x}= \tReCondProb{x} = \regFct(x)$ a.s.~$\sProb$ and $\tProb$.%
\end{mydefinition}


\subsection{Classifiers under Transfer}
{The learner has access to labeled data 
$$\sSample \equiv \left \{ (\featVar_i , \labVar_i) \right \}_{i = 1}^{\sN}\sim P^{\sN},\text{ and }\tSample \equiv \left \{ (\featVar_i , \labVar_i) \right \}_{i = \sN+1}^{\sN + \tN}\sim Q^{\tN},$$ independent of $\sSample$. We let $\sample \equiv \sSample \cup \tSample$. 
We only assume that $(\sN\vee \tN) \geq 1$, although the regime $0\leq \tN \leq \sN$ is often most meaningful in applications of transfer learning. 

For any classifier $\h: \spa \rightarrow \{0,1\}$ learned over $\sample$, we are interested in the target error 
$\err(h) \equiv \mathbb{E}_{\tJoinProb} \mathbbm{1}\{h(X)\neq Y\}$. This is minimized by the {\em Bayes} classifier $\hStar(x) \equiv \mathbbm{1} \{\regFct(x)\geq 1/2\}$. Our results concern the best error achievable by any classifier $\h$ \emph{in excess} over the error of $\hStar$. }

\begin{mydefinition} \label{def:excessError}
The {\bf excess error} of a classifier $h$, under $Q$, is defined as: 
\begin{equation}
\exErr(\h) \equiv \err(\h)-\err(\hStar) = 2 \mathbb{E}_{Q}  \left|\regFct(X)-\frac{1}{2}\right| \cdot\mathbbm{1}\{\h(X) \neq \hStar(X)\}. \label{eq:excesserror}
\end{equation}
\end{mydefinition}

Our minimax analysis aims to upper and lower-bound $\exErr(\hat h)$ -- in expectation over $P^{\sN}\times Q^{\tN}$, over any possible learner $\hat h$\footnote{We will at times conflate the learner $\hat h:\sample \mapsto 2^\spa$ with its output classifier $\hat h\in 2^\spa$.}, so as to capture the separate contributions of $\sN$ and $\tN$ to the rates. 

\subsection{Transfer-exponent (from $P_X$ to $Q_X$)}
\label{sec:transferexponent}
\begin{figure} 

\centering 
\includegraphics[height=4.5cm]{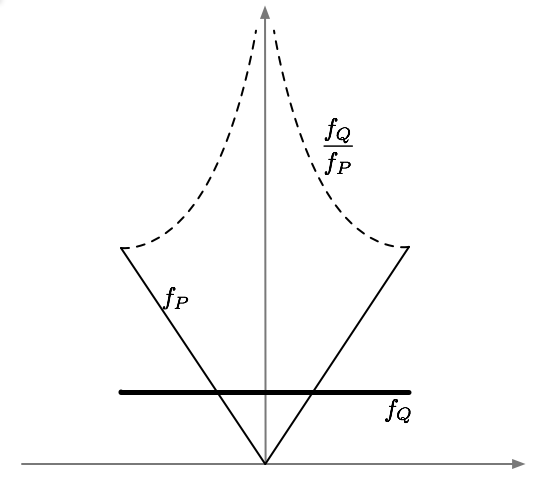}
\includegraphics[height=4.5cm]{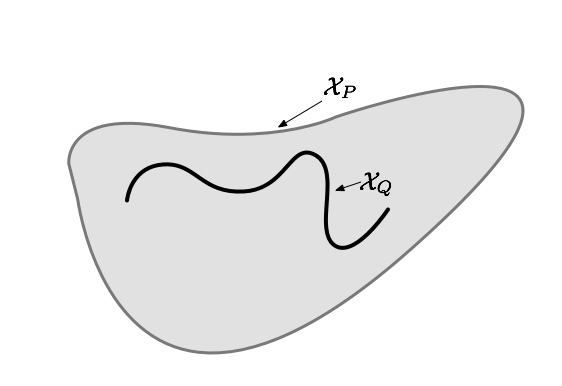}
\caption{\noindent Some settings with $0< \gamma < \infty$. Left: the density $f_P\propto |x|^\gamma$ goes fast to $0$, while $f_Q$ is uniform on the same support; $f_Q/f_P$ then diverges, but $\gamma$ is well-defined. Right: $Q_X$ has lower-dimensional support $\mathcal{X}_Q$; $\gamma$ then captures the difference in dimensions. This last case also illustrates the interesting fact that \emph{transfer} might be possible from $P$ to $Q$ but not from $Q$ to $P$ ($\gamma = \infty$ when $P$ is the target).}\label{fig:gamma}
\end{figure}
Intuitively, transfer is harder if we are likely to see little data from $P_X$ near typical points $X\sim Q_X$. In other words, for easy transfer from $P$ to $Q$, we want $P_X$ to give reasonable mass to those regions of non-negligible $Q_X$ mass. We aim to parametrize this intuition.  

Let $\cBall{x}{r}$ denote the closed ball $\{x'\in \spa: \dist(x, x')\leq r\}$. Let $\sDom$ denote the support of $\sProb$, 
i.e., $\sDom \doteq \{ x \in \spa: \sProb(\cBall{x}{r})>0, \forall r >0 \}$, and similarly define $\tDom$ as the support of $\tProb$. Remark that because $(\spa,\dist)$ is compact and hence separable, we have $\sProb(\sDom) = \tProb(\tDom) = 1$. 

\begin{mydefinition} 
\label{def:transferCoefficient}
$(P, Q)$ has {\bf transfer-exponent} 
$\transMarginExp\in \mathbb{R}_+ \cup \{0, \infty \}$, if there exists a constant $\transMarginCoeff \in (0,1]$, and 
a region $\tDom^\gamma \subset \tDom$, $Q_X (\tDom^\gamma) = 1$, such that:
\begin{equation} \label{eq:ass1equation}
\forall x \in \tDom^\gamma, \forall  r \in (0,\diamDom], \quad \sProb(\cBall{x}{r}) \geq \tProb(\cBall{x}{r}) \cdot \transMarginCoeff  \left( \frac{r}{\diamDom} \right)^{\transMarginExp}.
\end{equation}
\end{mydefinition}

First, notice that every pair $(P, Q)$ satisfies the above with at least $\gamma = + \infty$, since the condition in \eqref{eq:ass1equation} then just defaults to $P_X(B(x, r))\geq 0$. Second, if \eqref{eq:ass1equation} holds for some $\gamma$, then it holds for any $\gamma' > \gamma$; our results are therefore to be understood as holding for the smallest admissible such transfer-exponent $\gamma$. We will see that transfer-learning gets easier with smaller $\gamma$, i.e., achievable rates depend more on $n_P$ and less on $n_Q$ as $\gamma \to 0$. In particular for $\gamma = 0$,  
we need a number of target labels $\tN \gg \sN$ to get any speedup beyond the rates achievable with $\tN = 0$ target labels. For $\gamma = \infty$, we have nearly no transfer, i.e., $\sN$ has little effect on achievable rates. 

Next, to get a sense of the applicability of the above definition, let's consider some examples of situations with different transfer-exponents, including the boundary cases $\gamma = \infty$ and $\gamma = 0$.  As it turns out, these boundary cases encompass much of the usual regimes covered by previous analyses.

\begin{example}[Disjoint supports, or higher-dimensional target]\label{ex:disjointSupports}
Suppose $\tDom\setminus\sDom \neq \emptyset$. Then $\gamma = \infty$ since for any $x\in\tDom\setminus\sDom$, 
$\exists r>0$ s.t. $P(B(x, r)) = 0$ while $Q(B(x, r))>0$.  An important such case in practice is when the support $\tDom$ is of higher dimension than $\sDom$. As we'll see, source-labeled data have minimal benefits in such cases (beyond improving constants) as discussed above. 
\end{example}

\begin{example}[Bounded density ratio $dQ_X/dP_X$] \label{ex:boundedDensity}
Let $Q_X$ be absolutely continuous with respect to $P_X$ and therefore admit a density (Radon-Nikodym derivative) 
$dQ_X/dP_X$ with respect to $P_X$. If $dQ_X/dP_X \leq C$, we then have $\gamma = 0$, since for any ball $B$ we have 
$Q_X(B) = \int_B \frac{dQ_X}{dP_X}dP_X \leq C\cdot P_X(B)$. 
Arguably, this is the most studied case in transfer under covariate-shift. 
\end{example}

\begin{example}[Unbounded density ratio $dQ_X/dP_X$] \label{ex:unboundedDensity}
Again, let $Q_X$ admit a density $dQ_X/dP_X$ with respect to $P_X$. However we now allow $dQ_X/dP_X$ to diverge at some points or regions in space; how fast it diverges is then controlled by $\gamma$. This is a sense in which we might view $\gamma$ as encoding a degree of singularity of $Q_X$ with respect to $P_X$. 
Here is a concrete example (see also Figure \ref{fig:gamma}): 

Let $Q_X$ be uniform on $([-1, 1], \rho \doteq |\cdot|)$ (or have bounded Lebesgue density), and let $P_X$ have Lebesgue density $f_P(x)\propto |x|^\gamma$ on $[-1, 1]$. Then $dQ_X/dP_X = 1/(2f_P)$ and diverges at $x=0$. It is immediate that \eqref{eq:ass1equation} holds for any ball centered at $x=0$. It is not hard to check however that \eqref{eq:ass1equation} holds at all $x \in [-1, 1]$ since $P_X$ can only assign higher mass away from $0$.

Following the above example, we can see that $\gamma = \infty$ happens when $dQ_X/dP_X$ diverges at a rate faster than polynomial (e.g. let $f_P(x) \propto \exp(-1/|x|)$).  Such fast divergence in $dQ_X/dP_X$ happens for instance if $P_X$ and $Q_X$ are sufficiently separated Gaussians, in which case transfer can be hard. In fact, the example of two Gaussians was given earlier in \citep{cortes2010learning} as an example of hard transfer for importance-sampling approaches; our present results indicate that, in a minimax sense, such situations are hard irrespective of the learning approach. 
\end{example}

\begin{example}[$Q_X$ has lower-dimension] \label{ex:diffDim}
Suppose that $Q_X$ has support $\tDom$ of dimension $d_Q$, while $P_X$ has support $\sDom$ of dimension $d_P\geq d_Q$ (Figure \ref{fig:gamma}). In a generic metric space, this would be formalized with respect to the mass assigned to balls as $Q_X(B(x, r)) \propto r^{d_Q}$ while $P_X(B(x, r))\propto r^{d_P}$ for $x \in \tDom$ (see e.g. Definition \ref{def:doublingMeas}), following similar intuition for Euclidean spaces. It is then direct that we would have 
$\gamma = d_P - d_Q$. This is again a sense in which $\gamma$ encodes the \emph{strength} of singularity of $Q_X$ with respect to $P_X$. We'll then see that the smaller $d_Q\ll d_P$, the more useful target labels are.   
\end{example}

We remark that, in practice, $\gamma$ might capture any mix of the above examples, and as such, can be viewed as measuring the \emph{degree} to which $Q_X$ is close to singular with respect to $P_X$ (if $Q_X \ll P_X$ as in Example \ref{ex:unboundedDensity}), or otherwise the \emph{strength} of such singularity (Example \ref{ex:diffDim}). 

\begin{myremark}[Other divergences can be pessimistic] 
\label{remark:divergences}
Common notions of dissimilarity used in transfer take the form 
$\text{div}(Q_X, P_X) = \sup_{A \in \mathcal{A}} |Q_X(A) - P_X(A)|$, where $A \in \mathcal{A}$ are subsets of $\mathcal{X}$ encoding classification decisions (indicators over classifiers $h$ in a fixed set $\mathcal{H}$, or their symmetric differences; see e.g., $d_{\cal A}$ and $d_{\cal Y} $ divergences of \cite{ben2010theory, cortesadaptation}. For common families $\mathcal{A}$ we would have $\text{div}(Q_X, P_X) \geq 1/2$ (leading to vacuous transfer rates), while we'll see that nontrivial transfer remains possible ($0< \gamma < \infty$). This will be the case for instance when $\tDom$ is of lower-dimension than $\sDom$ as in Example \ref{ex:diffDim} above: suppose for instance that $P_X$ is uniform on a cube $[0, 1]^{d_P}$, and $Q_X$ is uniform on a hyperplane through the cube; if $\mathcal{A}$ is all half-spaces (encoding linear separators or their symmetric differences) it's then clear that $\text{div}(Q_X, P_X) \geq 1/2$ while $\gamma = 1$. In fact, even when $P$ and $Q$ have the same support (hence dimension, as in Example \ref{ex:boundedDensity} or \ref{ex:unboundedDensity}), we can construct similar situations where $\text{div}(Q_X, P_X)$ is large, simply by assigning different masses to appropriately chosen $A \in \mathcal{A}$, while allowing small $\gamma$. 

In fact, \emph{metrics} (e.g., $d_{\cal A}$, total variation, Wasserstein, etc ...) cannot adequately capture transfer: we emphasize that transfer is inherently an asymmetric problem, namely, $P$ might have much information on $Q$ but not the other way around. This is clear for instance in Example \ref{ex:diffDim}, where $\gamma (P\to Q) < \infty$, but $\gamma(Q\to P) = \infty$. Similarly in Example \ref{ex:unboundedDensity}, $\gamma(P\to Q) >0$ but it is easy to see that $\gamma(Q\to P) = 0$. 

Information-theoretic divergences (Renyi or Kullback Leibler (KL)) seem related to $\gamma$ if not only for the fact that 
$\gamma$ serves to characterize the behavior of $\log Q_X(B(x, r))/P_X(B(x, r))$ as $r\to 0$. In particular, for the distributions in Examples \ref{ex:boundedDensity}, \ref{ex:unboundedDensity} above, it is easy to check that 
KL-divergence remains small with small $\gamma$, and diverges for the examples with $\gamma = \infty$. However, the exact relations between such divergences and $\gamma$ (when $dQ_X/ dP_X$ exists) remain unclear and worth further study. Nonetheless, the notion of $\gamma$ captures more general situations, as it remains well-defined even when $Q_X$ is singular with respect to $P_X$. 

\end{myremark} 



\subsection{Classification Regimes}
\label{sec:regimes} 
We consider transfer under two nonparametric classification regimes introduced in \citep{audibert2007fast}. Both regimes similarly parametrize the behavior of $\eta(x) = \mathbb{E}[Y|x]$ near the boundary $1/2$, but differ in their regularity assumptions on $Q_X$, i.e., in whether $Q_X$ properly covers its support $\tDom$ or not.  
These regimes capture the hardness of classification with respect to $Q_X$, while the transfer-exponent $\gamma$ of earlier, captures the hardness of transfer from $P$ to $Q$. 

\subsubsection*{Smoothness of $\eta$ and Low Noise Conditions}

\begin{mydefinition}[Smoothness] \label{def:smoothness}
The {\em regression function} $\regFct$ is $(\holderCoeff, \holderExp)$--H\"{o}lder for $\holderExp \in (0,1]$, $\holderCoeff > 0$, if $\, \forall x,x' \in \spa, \quad |\regFct(x) - \regFct(x')| \leq \holderCoeff \cdot \dist(x,x')^\holderExp$. 
\end{mydefinition}

Next,  we characterize how likely it is for $\eta$ to be close to $1/2$ under $Q_X$. 

\begin{mydefinition} [Tsybakov's noise condition for $\tJoinProb$] \label{def:noise} 
$Q$ has noise parameters $\tTsyExp, \tTsyCoeff>0$, if $\forall t \geq 0$, 
$\quad \tProb \left( 0 < \left| \regFct(\featVar) - \frac{1}{2} \right| \leq t \right) \leq \tTsyCoeff t^{\tTsyExp}$. \emph{The larger $\beta$, the easier the classification task.} Note that the above always hold for any $Q$ with at least $\tTsyExp = 0$ and $C_0 = 1$.
\end{mydefinition} 
\subsubsection*{Regimes, and Dimension of $X \sim Q_X$}
We now present the two classification regimes. The first regime \dm, ensures that $Q_X$ has near \emph{uniform} mass, and corresponds to the \emph{strong-density} condition of \cite{audibert2007fast}, and holds for instance for \emph{doubling measures} where $\forall 0< r < \Delta_X$, $Q_X(B(x, r/2)\geq C\cdot Q_X(B(x,r) $ for some $C$.

\begin{mydefinition} [Bounded Mass] \label{def:doublingMeas}
We say that $\tProb$ is $(\tDmCoeff,\tDmDim)$-doubling, 
for $\tDmCoeff \in (0, 1]$ and $d\geq 1$, if $\forall r \in [0,\diamDom], \forall x \in \tDom, \quad \tProb(B(x,r)) \geq 
\tDmCoeff \left({r}/{\diamDom} \right)^{d}$.
\end{mydefinition}
The first classification regime is thus formalized as follows:
\begin{namedHyp}{\dm} The regression function $\eta$ is $(\holderCoeff, \holderExp)$--H\"{o}lder, and $Q$ has noise parameters $\tTsyExp, \tTsyCoeff$. Furthermore, $\tProb$ is $(\tDmCoeff,\tDmDim)$-doubling.
\end{namedHyp}

Classification is easiest in this regime, and so turns out to yield faster transfer rates. The quantity $d$ plays the role of the \emph{dimension} of the input $X\sim Q_X$ (think for instance of $Q_X \equiv \mathcal{U}([0, 1]^d), \ell_\infty)$). 

The second regime \bcn, allows arbitrary $Q_X$, and therefore results in harder classification, and also slower transfer, as we will see. For this regime, the following regularity conditions (and quantity $d$) serve to capture the \emph{dimension} of the support $\tDom$. Recall, that the $r$-covering number of a pre-compact set $\tDom$, denoted $\mathcal{N}(\tDom,\dist, r)$, is the smallest number of $\dist$-balls {of radius $r$} needed to cover $\tDom$.

\begin{mydefinition} [Bounded covering number] \label{def:boundedCov}
 $\tDom$ is said to have $(\tCovCoeff,\tCovDim)$-bounded covering number, for $\tCovDim \geq 1$, $\tCovCoeff \geq 1$, if $\ \forall r \in (0,\diamDom], \, \, \mathcal{N}(\tDom,\dist,r) \leq \tCovCoeff \left( {\diamDom}/{r} \right)^{d}$.
\end{mydefinition}
The second regime is thus formalized as follows. 
\begin{namedHyp}{\bcn} The regression function $\eta$ is $(\holderCoeff, \holderExp)$--H\"{o}lder, and $Q$ has noise parameters $\tTsyExp, \tTsyCoeff$. Furthermore, $\tDom$ has $(\tCovCoeff,\tCovDim)$-bounded covering number.
\end{namedHyp}

The above two parameters, together with the transfer-exponent $\transMarginExp$, characterize the classes of distribution tuples $(\sJoinProb, \tJoinProb)$ considered in this work. 

\begin{mydefinition} [Transfer classes] \label{def:distClass}
Fix parameters $(\transMarginCoeff, \transMarginExp, \holderCoeff, \holderExp, \tTsyCoeff, \tTsyExp, \tDmCoeff, \tDmDim)$ as in Definitions \ref{def:transferCoefficient}, \ref{def:smoothness}, \ref{def:noise}, \ref{def:doublingMeas} or \ref{def:boundedCov}. We call $\dmFam$ (resp. $\bcnFam$) the class of all distribution tuples $(\sJoinProb, \tJoinProb)$ with transfer parameters $(\transMarginCoeff, \transMarginExp)$ and where $Q$ satisfies \dm~(resp. \bcn) for the fixed parameters.
\end{mydefinition}

\section{Results Overview}
\label{sec:overview}
We start with lower-bounds (Section \ref{sec:lowerbounds}), and matching \emph{oracle} upper-bounds (Section \ref{sec:upperbounds}). 
Our adaptivity results follow in Section \ref{sec:adaptivity}. 

\subsection{Minimax Lower-Bounds} 
\label{sec:lowerbounds}
As shown below, the rates of transfer get worse with large $\gamma \in \mathbb{R}_+ \cup \{0, \infty\}$. For  simplicity, we assume here that $\tDmDim$ is an integer, $\spa = [0,1]^{\tDmDim}$ and $\dist(x,y) = \|x - y \|_{\infty}$. Similar lower-bounds can be established on more general metric spaces, through appropriate \emph{packings} of such space, while $\spa = [0,1]^{\tDmDim}$ affords us a simpler construction on a grid. In the following, $P_X$ is of the same dimension $d$ as $Q_X$, while Proposition \ref{prop:lbDmLowerDim} of Appendix \ref{app:example4} illustrates a lower-bound construction for the case of Example \ref{ex:diffDim} where dimensions differ. 

\begin{theorem} [Lower-bounds] \label{thm:minimaxLowerBound}
Let $(\spa,\dist) = ([0,1]^{\tDmDim},\| . \|_{\infty})$, for $\tDmDim \in \mathbb{N}^{*}$. 
Consider any classifier $\hat h$ learned on $\sample$, with knowledge of $P_X, Q_X$. 
The following holds for any admissible values of class parameters, up to specified restrictions. 

\begin{itemize}
\item \dm: Let $\rates = 2  + \tCovDim / \holderExp$, and $c = c (\dmFam)$ and suppose $\alpha\beta <d$. We have:
$$\sup_{(\sJoinProb, \tJoinProb) \in \dmFam} \mathbb{E}_{\sample}[\exErr(\hat h)] \geq \lowConst \left( \sN ^{\rates / (\rates + \transMarginExp / \holderExp)} + \tN \right)^{ -(\tTsyExp + 1) / \rates}.$$
For $\alpha \beta = d$, for any such $\beta$, there exists $C_\beta>0$ such that the above holds.

\item \bcn: Let $\rates = 2 + \tTsyExp + \tCovDim / \holderExp$, and 
$c = c (\bcnFam)$. We have: 
$$\sup_{(\sJoinProb, \tJoinProb) \in \family} \mathbb{E}_{\sample}[\exErr(\hat h)] \geq \lowConst \left( \sN ^{\rates / (\rates + \transMarginExp / \holderExp)} + \tN \right)^{ -(\tTsyExp + 1) / \rates}.$$
\end{itemize}


\end{theorem}

Note that, for $n_P = 0$, we recover known classification lower-bounds of \cite{audibert2007fast}.  
As in that work, our lower-bounds exclude the regime $\alpha\beta> d$ for $\dmFam$ since nontrivial such settings are impossible (see Proposition 3.4 and discussion in \cite{audibert2007fast}). 

The main technicality in our transfer lower-bound is in dealing with two sources of randomness $(P, Q)$, along with keeping $(P, Q)$ related through $\gamma$. Unlike in usual lower-bounds, the learner has access to non-identical samples, in addition to \emph{knowing} both marginals $P_X, Q_X$. This brings up an interesting point: \emph{additional unlabeled data do not improve the minimax rates of transfer}. 

\subsection{Minimax Upper-Bounds}
\label{sec:upperbounds}
Our oracle upper-bounds are established through a generic $k$-NN classifier over the combined sample, as defined below. 

\begin{mydefinition}[$k$-NN] \label{def:kNNClass} Pick $1\leq k \leq n_P\vee n_Q$. Fix $x \in \spa$, and let $\{\nnFeat{i}{}\}_{i =1}^k$ 
denote the $k$ nearest neighbors of $x$ in $\featVectBase$ (break ties anyhow), with corresponding labels 
 $\{\nnLab{i}{}\}_{i = 1}^k$. 
Define the regression estimate $\empReg(x) \equiv \frac{1}{\nn} \sum_{i=1}^{\nn} \nnLab{i}{}$. 
The $k$-NN classifier at $x$ is then given by $\hPQ(x) \equiv \mathbbm{1}\{\empReg(x) \geq 1/2\}$. 
\end{mydefinition}


\begin{myremark}[New rates for $k$-NN under $\bcn$]\label{rem:newKNNbounds}  
Theorem \ref{thm:expErrRates} below is of independent interest for vanilla $k$-NN (by setting $n_P = 0$): namely, 
while $k$-NN methods have received much renewed attention \cite{samworth2012optimal, chaudhuri2014rates, shalev2014understanding}, most results concern the $\dm$ setting, 
i.e., assume near-uniform marginals. Notable recent exceptions are \cite{gadat2014classification, cannings2017local}, which both even allow unbounded support ${\cal X}_Q$. 
On one hand, under $\bcn$, \cite{gadat2014classification} show that the minimax rates 
of $n_Q^{-(\beta + 1)/(2+ \beta + d/\alpha)}$ are reachable by NN methods where $k$ is chosen locally as $k(x)$.  
Our results instead states that such optimal rates are reachable by vanilla $k$-NN with a global choice of $k$. 
The recent results of \cite{cannings2017local} also hold for global choices of $k$, but assuming $\beta = 1$, along with further smoothness assumptions on $\eta$ deviating from the $\bcn$ setting considered here. An interesting fact revealed here is that, a global optimal regression choice of $k$ of the form $n_Q^{1/(2+d/\alpha)}$) is suboptimal for classification, while the optimal choice of $k$ is smaller, of the form $k_2 = n_Q^{1/(2+\beta + d/\alpha)}$. 
Finally, for more context, we note that the original rates of $\bcn$ in \citep{audibert2007fast} are achieved by a non-polynomial time procedure based on intractable covers of a function space.

\end{myremark}

\begin{theorem} [Upper-bounds] \label{thm:expErrRates}
Let $\hPQ$ as given in Definition \ref{def:kNNClass}. The following holds for an oracle choice of $k$, set according to whether \dm\, or \bcn\, holds. 
\begin{itemize}
\item For $(\sJoinProb, \tJoinProb) \in \dmFam$, let $\rates = 2  + \tCovDim / \holderExp$, and $\upConst = \upConst (\dmFam)$ we have: 
$$ \mathbb{E}_{\sample}[\exErr(\hPQ)] \leq \upConst \left( \sN ^{\rates / (\rates + \transMarginExp / \holderExp)} + \tN \right)^{ -(\tTsyExp + 1) / \rates}.$$

\item For $(\sJoinProb, \tJoinProb) \in \bcnFam$, let $\rates = 2 + \tTsyExp + \tCovDim / \holderExp$, and 
$\upConst = \upConst (\bcnFam)$. We have: 
\begin{equation*}
\mathbb{E}_{\sample}[\exErr(\hPQ)] \leq
\begin{cases} 
\upConst \left( \sN ^{\rates / (\rates + \transMarginExp / \holderExp)} + \tN \right)^{ -(\tTsyExp + 1) / \rates} \text{ if } 
{\color{black} \alpha < 1}, \text{ otherwise} \\
\upConst\cdot \log(2(n_P + n_Q))\cdot \left( \sN ^{\rates / (\rates + \transMarginExp / \holderExp)} + \tN \right)^{ -(\tTsyExp + 1) / \rates} 
\end{cases} 
\end{equation*}
\end{itemize} 

The optimal {oracle} choice of $k$ is $\Theta \left( \sN ^{\rates / (\rates + \transMarginExp/ \holderExp)} + \tN \right)^{2 /\rates}$, where $\rates$ is as defined above for each of $\dmFam$, or $\bcnFam$.
\end{theorem}
The bounds match those of Theorem \ref{thm:minimaxLowerBound} (for 
$(\spa,\dist) \subset (\mathbb{R}^{\tDmDim},\ell_p), 1 \leq p \leq \infty$), 
apart for the corner case $\bcnFam$ with $\color{black} \alpha = 1$ where an additional log term gets introduced. Thus, the relative benefits of source and target samples is captured, through the transfer exponent $\gamma$, in the rate $(\sN ^{\rates / (\rates + \transMarginExp/ \holderExp)} + \tN)$, $d_0 = d_0(\family)$. In particular, source samples are most beneficial when
$\sN^{\rates / (\rates + \transMarginExp/ \holderExp)} \gg \tN$ (the rates are then of order $\sN^{- (\beta + 1)/(\rates + \transMarginExp/ \holderExp)}$), otherwise target samples are most beneficial (the rates then transition to $\tN^{- (\beta + 1)/\rates}$). 
Notice that the threshold $\sN ^{\rates / (\rates + \transMarginExp/ \holderExp)}$, viewed as an \emph{effective sample size from $P$}, is largest at $\gamma = 0$, and decreases to $1$ as $\gamma \to \infty$, in which case even a small amount $n_Q$ of target labels can considerably improve classification with respect to $Q$. Further intuition can be given for this effective sample size, e.g., in the case of  Example \ref{ex:diffDim}, as discussed in Appendix \ref{app:example4}. 

Setting $n_Q = 0$, we see that transfer remains possible in a rich continuum of regimes between $\gamma = 0$ and $\gamma = \infty$ with rates of the form $n_P^{-(\beta + 1)/(d_0 + \gamma/\alpha)}$, including \emph{fast} rates $o(n_P^{-1/2})$ for large $\beta$ (low noise). 


\begin{myremark}[Extended settings] We consider deviations from the above settings of $\dm$ and $\bcn$ in Appendix 
\ref{app:extensions}. First, Theorem \ref{theo:betainfinity} of the appendix addresses the case of $\beta = \infty$ under $\dm$ where, similar to the vanilla classification setting of \cite{audibert2007fast}, we can obtain exponential decreasing rates with constants expressed in terms of $n_P$, $n_Q$, and $\gamma$. 

Next, when the supports $\mathcal{X}_Q, \mathcal{X}_P$ do not overlap, or when 
$P_{Y|X}$ deviates from $Q_{Y|X}$, we obtain similar rates with additive terms accounting for such deviation. 
\end{myremark}

\subsection{Adaptive Upper-Bounds}
\label{sec:adaptiveRates}
\begin{algorithm} [t]
\caption{Adaptive NN classification estimate}
\DontPrintSemicolon
\KwIn{A labeled sample $\sampleBase'$ of size $\totN$, a query point $x$ and an integer $\nnZero \geq 1$.}
\vspace{0.1cm} 
For any value of $\nn$, let $\empReg_{\nn}(x) \doteq \frac{1}{\nn} \sum_{i=1}^{\nn}Y'_{(i)}$ the $\nn$-NN regression estimate using $\sampleBase'$\; 
Let $\nn = \nnZero$, $\lowerBd_k = \empReg_{\nn}(x) - \sqrt{\frac{\vcDim }{\nn}}\log \totN$, $\upperBd_k = \empReg_{\nn}(x) + \sqrt{\frac{\vcDim }{\nn}}\log \totN$ \; 
\While{ $\lowerBd_k \leq 1/2$ and $\upperBd_k \geq 1/2$ and $\nn \leq  \totN/2$}{
$\nn \leftarrow 2\nn$ \;
$\lowerBd_k \leftarrow (\empReg_{\nn}(x) - \sqrt{\frac{\vcDim }{\nn}}\log \totN) \vee \lowerBd_{k/2}$ \;
$\upperBd_k \leftarrow (\empReg_{\nn}(x) + \sqrt{\frac{\vcDim }{\nn}}\log \totN) \wedge \upperBd_{k/2}$ \;
\uIf{$\upperBd_k < \lowerBd_k$}{
$\empReg_{\nn} = (\upperBd_k+ \lowerBd_k)/2$ \\
\tcp{   Note: $\empReg_{\nn}$ is potentially different from $\empReg_{\nn}(x)$}
\textbf{break}}
}
\Return Classification estimate $\hat h(x) \leftarrow \mathbbm{1} \{  \empReg_{\nn} \geq 1/2 \}$ \label{alg:adaptiveKNN}\\
$\,$
\end{algorithm}

While the rates of Theorem \ref{thm:expErrRates} are tight, they require a choice of $k$ that depends on unknown distributional parameters. In this section we argue that, under some additional regularity on the metric $(\spa, \dist)$, $k$ can be chosen adaptively at each query $x$ as $k(x)$ to nearly attain the above rates. Such an adaptive choice is given in Algorithm \ref{alg:adaptiveKNN}, and is a refinement of so-called \emph{Lepski's method} \citep{lepski1997optimal}, 
but goes back to the \emph{intersecting confidence intervals} (ICI) approach of Goldenshluger and Nemirovski \cite{goldenshluger1997spatially}. A main distinction here is that, while typical analyses of such methods concern unknown smoothness $\alpha$, we here have to also adapt to unknown $d,  \gamma, \beta$; this comes with no substantial change to the basic algorithmic approach, but requires a more careful analysis. 

\begin{assumption}[Bounded VC ]\label{ass:boundedVC}
The family $\ballColl$ of all balls in $(\spa, \dist)$ has \emph{known} finite (Vapnik-Chervonenkis) VC-dimension $\vcDim$. 
\end{assumption}

The above regularity assumption holds for instance for subsets $\spa$ of a Euclidean space, with  
$\dist$ corresponding to a norm on the space. The assumption allows to bound pointwise regression rates, uniformly over $x \in \spa_Q \subset \spa$.

\begin{theorem}[Adaptive Rates of Algorithm \ref{alg:adaptiveKNN}] \label{thm:genericadaptivity}
Let Assumption \ref{ass:boundedVC} hold, and let $\family$ denote $\dmFam$ or $\bcnFam$. For $\family = \bcnFam$ assume further that $\alpha < d$. Suppose Algorithm \ref{alg:adaptiveKNN} takes as input $\sampleBase' \doteq \sampleBase$, 
with $\nnZero \doteq \lceil \vcDim \log(2(\sN + \tN)) \rceil$.
Let $\hHat$ denote the output of Algorithm \ref{alg:adaptiveKNN}. 
We have, for a constant $\upConst = \upConst(\family)$: 
\begin{equation*}
\sup_{(\sJoinProb, \tJoinProb) \in \family} \mathbb{E}_{\sample}[\exErr(\hHat)] \leq \upConst \left( \frac{\nnZero\cdot \log (2(\sN + \tN))}{\sN ^{\rates / (\rates + \transMarginExp / \holderExp)} + \tN }\right)^{ (\tTsyExp + 1) / \rates}, 
\end{equation*}
where $\rates = 2  + \tCovDim / \holderExp$ when $\family = \dmFam$, and $\rates = 2 + \tTsyExp + \tCovDim / \holderExp$ when $\family = \bcnFam$.
When $\family = \bcnFam$ with $\alpha = d$, replace $\upConst$ above with 
$\upConst(\bcnFam)\cdot \log(2(n_P + n_Q))$. 
\end{theorem}

The above rates match those of Theorem \ref{thm:expErrRates} up to log terms. 

\subsection{Active Sampling of Target Labels}
\label{sec:adaptivity}

While all discussion so far assumed an i.i.d. labeled target sample from $Q$, an increasingly popular setting \citep{saha2011active, chen2011co, pmlr-v28-chattopadhyay13} consists of selective labeling of a sample of unlabeled datapoints from $Q_X$. In this section, we show that, despite the new dependencies introduced in such settings, the transfer exponent $\gamma$ still yields bounds on statistical accuracy, while also controlling labeling requirements. We will consider the following setup:

\begin{quote}\normalsize
{\bf Setup:} The learner has access to labeled source data 
$(\sFeatVect, \sLabVect)$ of size $n_P$, and unlabeled target data $\tFeatVect$ of size $n_Q$. The goal is to request as few target labels as possible (at most $n_Q$ by design), and return a classifier $\hat h$, trained on the final labeled sample. 
\end{quote} 

A natural idea, recently formalized by \cite{berlind2015active}, is to request labels only at those datapoints $x\in \tFeatVect$ that 
have \emph{little} coverage under $P$, i.e., have relatively few neighbors from $\sFeatVect$. 
The algorithmic approach of \cite{berlind2015active} builds on the following useful concept. In all that follows we use the 
shorthand notation $[n]\doteq \{1, \ldots, n\}$, and the abbreviation \emph{NN} for \emph{nearest neighbor}. 


\begin{mydefinition}  [$\nn$-$2\nn$ Cover] \label{def:k2kCover}
Let $1\leq \nn \leq (\sN \vee \tN)/2$, and let $\featVectBase_{\coverSet}$ denote samples in $\featVectBase \doteq \sFeatVect \cup \tFeatVect$ indexed by  $\coverSet \subset [\sN + \tN]$. We say that $\featVectBase_{\coverSet}$ is a {\bf $\nn$-$2\nn$ cover} of $\featVectBase$ if, for any $\featVar_{i} \in \featVectBase$, either $\featVar_{i} \in \featVectBase_{\coverSet}$, or its $2\nn$ NN's in 
 $\featVectBase$ (including $\featVar_{i}$ itself) include at least $\nn$ samples from $\featVectBase_{\coverSet}$. If the choice of the $2 \nn$-NN is not unique, at least {\em one} of the possible choices must contain $\nn$ samples from $\featVectBase_{\coverSet}$.
\end{mydefinition}

While \citep{berlind2015active} leaves open the question of the choice of $k$, the idea is to request labels only for those points in $\mathbf{X}_R \cap \mathbf{X}_Q$. 
They present various ways to build such a cover, the obvious way being to start with the labeled samples, i.e., 
$\mathbf{X}_R = \mathbf{X}_P$, and add in points from $\mathbf{X}_Q \setminus \mathbf{X}_R$ that do not satisfy the conditions.  Following this, classification then consists of a $k$-NN estimate over a labeled sample 
$(\mathbf{X}_R, \mathbf{Y}_R)$. 

\paragraph{Contribution} We modify the procedure of \cite{berlind2015active} and construct a \emph{cover} $\mathbf{X}_R$ which is simultaneously a $k$-$2k$ cover for all $k$ in log-scale $\cal K$ of the form $[\log n: n/2], n = n_P \vee n_Q$ 
(Algorithm \ref{alg:queryLabels}). We then use Algorithm \ref{alg:adaptiveKNN} to make a choice of $k = k(x) \in \cal K$, and show that, despite added dependencies in $(\mathbf{X}_R, \mathbf{Y}_R)$, this approach achieves a near optimal excess risk as in Theorem \ref{thm:genericadaptivity} above. Furthermore, we show that the amount of label requests can be upper-bounded in terms of the exponent $\gamma$, and the behavior of nearest neighbor distances under $Q$. In particular, if the dimension parameter $d$ is \emph{tight}, in the sense of Assumption \ref{ass:boundedQmass}, \emph{no label} is queried whenever $n_Q$ is too small (w.r.t. $n_P$ and $\gamma$) to yield much new information over the source data. Interestingly, this threshold on $n_Q$ is detected without knowledge of $\gamma$.


\begin{algorithm}[h] 
\caption{Simultaneous $k$-$2k$ covers over a set of dyadic values of $k$}
\DontPrintSemicolon
\KwIn{Source $(\sFeatVect, \sLabVect)$ of size $n_P$, target $\tFeatVect$ of size $n_Q$, and confidence parameter $0< \delta < 1$}
\vspace{0.1cm}
Start with indices $\coverSet \leftarrow [n_P]$, and set 
$\nnZero = \lceil \vcDim \log(2(\sN + \tN)) + \log(6/\delta)  \rceil$  \;
\For{ $i = 0$ to $\lfloor \log_{2}((\sN \vee \tN)/ 2\nnZero) \rfloor$}{
	Let $\nn \leftarrow 2^{i} \nnZero$ \;
	\tcc{Ensure that $R$ is a $\nn$-$2\nn$ cover of $\mathbf{X}$}
	$\coverSet \leftarrow \coverSet  \cup \{i \in (\sN, \sN + \tN]: 
	 \featVar_{i} \text{ has less than } \nn\text{ NNs from } \featVectBase_{R}\text{ amongst its } 2\nn\text{ NNs from } \featVectBase \}$
}
\Return $\featVect_{\coverSet}$ \label{alg:queryLabels}\\
$\,$
\end{algorithm}


\begin{assumption}[Bounded $Q$-mass]\label{ass:boundedQmass}
Let $d$ be the \emph{dimension} parameter in either $\dm$ or $\bcn$. $Q_X$ further satisfies the following, for some $\tDmCoeff'>0$:  
$$\forall r \in [0,\diamDom], \forall x \in \tDom, \tProb(B(x,r)) \leq 
\tDmCoeff' \left( r / \diamDom \right)^{d}.
$$
\end{assumption}

The main results of this section are given in the following theorem. 

\begin{theorem} [Guarantees for $k$-2$k$ covers] \label{thm:labelingComplexity}
Let $0< \delta < 1$, be the input to Algorithm \ref{alg:queryLabels}, and let 
$\mathbf{X}_R$ (a uniform $k$-$2k$ cover for $k \in \mathcal{K}$) be its output. 

\begin{itemize} 
\item {\it (Classification Rates)} Suppose Assumption \ref{ass:boundedVC} holds. If Algorithm \ref{alg:adaptiveKNN} is given as input 
$\sampleBase' \doteq \sampleBase_{\coverSet}$, then its output $\hat h$ satisfies the adaptive excess error bounds of Theorem \ref{thm:genericadaptivity}. 

\item {\it (Labeling Complexity)} Define $r_Q(x; \alpha) = \inf \{r: Q_X(B(x, r)) \geq \alpha\}$, $\alpha\in[0, 1]$, for any $x\in {\cal X}_Q$ and $\alpha \in [0, 1]$. Then, with probability at least $1-2\delta$, Algorithm \ref{alg:queryLabels} will not query the label of any datapoint in 
$$\mathbf{X}_Q \cap {\cal X}_{Q}^\gamma \doteq \left\{x \in {\cal X}_Q: r_Q\!\!\left(x; \frac{k_0}{n_Q}\right ) \geq \Delta_{\cal X} \cdot\left(9C_\gamma \frac{n_Q}{n_P}\right)^{1/\gamma} \right \}.$$

In particular, if $Q_X$ satisfies Assumption \ref{ass:boundedQmass}, then there is \emph{no} label query whenever 
$\sN^{\tDmDim / (\tDmDim + \transMarginExp)} \geq \upConst \tN$, for a constant $C = C(\gamma, d, \Delta_{\cal X})$.
\end{itemize} 
\end{theorem}

In the above, the \emph{no query} set  ${\cal X}_{Q}^\gamma$ gets larger as $n_Q$ gets smaller, since the defining conditions get looser: the r.h.s. of the inequality gets smaller, while $k_0/n_Q$ gets larger and therefore so does $r_Q(x, k_0/n_Q)$. Intuitively, the source $\sFeatVect$ has better coverage of smaller targets $\tFeatVect$, so target labels are less important.

\hideall{
\section{Analysis Overview} 
\label{sec:analysis}
We now present our main technical ideas, while the full analysis is provided in the appendix. 
\subsection{Lower-Bound Analysis Outline} 
As stated earlier, the main technicality in the proof of Theorem \ref{thm:minimaxLowerBound} is in the coupling of $P$ and $Q$, i.e., dealing with classifiers learned on non-identical samples. At a high-level, we call on known extensions of Fano's lemma \citep{cover2012elements} which roughly state the following: 
\begin{quote}
Let $\{\Pi_h\}$ denote distributions indexed by $h \in \mathcal{H}$. Suppose 
all $h'$s are \emph{far} from each other under a semi-metric $\bar \rho$, but $\Pi_h$'s are close in KL-divergence (Kullback-Leibler). Then, for any learner $\hat h$ of $h$, there is a sizable $\Pi_h$-probability that $\hat h$ is $\bar \rho$-far from $h$. 
\end{quote}
See Proposition \ref{prop:tsy25} for such a statement (due to \cite{tsybakov2009introduction}). The work of 
\cite{audibert2007fast} instead uses an approach based on so-called Assouad lemma. 

For our purpose, the $h$ indices would stand for Bayes classifiers over possible regression functions $\eta$ satisfying $\dm$ or $\bcn$. Let $(P, Q)$ denote a transfer tuple with corresponding Bayes classifier $h$; for fixed $n_P, n_Q$, we let 
$\Pi_h = P^{n_P} \times Q^{n_Q}$, thus coupling $P$ and $Q$ into a single distribution. Now, while the KL-divergences over the family involve both $P$ and $Q$, we are free to define $\bar \rho$ over $Q_X$ alone, and thus relate it to the target excess error $\exErr$. Now what's left is to ensure that the various conditions of $\dm$ or $\bcn$ are satisfied. 

For conditions involving only $Q$ (smoothness, noise, and dimension), we follow closely the original lower-bound construction of \cite{audibert2007fast}, apart for some technical details in our choice of smooth basis functions ($\eta$ is chosen as a linear combination of simple basis functions crossing $1/2$). Now, to ensure that any given transfer-exponent $\gamma$ holds, we divide up the mass of $P_X$ appropriately over space, following the type of intuitions laid out in Examples 
\ref{ex:disjointSupports}, \ref{ex:boundedDensity}, \ref{ex:unboundedDensity}, \ref{ex:diffDim} of Section \ref{sec:transferexponent}. The rest involves adjusting the construction properly so that $h$'s are sufficiently far in $\bar \rho = \bar \rho(Q_X)$, while 
$\Pi_h$'s (involving both $P, Q$) remain sufficiently close in KL-divergence. 

Finally, we note that the marginals $P_X, Q_X$ remain fixed for our choice family $\{\Pi_h\}$, and thus might be known to the learner (which is allowed to know the family, but not the data's distribution). 
}

\hideall{ 
\subsection{Upper-Bound Analysis Outline} 
\label{sec:upperboundanalysis}
Here we outline the main insights in obtaining Theorem \ref{thm:expErrRates}. We build on previous insights from work on $k$-NN methods whenever possible. The two main difficulties are (a), accounting for the noise condition (the parameter $\beta$) in the $\bcn$ setting (without assuming local choices of $k$ of knowledge of $Q_X$ as in \cite{gadat2014classification}), and (b),  merging this with the fact that $\hPQ$ is defined on two non-identical samples (in particular, merging $\gamma$ into the bound). 

First, a general step in analyses of $k$-NN (and other plug-in classifiers) is the following inequality which relates classification error for $\hPQ = \mathbbm{1}\{\hat \eta_k \geq 1/2\}$ to the regression error $|\hat \eta_k - \eta|$: 
\begin{equation} \label{eq:firstBound}
\exErr(\hPQ) \leq 2 \mathbb{E}_{\tJoinProb} \left|\regFct(\featVar)-\frac{1}{2}\right|  \cdot \mathbbm{1} \left \{\left| \regFct(\featVar)-\frac{1}{2} \right| \leq \left| \empReg_k(\featVar) - \regFct(\featVar) \right| \right \}.
\end{equation}
This is direct from the definition of excess error in equation \eqref{eq:excesserror}: notice that, for any fixed $x$, the event $\hPQ(x) \neq h^*(x)$ implies that $|\hat \eta_k (x) - \eta(x)| \geq |\eta(x) - 1/2|$. 

We first remark that, from \eqref{eq:firstBound}, $\exErr(\hPQ)$ is trivially bounded by $2|\hat \eta_k (x) - \eta(x)|$, which 
unfortunately yields a weak bound in terms of $\alpha$ alone (smoothness). The usual approach in accounting for $\beta$ relies on the following simple insight: suppose a uniform bound 
$\sup_x |\hat \eta_k (x) - \eta(x)| \leq t$ held (at least in high-probability) for some $t = t(k, n_P, n_Q)$, then we would have 
$\exErr(\hPQ) \leq C_\beta t^{(\beta + 1)}$, using the fact that 
$\Expectation Z\cdot \mathbbm{1}\{Z \leq t\} \leq t\cdot \mathbb{P}(Z \leq t)$, and letting $Z \doteq |\eta(X) - 1/2|$. 

Under $\dm$ such uniform bound on regression error are possible, even in our transfer setting, since the problem is similarly hard everywhere on $\mathcal{X}_Q$. Unfortunately, such uniform bound is not possible under $\bcn$ where the difficulty changes over space as both $P_X, Q_X$ vary.   

Our approach therefore is to decompose the regression error into various terms, some of which can be bounded uniformly over 
$x \in \mathcal{X}_Q$. Namely, suppose $|\hat \eta_k (x) - \eta(x)| \leq \sum_{i\in [c]} G_i(x)$, then 
\begin{align} 
\mathbbm{1}\{Z \leq |\hat \eta_k (x) - \eta(x)| \} \leq \sum_{i \in [c]} \mathbbm{1}\{Z \leq c\cdot G_i(x)\}. 
\label{eq:incatortrick}
\end{align}

In other words, if we can bound some such term $G_i$ uniformly over $x$ by some $t_i$, we can proceed as above to bound $\Expectation Z \cdot \mathbbm{1}\{Z \leq c G_i(X)\}$ by  $C_\beta (c t_i)^{(\beta +1)}$, and thus account for $\beta$ in our final bound on the classification error $\exErr(\hPQ)$. We start our decomposition in a standard way as follows.

Fix any $x$ and let $\{X_{(i)}\}_1^k$ denote its $k$ nearest neighbors in $\mathbf{X} \doteq \mathbf{X}_P \cup \mathbf{X}_Q$. By a triangle inequality and the fact that $\eta$ is $(C_\alpha, \alpha)$ H\"older, we have: 
\begin{align} 
\left| \empReg_k(x) - \regFct(x) \right| 
 \leq \frac{1}{\nn}\left| \sum_{i = 1}^{\nn} \nnLab{i}{} - \regFct(\nnFeat{i}{}) \right| + \frac{\holderCoeff}{\nn} \sum_{i = 1}^{\nn} \dist( \nnFeat{i}{} , x )^{\holderExp}. \label{eq:errdecompositon1}
\end{align}

Now, although NN distances $\dist( \nnFeat{i}{} , x )$ over $\mathbf{X}$ are trivially bounded by 
the distance to the $k$-th NN of $x$ in \emph{either} samples $\mathbf{X}_P$ or $\mathbf{X}_Q$, such a bound would be in terms of only $n_P$ or only $n_Q$, and therefore would not properly capture the interaction between $n_P$ and $n_Q$; in particular the effect of the transfer-exponent $\gamma$ can get lost. However, as it turns out, 
the interaction between $n_P$ and $n_Q$ (in terms of $\gamma$) is easiest to capture when bounding $1$-NN rather than $k$-NN distances.

We therefore proceed by first reducing the problem of bounding $k$-NN distances to that of bounding $1$-NN distances,
where we adapt a technique of \citet[Section 6.3]{gyorfi2006distribution} to our transfer setting with two samples: 

\begin{mydefinition}[Implicit $1$-NNs] \label{def:implicit1NN}
Divide $\sample$ into $\nn$ disjoint batches each containing $\left \lfloor \frac{\sN}{\nn} \right \rfloor$ samples from $\sSample$ and $\left \lfloor \frac{\tN}{\nn} \right \rfloor$ samples from $\tSample$. 
Fix $x \in \spa$ and define $\{\tilde{X}_{i}\}_{i =1}^k$ as its $1$-NNs in each of the $\nn$ batches. Let the assignment to each batch consist of picking, without replacement, $\left \lfloor \frac{\sN}{\nn} \right \rfloor$ indices from $[n_P]$ and $\left \lfloor \frac{\tN}{\nn} \right \rfloor$ indices from $[n_Q]$, so that the $\tilde{X}_i$'s are i.i.d. given $x$.
\end{mydefinition}

It can then be shown that, for any fixed $x\in \mathcal{X}$ we have (see Lemma \ref{lem:biasBoundImplicit1NN} of Section \ref{sec:upper-bounds}) 
$\sum_{i = 1}^{\nn} \dist(\nnFeat{i}{} , x) ^{\holderExp} \leq  \sum_{i = 1}^{\nn} \dist( \tilde{X}_{i} , x )^{\holderExp}.$
Combining this last inequality with \eqref{eq:errdecompositon1}, it follows that $|\empReg_k(x)  - \regFct(x) |$ is at most 
\begin{align}
 \underbrace{\frac{1}{\nn}\left| \sum_{i = 1}^{\nn} \nnLab{i}{} - \regFct(\nnFeat{i}{}) \right|}_{G_1(x)}
+ \underbrace{\frac{\holderCoeff}{\nn} \sum_{i = 1}^{\nn} \left(\dist( \tilde{X}_i , x )^{\holderExp} - \Expectation_{\tilde{X}_1} \dist( \tilde{X}_1 , x )^{\holderExp}\right)}_{G_2(x)} + 
\underbrace{\holderCoeff\Expectation_{\tilde{X}_1} \dist( \tilde{X}_1 , x )^{\holderExp}}_{G_3(x)}. \label{eq:decompStep2}
\end{align} 
The decomposition in \eqref{eq:decompStep2} serves to further isolate terms that can be bounded uniformly over $x$, namely $G_1$ and $G_2$. We arrive at the following proposition. 

\begin{myproposition}[Error Decomposition] \label{prop:biasVarianceDecomp}
Let $1\leq \nn \leq \sN \vee \tN$ and let $\hPQ$ be the $\nn$-NN classifier on $\sample$. Consider any $x\in \mathcal{X}$ 
with $k$ nearest neighbors $\{X_{(i)}\}_{1}^k$, and implicit $1$-NN's $\{\tilde X_i\}_1^k$. Let $G_i(x)$, $i\in [3]$ denote the terms in \eqref{eq:decompStep2}, and define 
$\bigElement_{i}(x) \doteq  2 \left|\regFct(x)-{1}/{2}\right| \cdot\mathbbm{1}\left \{ \left| \regFct(x)-{1}/{2} \right| \leq 3G_{i}(x) \right \}$. We have:
\begin{equation}\label{ineq:propBiasVar}
 \quad \mathbb{E}[\exErr(\hPQ)] \leq   \,\, \mathbb{E}[ \bigElement_1(X) ] + \mathbb{E} [ \bigElement_2(X) ] +\mathbb{E} [ \bigElement_3(X) ], 
\end{equation}
where the expectations are taken over $\sample$ and $X$. 
\end{myproposition}
\begin{proof} 
Apply \eqref{eq:incatortrick} (with $Z \doteq |\eta -1/2|$) to the decomposition of \eqref{eq:decompStep2}, and 
conclude using \eqref{eq:firstBound}.  
\end{proof} 

The first two terms in \eqref{ineq:propBiasVar} are of order $(1/\sqrt{k})^{(\beta +1)}$ as shown via a concentration and chaining argument (see Lemma \ref{lem:boundingVar}, Appendix \ref{app:upperBound}). 
The term $\mathbb{E} [ \bigElement_3(X) ]$ accounts for $\gamma$ (see Lemmas \ref{lem:boundPhi3DM} and \ref{lem:boundPhi3BCN}, resp. for $\dm$ and $\bcn$). For intuition, remark that the $1$-NN tail is readily bounded in terms of $\gamma$: 
\begin{align*}
\mathbb{P}(\dist( \tilde{X}_1 , x ) > t ) =&(1-P_X(B(x, t)))^{\lfloor\frac{n_p}{k}\rfloor} (1-Q_X(B(x, t)))^{\lfloor\frac{n_Q}{k}\rfloor} \\
&\leq (1- Q_X(B(x, t))C_\gamma t^\gamma)^{\lfloor\frac{n_p}{k}\rfloor} (1-Q_X(B(x, t))^{\lfloor\frac{n_Q}{k}\rfloor}.
\end{align*}
Careful tail-integration reveals further dependence on $\beta$ along with the above dependence on $\gamma$. Finally, Theorem 
\ref{thm:expErrRates} is obtained by optimizing over $k$. Much of the details are given in Section \ref{sec:upper-bounds}, while some of the proofs are given in the appendix. 
}





\newcommand{\ratesOne}{d_{1}}

\section{Lower Bound Analysis} \label{sec:lowerboundAnalysis} 

Theorem \ref{thm:minimaxLowerBound} is a consequence of Propositions \ref{prop:lbDmFinite}, \ref{prop:lbBcnFinite} and \ref{prop:lbInfinite}. The more involved construction being that of Proposition \ref{prop:lbDmFinite}, we provide it fully here, 
while the other 2 propositions are covered in the appendix, and follow similar arguments but relatively simpler constructions. 

As stated earlier, the main technicality in showing Theorem \ref{thm:minimaxLowerBound} is in dealing with classifiers learned on non-identical samples.
Various basic tools are used in the literature, which often build on Fano's inequality, Assouad, or LeCam's approach \citep{yu1997assouad}. 
In particular, Theorem 2.5 of \citet{tsybakov2009introduction} will best suit our needs. In what follows, let $\KLDiv{\cdot}{\cdot}$ denote Kullback-Leibler (KL) divergence. 

\begin{myproposition} [Thm 2.5 of \citet{tsybakov2009introduction}] \label{prop:tsy25} Let $\{ \sampleDist_{\h} \}_{\h \in \modelClass}$ be a family of distributions indexed over a subset $\modelClass$ of a semi-metric $( \mathcal{F}, \semiMetric)$. Suppose $\exists \, \h_0, \ldots, \h_{\M} \in \modelClass$, for $\M \geq 2$, such that:
\begin{flalign*} 
\qquad {\rm (i)} \quad  &\semiDist{\h_{i}}{\h_{j}} \geq 2 \s > 0, \quad \forall 0 \leq i < j \leq \M,  & \\
\qquad {\rm (ii)} \quad  & \sampleDist_{\h_i} \ll \sampleDist_{\h_0} \quad \forall i \in  [\M], \text{ and the average  KL-divergence to } \sampleDist_{\h_0} \text{ satisfies } & \\
& \qquad 
\frac{1}{\M} \sum_{i = 1}^{\M} \KLDiv{\sampleDist_{\h_i}}{ \sampleDist_{\h_0}} \leq \alp \log \M, \text{ where } 0 < \alp < 1/8.
\end{flalign*}
Let $Z\sim\sampleDist_{\h}$, and let $\hat \h : Z \mapsto \mathcal{F}$ denote any \emph{improper} learner of $h\in \modelClass$. We have: 
\begin{equation*}
\sup_{\h \in \modelClass} \sampleDist_{\h} \left( \semiDist{\hat \h(Z)}{\h} \geq \s \right) \geq \frac{\sqrt{\M}}{1 + \sqrt{M}} \left( 1 - 2 \alp - \sqrt{\frac{2 \alp}{\log(\M)}} \right) \geq \frac{3 - 2 \sqrt{2}}{8}.
\end{equation*}
\end{myproposition}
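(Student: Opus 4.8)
The plan is the standard two-step recipe for minimax lower bounds: first reduce estimation of $\h$ to an $(\M+1)$-ary hypothesis test, and then lower-bound the testing error via a sharp Fano-type inequality that invokes hypothesis (ii). For the reduction, given any improper learner $\hat\h$, define a test $\psi(Z)\in\{0,\dots,\M\}$ by letting $\psi(Z)$ be any index minimizing $\semiDist{\hat\h(Z)}{\h_j}$. If $\psi(Z)=i\neq j$ then $\semiDist{\hat\h(Z)}{\h_i}\leq\semiDist{\hat\h(Z)}{\h_j}$, so by (i) and the triangle inequality $2\s\leq\semiDist{\h_i}{\h_j}\leq 2\,\semiDist{\hat\h(Z)}{\h_j}$; hence $\{\psi(Z)\neq j\}\subseteq\{\semiDist{\hat\h(Z)}{\h_j}\geq\s\}$. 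Writing $A_j=\{\psi=j\}$ and $p=\frac{1}{\M}\sum_{j=1}^{\M}\sampleDist_{\h_j}(A_j)$, this already yields $\sup_{\h\in\modelClass}\sampleDist_{\h}\!\big(\semiDist{\hat\h(Z)}{\h}\geq\s\big)\geq\max_{0\leq j\leq\M}\sampleDist_{\h_j}(\psi\neq j)\geq\max\{\sampleDist_{\h_0}(\psi\neq0),\,1-p\}$, so it remains to bound this last quantity from below for an arbitrary test.

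For the testing bound I would use a likelihood-ratio truncation. Fix $\tau\in(0,1)$, set $\Lambda_j=d\sampleDist_{\h_j}/d\sampleDist_{\h_0}$, and split each $A_j=(A_j\cap\{\Lambda_j<\tau\M\})\cup(A_j\cap\{\Lambda_j\geq\tau\M\})$. The first piece contributes at most $\tau\M\cdot\sampleDist_{\h_0}(A_j)$; since $\sum_{j=1}^{\M}\sampleDist_{\h_0}(A_j)=1-\sampleDist_{\h_0}(A_0)=:s_0\leq 1$, averaging over $j$ bounds its total by $\tau s_0$. For the ``leaked'' piece, Markov's inequality applied to $\log\Lambda_j$ gives $\sampleDist_{\h_j}(\Lambda_j\geq\tau\M)\leq\Expectation_{\sampleDist_{\h_j}}[(\log\Lambda_j)_+]/\log(\tau\M)$, and $\Expectation_{\sampleDist_{\h_j}}[(\log\Lambda_j)_+]=\KLDiv{\sampleDist_{\h_j}}{\sampleDist_{\h_0}}+\Expectation_{\sampleDist_{\h_j}}[(\log\Lambda_j)_-]$, where the negative part is at most a constant multiple of $\sqrt{\KLDiv{\sampleDist_{\h_j}}{\sampleDist_{\h_0}}}$ (via $\log x\leq x-1$ and a second change of measure). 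Averaging over $j$, Jensen's inequality together with (ii) then bounds the average leak by $(\alp\log\M+\sqrt{(\alp\log\M)/2})/\log(\tau\M)$. Taking $\tau=1/\sqrt{\M}$ makes $\log(\tau\M)=\frac{1}{2}\log\M$, turning this into exactly $2\alp+\sqrt{2\alp/\log\M}$; hence $p\leq\tau s_0+2\alp+\sqrt{2\alp/\log\M}$.

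To finish, combine the two lower bounds from the reduction: $\max_{0\leq j\leq\M}\sampleDist_{\h_j}(\psi\neq j)\geq\max\{s_0,\,1-\tau s_0-(2\alp+\sqrt{2\alp/\log\M})\}$, and minimize the right-hand side over $s_0\in[0,1]$; the minimum occurs where the two arguments coincide, giving the lower bound $\frac{1-(2\alp+\sqrt{2\alp/\log\M})}{1+\tau}=\frac{\sqrt{\M}}{1+\sqrt{\M}}\big(1-2\alp-\sqrt{2\alp/\log\M}\big)$. The closing numerical estimate $\frac{3-2\sqrt{2}}{8}$ then follows because this expression is increasing in $\M$ for $\M\geq2$ and decreasing in $\alp$, so one evaluates at $\M=2$ and uses $\alp<1/8$.

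I expect the only delicate point to be the leak estimate in the middle step — bounding $\Expectation_{\sampleDist_{\h_j}}[(\log\Lambda_j)_-]$ by $O\!\big(\sqrt{\KLDiv{\sampleDist_{\h_j}}{\sampleDist_{\h_0}}}\big)$ rather than by an $O(1)$ constant. This is precisely what upgrades a crude classical-Fano bound of the form $1-\alp-O(1/\log\M)$ into the stated sharp form, with the $\frac{\sqrt{\M}}{1+\sqrt{\M}}$ prefactor emerging from the $\max\{s_0,\,1-\tau s_0-\cdots\}$ trade-off and the $\sqrt{2\alp/\log\M}$ remainder from the choice $\tau=1/\sqrt{\M}$; the reduction and the truncation split itself are otherwise routine.
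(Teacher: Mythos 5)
Your proof is correct, and since the paper imports this result without proof (citing it as Theorem 2.5 of Tsybakov, 2009), the right comparison is with the textbook argument, which yours reproduces exactly: reduction to an $(\M+1)$-ary test via the $2\s$-separation and the triangle inequality, then the likelihood-ratio truncation at level $\tau\M$ with $\tau=1/\sqrt{\M}$, with the leaked piece controlled by Markov applied to $(\log\Lambda_j)_+$ together with the bound $\Expectation_{\sampleDist_{\h_j}}[(\log\Lambda_j)_-]\le\sqrt{\KLDiv{\sampleDist_{\h_j}}{\sampleDist_{\h_0}}/2}$ obtained from $\log x\le x-1$ and Pinsker. All constants check out, including $\log(\tau\M)=\frac{1}{2}\log\M$ yielding $2\alp+\sqrt{2\alp/\log\M}$, the equalization over $s_0$ producing the $\sqrt{\M}/(1+\sqrt{\M})$ prefactor, and the closing numerical evaluation at $\M=2$, $\alp=1/8$.
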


For our purpose, the $h$ indices would stand for Bayes classifiers over possible regression functions $\eta$ satisfying $\dm$ or $\bcn$. Let $(P, Q)$ denote a transfer tuple with corresponding Bayes classifier $h$; for fixed $n_P, n_Q$, we let 
$\Pi_h = P^{n_P} \times Q^{n_Q}$, thus coupling $P$ and $Q$ into a single distribution. Now, while the KL-divergences over the family involve both $P$ and $Q$, we are free to define $\bar \rho$ over $Q_X$ alone, and thus relate it to the target excess error $\exErr$. Now what's left is to ensure that the various conditions of $\dm$ or $\bcn$ are satisfied. 

For conditions involving only $Q$ (smoothness, noise, and dimension), we follow closely the original lower-bound construction of \cite{audibert2007fast}, apart for some technical details in our choice of smooth basis functions ($\eta$ is chosen as a linear combination of simple basis functions crossing $1/2$). Now, to ensure that any given transfer-exponent $\gamma$ holds, we divide up the mass of $P_X$ appropriately over space, following the type of intuitions laid out in Examples 
\ref{ex:disjointSupports}, \ref{ex:boundedDensity}, \ref{ex:unboundedDensity}, \ref{ex:diffDim} of Section \ref{sec:transferexponent}. The rest involves adjusting the construction properly so that $h$'s are sufficiently far in $\bar \rho = \bar \rho(Q_X)$, while 
$\Pi_h$'s (involving both $P, Q$) remain sufficiently close in KL-divergence. 

We note that the marginals $P_X, Q_X$ remain fixed for our choice family $\{\Pi_h\}$, and thus might be known to the learner (which is allowed to know the family, but not the data's distribution).


Finally, the choice of the $\M+1$ elements $\h_{i}$ in Proposition \ref{prop:tsy25} should allow $\M$ as large as possible while maintaining the \emph{packing} {\rm(i)} and \emph{covering} {\rm(ii)} conditions of Proposition \ref{prop:tsy25}. The following lemma often comes in handy in achieving {\rm(ii)}, by reduction from a larger family of $2^m$ distributions. 

\begin{lemma} [Varshamov-Gilbert bound (see e.g. \cite{tsybakov2009introduction})] \label{lem:VGBound}
Let $\m \geq 8$. Then there exists a subset $\{ \sig_0, \ldots, \sig_{\M}\}$ of $\{-1 ,1 \}^{\m}$ such that $\sig_0 = (1,\ldots,1)$,
\begin{equation*}
\hammingDist{\sig_{i}}{\sig_{j}} \geq \frac{\m}{8}, \quad \forall\,  0 \leq i < j \leq \M, \quad \text{and} \quad \M \geq 2^{\m / 8},
\end{equation*}
where $\hammingDist{\sig}{\sig'} \doteq \text{card}(\{ i \in [\m] :  \sig(i) \neq \sig'(i) \})$ is the Hamming distance.
\end{lemma}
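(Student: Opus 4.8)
The plan is to prove Lemma~\ref{lem:VGBound} by a greedy sphere-packing argument in the Hamming cube, combined with the classical entropy bound on the volume of a Hamming ball. First I would fix $\sig_0 = (0,\ldots,0)$ and then construct the remaining points one at a time: having selected $\sig_0,\ldots,\sig_j$ with pairwise Hamming distance at least $\m/8$, I add a new vertex $\sig_{j+1}\in\{-1,1\}^\m$ as long as one exists with $\hammingDist{\sig_{j+1}}{\sig_i}\ge \m/8$ for all $i\le j$, and terminate with $\M=j$ when no such vertex remains. The crucial observation is the \emph{stopping condition}: once the construction halts at $\{\sig_0,\ldots,\sig_\M\}$, every vertex of $\{-1,1\}^\m$ lies at Hamming distance strictly below $\m/8$ from some $\sig_i$, so the $\M+1$ Hamming balls of radius $\lceil \m/8\rceil-1$ centered at the $\sig_i$'s cover the whole cube. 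Since the cardinality $V$ of a Hamming ball is independent of its center, this yields $(\M+1)\,V\ge 2^\m$ with $V=\sum_{i=0}^{\lceil \m/8\rceil-1}\binom{\m}{i}$.

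It then remains to bound $V$ from above well enough to conclude $\M\ge 2^{\m/8}$. For this I would invoke the standard estimate $\sum_{i=0}^{\lfloor\lambda\m\rfloor}\binom{\m}{i}\le 2^{\m H(\lambda)}$, valid for $\lambda\le 1/2$, where $H(\lambda)=-\lambda\log_2\lambda-(1-\lambda)\log_2(1-\lambda)$; this follows in one line from $1=\sum_{i}\binom{\m}{i}\lambda^i(1-\lambda)^{\m-i}\ge \lambda^{\lfloor\lambda\m\rfloor}(1-\lambda)^{\m-\lfloor\lambda\m\rfloor}\sum_{i\le\lambda\m}\binom{\m}{i}$, using $\lambda/(1-\lambda)\le 1$. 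Applying it with $\lambda=1/8$ (and $\lceil\m/8\rceil-1\le\lfloor\m/8\rfloor$, together with monotonicity of $H$ on $[0,1/2]$) gives $V\le 2^{\m H(1/8)}$, hence $\M+1\ge 2^{\m(1-H(1/8))}$. Since $H(1/8)=\frac38+\frac78\log_2\frac87\approx 0.544<\frac78$, we have $1-H(1/8)>1/8$, and one checks that $2^{\m(1-H(1/8))}\ge 2^{\m/8}+1$ for all $\m\ge 8$ (the exponent $1-H(1/8)\approx 0.456$ is already comfortably above $1/8$ at $\m=8$, and the gap only widens). Therefore $\M\ge 2^\m/V-1\ge 2^{\m/8}$, completing the proof.

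I do not anticipate a genuine obstacle: the only points requiring care are the floor/ceiling bookkeeping in the ball radius $\lceil\m/8\rceil-1$ and verifying that the slack $H(1/8)<7/8$ survives the additive $1$ at the base case $\m=8$, both routine. As an alternative to the greedy construction, one can argue probabilistically: draw $\sig_1,\ldots,\sig_\M$ i.i.d.\ uniform on the cube, bound $\prob\bigl(\hammingDist{\sig_i}{\sig_j}<\m/8\bigr)\le 2^{-\m(1-H(1/8))}$ by Hoeffding's inequality, union-bound over the $\binom{\M}{2}$ pairs, and use translation-invariance of the Hamming distance to relocate one of the surviving points to $\sig_0=(0,\ldots,0)$; this gives the same conclusion with a slightly smaller, but still sufficient, exponent.
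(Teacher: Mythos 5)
Your proof is correct. Note, though, that the paper itself offers no proof of this lemma: it is the classical Varshamov--Gilbert bound, stated as a known result (it is Lemma 2.9 of \cite{tsybakov2009introduction}), so there is no in-paper argument to compare against. Your greedy sphere-packing construction plus the entropy bound $\sum_{i\le \lambda \m}\binom{\m}{i}\le 2^{\m H(\lambda)}$ is one of the two standard routes; the other (used in Tsybakov's text) replaces the entropy estimate with a Hoeffding tail bound for $\mathrm{Bin}(\m,1/2)$, which you correctly identify as an equivalent alternative. The bookkeeping checks out: the maximality of the greedy set gives the covering by balls of radius $\lceil \m/8\rceil-1\le\lfloor \m/8\rfloor$, and since $1-H(1/8)\approx 0.456>1/8$ with enough slack to absorb the additive $1$ at $\m=8$, the conclusion $\M\ge 2^{\m/8}$ follows. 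The only cosmetic mismatch is that the lemma's $\sig_0=(0,\ldots,0)$ does not lie in $\{-1,1\}^{\m}$ (a quirk of the paper's statement, reflecting its use of $\sig_0$ to index the reference distribution with $\regFct_{\sig_0}\equiv 1/2$); since $\sig_0$ is then at Hamming distance $\m\ge \m/8$ from every element of the cube, the conditions involving $\sig_0$ are automatic and your packing of the cube itself is all that is needed.
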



We present the lower bounds for the family $\dmFam$, for $\transMarginExp < \infty$, while lower-bounds for other 
regimes are treated in a similar fashion in the appendix. 

\subsection{Lower Bound for $\family = \dmFam$ when $\transMarginExp < \infty$}

\begin{figure}
\hspace{-1.9cm}
\begin{subfigure}{.53\textwidth}
\centering 
\includegraphics[width=0.52\linewidth]{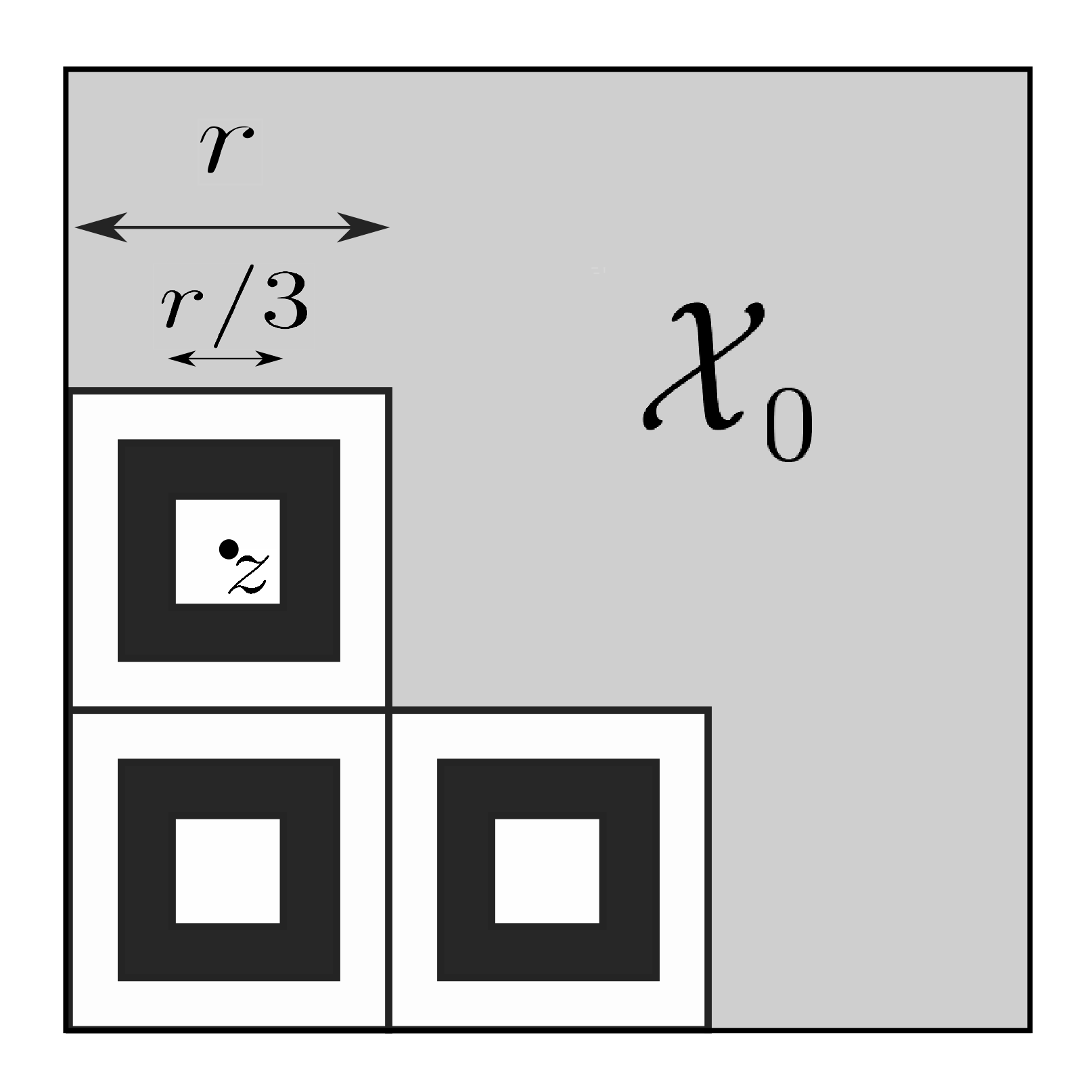}
\caption{}
\label{fig:support}
\end{subfigure} 
\hspace{-3cm}
\begin{subfigure}{.53\textwidth}
\centering 
\includegraphics[width=0.52\linewidth]{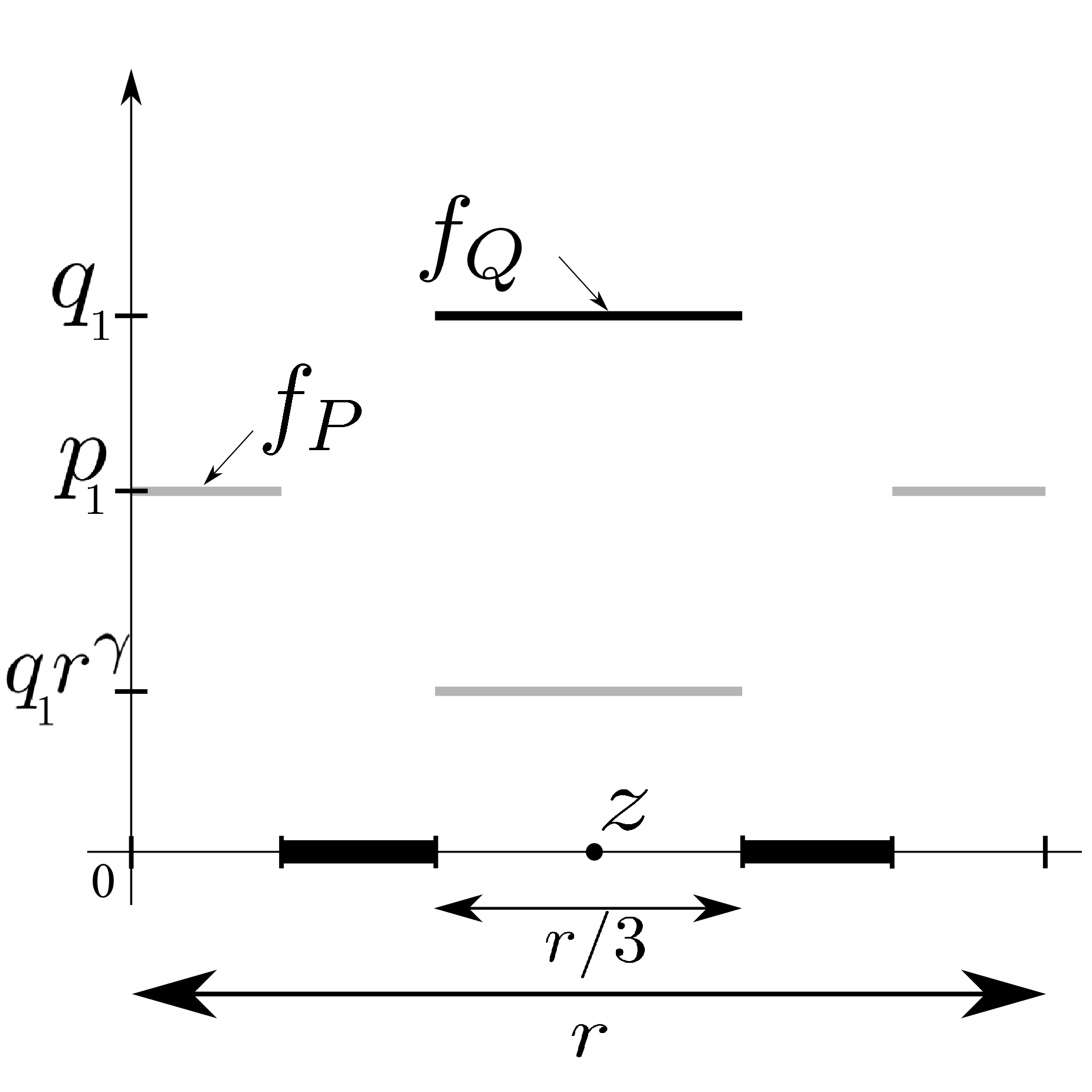}
\caption{}
\label{fig:margProfile}
\end{subfigure}
\hspace{-3cm}
\begin{subfigure}{.53\textwidth}
\centering 
\includegraphics[width=0.52\linewidth]{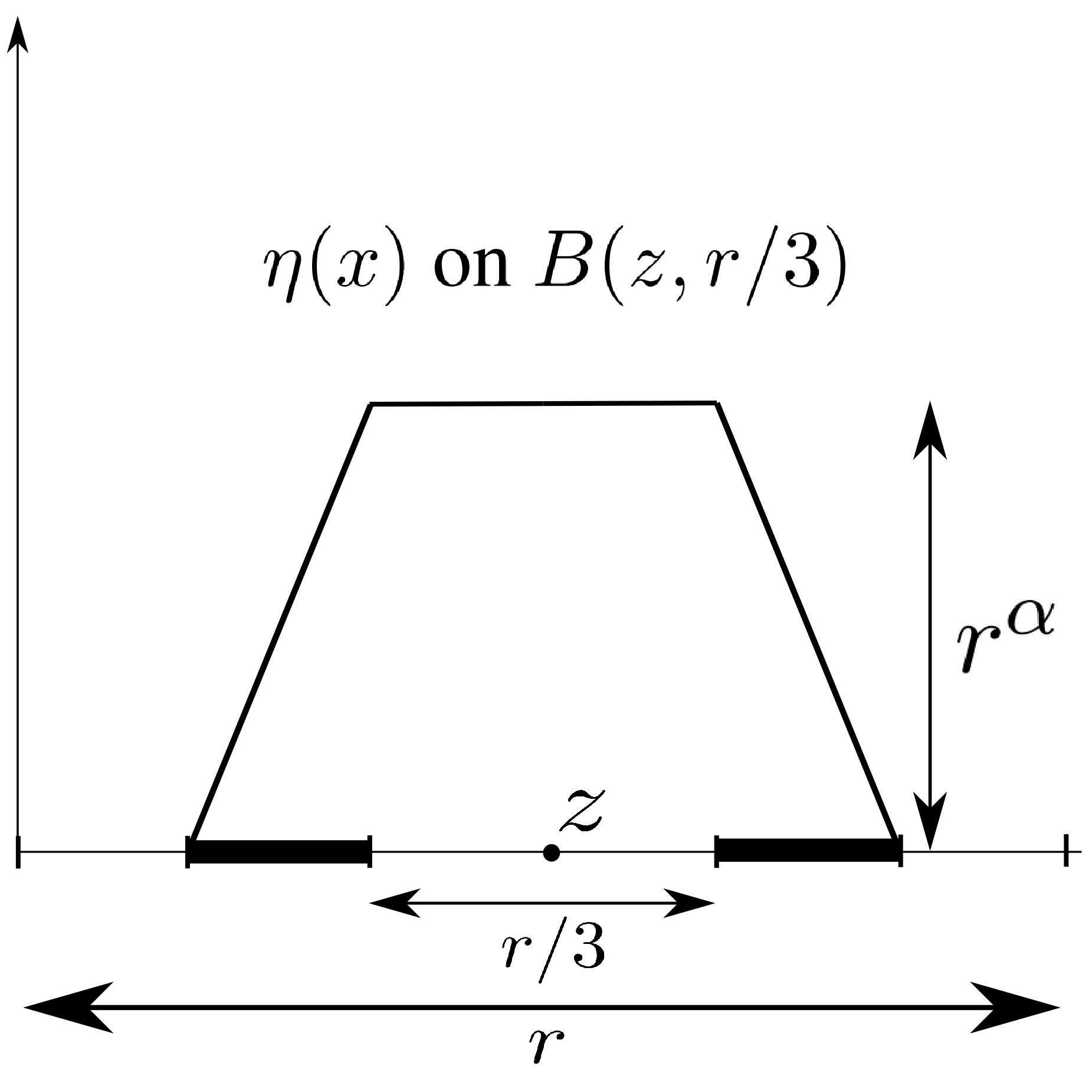}
\caption{}
\label{fig:regProfile}
\end{subfigure}
\caption{{\ref{fig:support}} illustrates the supports of $\sProb$ and $\tProb$ on the hypercubes arising from the subdivision of the space {\color{black} $\spa'$} into $\spa_0$ (which serves to account for missing mass) and {\color{black} $\spa'\setminus \spa_0$} where we make $\eta$ vary (across hypercubes) so as to make classification difficult, subject to the various distributional conditions.  
{\ref{fig:margProfile}} shows the profiles of densities $f_P, f_Q$ of $\sProb$ and $\tProb$ on a hypercube in {\color{black} $\spa'\setminus \spa_0$}, having some center $z$; we note that the construction here is a simple one that allows a transfer-exponent $\gamma$ at resolution $r$, while simplifying the analysis; however other constructions such as ones described in Figure \ref{fig:gamma} also work, but add technicality with no additional insight. {\ref{fig:regProfile}} displays the profile of the regression function (in the Lipschitz case) on the above-mentioned hypercubes. Notice that $\eta$ maintains a margin $r^\alpha$ on $B(z, r/6)$, i.e., on the support of $Q_X$ inside the hypercube $B(z, r/2)$.}
\label{fig:lbGraphs}
\end{figure}

\begin{myproposition} \label{prop:lbDmFinite}
Let $(\spa,\dist) = ([0,1]^{\tDmDim},\| . \|_{\infty})$, for some $\tDmDim \in \mathbb{N}^{*}$, and assume that $\holderExp \tTsyExp< \tDmDim$ and $\transMarginExp < \infty$, for any admissible value of the parameters 
$C_\beta, C_\gamma, C_\alpha, C_d$. There exists a constant $\lowConst = \lowConst(\dmFam)$ such that, for any classifier $\hat h$ learned on $\sample$ and with knowledge of $P_X, Q_X$, we have: 
\begin{equation*}
\sup_{(\sJoinProb, \tJoinProb) \in \dmFam} \mathbb{E}_{\sample}[\exErr(\hat h)] \geq \lowConst \left( \sN ^{\rates / (\rates + \transMarginExp / \holderExp)} + \tN \right)^{ -(\tTsyExp + 1) / \rates}, \text{ where } \rates = 2  + \tCovDim / \holderExp.
\end{equation*}
For $\alpha \beta = d$, for any such $\beta$, there exists $C_\beta>0$ such that the same bound holds.
\end{myproposition}

\begin{proof}
First, {\color{black} fix $\domBnd \in (0,1]$ and} define the following variables
\begin{equation*} 
\radius = \rConst {\color{black} \domBnd} \left( \sN^{\rates / (\rates + \transMarginExp / \holderExp)} + \tN \right)^{-1/(\holderExp \rates)} , \, \m = \left \lfloor \mConst \left(  {\color{black} \frac{\radius}{\domBnd}}\right)^{\holderExp \tTsyExp - \tDmDim} \right \rfloor , \, \w = \wConst {\color{black} \left( \frac{\radius}{\domBnd}\right)^{ \tDmDim}}, 
\end{equation*} 
in terms of constants $\rConst = 1/9$, $\color{black} \mConst = 8 \times 9^{\holderExp \tTsyExp - \tDmDim}$ {\color{black} and, when $\alpha\beta <d$,  $\wConst =\domBnd^{\alpha\beta} \min(\mConst^{-1}\tTsyCoeff(\holderCoeff'/2)^{\tTsyExp}, 2^{-4} \log(2) \rConst^{-\holderExp \rates}\holderCoeff'^{-2}, 1/2)$, where $\holderCoeff' \doteq \holderCoeff 6^{-\holderExp} \wedge 1/2$. When $\alpha\beta = d$ we set instead $\wConst = \min(2^{-4} \log(2) \rConst^{-\holderExp \rates}\holderCoeff'^{-2}, 1/2)$.}


Note that we have chosen $\rConst$ and $\mConst$ so that $8 \leq \m < \left \lfloor {\color{black} \domBnd}\radius^{-1} \right \rfloor^{\tDmDim}$. 
This also implies that we have $\m \w < 1$. The constant $\domBnd$ will serve as a \emph{knob} to achieve any desired $C_\beta, C_\gamma$ and $C_d$ whenever {\color{black} $\alpha\beta <d$; in the case $\alpha\beta = d$,} {\color{black} it will be used to achieve any desired $C_\gamma$ and $C_d$.}

\paragraph{Marginal $\tProb$} 

Consider a regular subdivision of ${\color{black}\spa'  = [0,\domBnd]^{\tDmDim} \subseteq \spa}$ into $\lfloor {\color{black} \domBnd} \radius^{-1} \rfloor^{\tDmDim}$ smaller hypercubes of side length $\radius$ (see Figure \ref{fig:support}). 

Call $\grid$ the set of centers of {\color{black} the $\lfloor  \domBnd \radius^{-1} \rfloor^{\tDmDim}$ hypercubes of radius $\radius$}. Now divide $\grid$ into disjoint subsets $\grid_0$ and $\grid_1$ such that $|\grid_1| = \m$ (Figure \ref{fig:support} shows those hypercubes centered in $\grid_1$). Set $\tProb$ to have a uniform density $\lbda_{1}$ with respect to Lebesgue on each set $\cBall{\p}{\radius/6}$ for $\p \in \grid_1$, whereby $\color{black}Q_X(\cBall{\p}{\radius/6}) = \w$ (see Figure \ref{fig:margProfile}). Finally, put the remaining mass $1 - \m\w$ of $\tProb$ uniformly with density $\lbda_{0}$ over $\spa_{0} \doteq \cup_{\p \in \grid_{0}} \cBall{\p}{\radius/2}$. The rest of the space has zero mass under $\tProb$. We can then lower-bound $q_0, q_1$ as follows:

\begin{equation} \label{eq:firstCondOnCm}
\lbda_{0} = \frac{1 - \m \w}{\text{vol} \left( \bigcup_{\p \in \grid_0} \cBall{\p}{\radius/2} \right)} \geq {\color{black} \frac{1 - \wConst }{\domBnd^{\tDmDim}} }, \,
\lbda_{1} = \frac{\w}{\text{vol} \left( \cBall{\p}{\radius/6} \right)} \geq {\color{black}\wConst \domBnd^{- \tDmDim}}.
\end{equation}

{\color{black} We can let $D$ sufficiently small, i.e., $q_0$ and $q_1$ sufficiently large to achieve any desired $C_d$, independently of $n_P, n_Q$.}


\paragraph{Marginal $\sProb$} 

Now let's turn to the construction of $\sProb$. The idea is to let $\sProb$ be uniformly distributed on each of the sets $\cBall{\p}{\radius/6}$ for $\p \in \grid_{1}$, and so that its density is getting smaller w.r.t.~$\tProb$'s density as $\radius$ goes to zero (when $\transMarginExp > 0$). More precisely, let $\PDensity_{1} = \lbda_{1} {\color{black}(\radius \domBnd^{-1})}^{\transMarginExp}$ be the density of $\sProb$ on $\cBall{\p}{\radius/6}$ for any $\p$ in $\grid_{1}$. Because of the factor ${\color{black}(\radius \domBnd^{-1})}^{\transMarginExp} \leq 1$, we have that $\sProb(\cBall{\p}{\radius/6}) = \tProb(\cBall{\p}{\radius/6}) {\color{black}(\radius \domBnd^{-1})}^{\transMarginExp} \leq \tProb(\cBall{\p}{\radius/6})$. We therefore put the remaining mass of $\sProb$ (if any) uniformly on each set $\cBall{\p}{\radius/2} \backslash \cBall{\p}{\radius/3}$ such that $\sProb(\cBall{\p}{\radius/2}) = \tProb(\cBall{\p}{\radius/2}), \,\, \forall \p \in \grid_{1}$ (see Figure \ref{fig:margProfile}). We let $\sProb$ have a uniform density $\PDensity_{0}$, equal to that of $\tProb$ (that is $\PDensity_{0} = \lbda_{0}$), on the remaining hypercubes $\cBall{\p}{\radius/2}$ for $\p \in \grid_{0}$. Hence we have also $\sProb(\cBall{\p}{\radius/2}) = \tProb(\cBall{\p}{\radius/2}), \,\, \forall \p \in \grid_{0}$.

Recall that the support of $\tProb$ is the union of the sets $\cBall{\p}{\radius/6}$ for all $\p \in \grid_{1}$ and $\cBall{\p}{\radius/2}$ for $\p \in \grid_{0}$, so we need only check \eqref{eq:ass1equation} for points $x$ in these sets. Fix $\p \in \grid_{1}$, we have $\forall x \in \cBall{\p}{\radius/6}, \forall \tempRadius \in [0, \radius /3]$ that $\sProb(\cBall{x}{\tempRadius})$ is at least 
\begin{equation}\label{eq:verif1trans}
 \PDensity_{1} \text{vol}(\cBall{x}{\tempRadius} \cap \cBall{\p}{\radius/6}) = {\color{black}\domBnd^{-\transMarginExp}} \radius^{\transMarginExp} \tProb(\cBall{x}{\tempRadius}) \geq {\color{black}\domBnd^{-\transMarginExp}}\tempRadius^{\transMarginExp} \tProb(\cBall{x}{\tempRadius}).
\end{equation}
For $\p \in \grid_{0}$, and $\forall x \in \cBall{\p}{\radius/2}, \forall \tempRadius \in [0, \radius /3]$, the inequality is even more direct:
\begin{align}\label{eq:verif2trans}
\sProb(\cBall{x}{\tempRadius}) &\geq \sProb(\cBall{x}{\tempRadius} \cap (\cup_{\p \in \grid_{0}} \cBall{\p}{\radius/2})) \\
&= \tProb(\cBall{x}{\tempRadius}) \geq {\color{black}\domBnd^{-\transMarginExp}} \tempRadius^{\transMarginExp} \tProb(\cBall{x}{\tempRadius}). \nonumber
\end{align}

Therefore we can see that for small values of $\tempRadius \leq \radius / 3$, equation \eqref{eq:ass1equation} from Definition \ref{def:transferCoefficient} holds. Furthermore, since we set $\sProb(\ball_{\p}) = \tProb(\ball_{\p}), \forall \p \in \grid$, equation \eqref{eq:ass1equation} holds also for larger $\tempRadius$ {\color{black} for any pre-specified $\transMarginCoeff$ by taking $\domBnd$ small enough if necessary, as we did before for $\tDmCoeff$. Note that if $\transMarginExp = 0$ then $\tProb = \sProb$ and therefore \eqref{eq:ass1equation} holds directly with $\transMarginCoeff = 1$.}

\paragraph{Conditional Distributions} 

Let $\holderFct: \mathbb{R}_{+} \rightarrow \mathbb{R}_{+}$ such that:
\begin{equation*}
\holderFct(x) = \left\{ 
				\begin{array}{ll}
						1 \quad & \text{if} \,\, x \leq 1/6, \\
						1-6(x-1/6) \quad & \text{if} \,\, x \in (1/6,1/3], \\
						0 \quad & \text{elsewhere.}
				\end{array}
							\right.
\end{equation*}

It is easy to see that $\holderFct$ is $6$--Lipschitz. {\color{black} Recall} $\holderCoeff' \doteq \min(\holderCoeff 6^{-\holderExp}, 1/2)$ (the fact that we take $\holderCoeff' \leq 1/2$ will be useful later in our proof). This implies that $\holderCoeff' \holderFct^{\holderExp}(\| . \|_{\infty})$ is $(\holderCoeff, \holderExp)$--H\"{o}lder, as by concavity we have $\forall 0 \leq x \leq y, \,\, y^{\holderExp} - x^{\holderExp} \leq (y - x)^{\holderExp}$. Therefore, the following functions are $(\holderCoeff, \holderExp)$--H\"{o}lder:
\begin{equation*}
\forall \p \in \grid_1, \quad \regFct_{\p}(x) \doteq \holderCoeff' \radius^{\holderExp} \holderFct^{\holderExp}( \| x - \p \|_{\infty} / \radius).
\end{equation*}

The profile of these functions on each hypercube $\cBall{\p}{\radius/2}$ for $\p \in \grid_{1}$ is represented in Figure \ref{fig:regProfile}. Now consider the vectors $\sig \in \left\{ -1, 1 \right\}^{\m}$ that assign values $-1$ or $1$ to each of the $\m$ centers $\p$ from the set $\grid_1$. And let the following $2^{\m}$ $(\holderCoeff,\holderExp)$--H\"{o}lder regression functions, indexed by $\sig$:
\begin{equation*}
\regFct_{\sig}(x) = \left\{ 
				\begin{array}{ll}
					(1 + \sig(\p) \regFct_{\p}(x)) / 2 \quad & \text{if } x \in \cBall{\p}{\radius/2}, \quad \p \in \grid_1, \\
					1/2 \quad & \text{elsewhere.}
				\end{array}
									\right.
\end{equation*}
where $\sig(\p) \in \{-1,1\}$ is the value that $\sig$ assigns to $\p$. Note that each of these functions will take constant values $(1 \pm \holderCoeff' \radius^{\holderExp})/2$ over the balls $\cBall{\p}{\radius/6}$ of centers $\p \in \grid_1$ and be equal to $1/2$ everywhere else. 
We therefore define the following $2^{\m}$ distribution tuples $(\sJoinProb^{\sig}, \tJoinProb^{\sig})$, indexed by $\sig$: 
$\forall \sig  \in \left\{ -1, 1 \right\}^{\m}, \quad$
\begin{equation*}
\sJoinProb_{\featVar}^{\sig} \doteq \sProb, \,\, \tJoinProb_{\featVar}^{\sig} \doteq \tProb, \,\, \sJoinProb^{\sig}(\labVar = 1 | \featVar) = \tJoinProb^{\sig}(\labVar = 1 | \featVar) \doteq \regFct_{\sig}(\featVar) .
\end{equation*}

We then define the corresponding \emph{sample} distributions 
$\sampleDist_{\sig} \doteq {\sJoinProb^{\sig}}^{\otimes \sN} \otimes {\tJoinProb^{\sig}}^{\otimes \tN}.$

\subsubsection*{Tsybakov Noise Assumption}

Now we verify that the Tsybakov low-noise assumption (Definition \ref{def:noise}) is satisfied. We have 
\begin{align*}
&\text{ For } t < \holderCoeff' \radius^{\holderExp}/2, \quad \tProb(0<|\regFct(X) - 1/2|\leq t) = 0, \\
&\text{ and for } t \geq \holderCoeff' \radius^{\holderExp}/2, \quad 
\tProb(0<|\regFct(X) - 1/2|\leq t) = \m \w.
\end{align*}

{\color{black} When $\alpha\beta <d$, as we set $\wConst \leq \domBnd^{\alpha\beta} \mConst^{-1}\tTsyCoeff(\holderCoeff'/2)^{\tTsyExp}$, we have that}
\begin{align} \label{eq:tsyMarginCond}
\m \w \leq \mConst \wConst \left(\frac{\radius}{\domBnd}\right)^{\holderExp \tTsyExp} \leq \tTsyCoeff \left( \frac{\holderCoeff' \radius^{\holderExp}}{2} \right)^{\tTsyExp}.
\end{align}

{\color{black} Furthermore, when $\alpha\beta = d$, inequality \eqref{eq:tsyMarginCond} is valid only for some constant $\tTsyCoeff > 0$ independent of $\sN$ and $\tN$.}

\paragraph{Condition (i) of Proposition \ref{prop:tsy25}}

First we have to define our semi-metric $\semiDist{\cdot}{\cdot}$. Note that, given a target measure $\tJoinProb^{\sig}, \sig \in \{ -1,1\}^{\m}$, for any classifier $\h$, the excess error $\mathcal{E}_{\tJoinProb^{\sig}}(\h)$ 
equals 
\begin{equation} \label{eq:distExcessEquiv}
 \holderCoeff' \radius^{\holderExp} \tProb \left( \{ \h(X) \neq \hStar_{\sig}(X) \} \cap \bigcup_{\p \in \grid_1} \cBall{\p}{\radius/6}\right),
\end{equation}
where $\hStar_{\sig}$ is the Bayes classifier corresponding to $\regFct_{\sig}$. Hence, following the notations of Proposition \ref{prop:tsy25}, let $\mathcal{F}$ be the space of all classifiers, that is of all measurable functions from $\spa$ to $\{0,1\}$. We can define the following semi-metric on $\mathcal{F}$:
\begin{equation*}
\forall \h, \h' \in \mathcal{F}, \quad \semiDist{\h}{\h'} \doteq \holderCoeff' \radius^{\holderExp} \tProb \left( \{ \h(X) \neq \h'(X) \} \cap \bigcup_{\p \in \grid_1} \cBall{\p}{\radius/6}\right).
\end{equation*}

Note that we have:
$
\forall \sig, \sig' \in \{ -1 , 1 \}^{\m}, \quad\semiDist{\hStar_{\sig}}{\hStar_{\sig'}} = \holderCoeff' \radius^{\holderExp} \w \hammingDist{\sig}{\sig'},
$
where $\hammingDist{\sig}{\sig'} \doteq \text{card}(\{ \p \in \grid_{1} :  \sig(\p) \neq \sig'(\p) \})$ is the Hamming distance. 


\paragraph{Family of distributions} Towards applying Proposition \ref{prop:tsy25}, let $\{ \sigma_i\}_{i = 0}^M$, $M \geq 2^{m/8}$, denote the packing of the cube elicited by Lemma \ref{lem:VGBound}. 
For $0\leq i \leq M$, write $(\sJoinProb^{i}, \tJoinProb^{i}) \doteq (\sJoinProb^{\sig_i}, \tJoinProb^{\sig_i})$, and let $h^*_{\sigma_i}$ denote the corresponding Bayes classifier, uniquely defined (so we can equivalently index the family of distributions over $i$, $\sigma_i$, or over $h^*_{\sigma_i}$). Next, define the corresponding (full sample) distribution as 
\begin{equation*}
\sampleDist_{i} \doteq {\sJoinProb^{i}}^{\otimes \sN} \otimes {\tJoinProb^{i}}^{\otimes \tN}.
\end{equation*}

\paragraph{Condition (i) of Proposition \ref{prop:tsy25}}
For this family of distributions, condition (i) of Proposition \ref{prop:tsy25} is satisfied as follows for some $\dmLbConst>0$ independent of $\sN$ and $\tN$: $\forall 0 \leq i < j \leq \M$,
\begin{equation}\label{eq:defineS}
\semiDist{h^*_{\sigma_i}}{h^*_{\sigma_j}} \geq \holderCoeff' \frac{\w \m \radius^{\holderExp}}{8 } \doteq 2 \s \geq 2 \dmLbConst \left( \sN ^{\rates / (\rates + \transMarginExp / \holderExp)} + \tN \right)^{ - (\tTsyExp + 1) /\rates}.
\end{equation}

\paragraph{Condition (ii) of Proposition \ref{prop:tsy25}}

By independence, we have for $i \in \{1,\ldots, \M \}$:
\begin{equation*}
\KLDiv{\sampleDist_{i}}{\sampleDist_{0}} = \sN \KLDiv{\sJoinProb^{i}}{\sJoinProb^{0}} + \tN \KLDiv{\tJoinProb^{i}}{\tJoinProb^{0}}.
\end{equation*}
Note that since $\holderCoeff' \leq 1/2$, every regression functions $\regFct_{\sig_{i}}$ is in $[1/4, 3/4]$. As a consequence $\forall i, \,\, \sJoinProb^{i} \ll \sJoinProb^{0}$ and $\tJoinProb^{i} \ll \tJoinProb^{0}$ as all these distributions have the same marginals $\sProb$ and $\tProb$ respectively. Hence, we get:
\begin{align*}
\KLDiv{\sJoinProb^{i}}{\sJoinProb^{0}}   
&=   \int \left(\log \left( \frac{\regFct_{i}(x)}{\regFct_{0}(x)} \right) \regFct_{i}(x) + \log \left( \frac{1 - \regFct_{i}(x)}{1 - \regFct_{0}(x)} \right) (1 - \regFct_{i}(x))\right) \diff \sProb^{i}(x) \\
&=   \sum_{\p: \, \sig_{i}(\p) \neq \sig_{0}(\p)} \sProb(\cBall{\p}{\radius/6})  \left[\log \left( \frac{1 + \holderCoeff' \radius^{\holderExp}}{1 - \holderCoeff' \radius^{\holderExp}} \right) \frac{1 + \holderCoeff'\radius^{\holderExp}}{2} \right. \\
&\qquad \qquad \qquad  \quad \quad + \left. \log \left( \frac{1 - \holderCoeff' \radius^{\holderExp}}{1 + \holderCoeff' \radius^{\holderExp}} \right) \frac{1 - \holderCoeff'\radius^{\holderExp}}{2} \Large \right ] \\
&=   \hammingDist{\sig_{i}}{\sig_{0}} \w \log \left( \frac{1 + \holderCoeff' \radius^{\holderExp }}{1 - \holderCoeff' \radius^{\holderExp}} \right)\holderCoeff' \radius^{\holderExp + \transMarginExp} {\color{black}\domBnd^{-\transMarginExp}} \\
&\leq  \m\w \holderCoeff'^2\radius^{2\holderExp + \transMarginExp}/(1 - \holderCoeff' \radius^{\holderExp})
{\color{black}\domBnd^{-\transMarginExp}} \leq 2 \m \w  \holderCoeff'^2\radius^{2\holderExp + \transMarginExp}{\color{black}\domBnd^{-\transMarginExp}},
\end{align*} 
as $\holderCoeff' \leq 1/2$ and $\radius \leq 1$. On the other hand, following the same steps we get:
\begin{equation*}
\KLDiv{\tJoinProb^{i}}{\tJoinProb^{0}} \leq 2 \m \w  \holderCoeff'^2 \radius^{2\holderExp}.
\end{equation*}

The two bounds thus differ by a factor of ${\color{black}(\radius/\domBnd)}^{\transMarginExp}$. Now, ${\color{black}(\radius/\domBnd)}^{\transMarginExp}$ equals 
\begin{align*}
\rConst^{\transMarginExp} (\sN ^{\rates / (\rates + \transMarginExp / \holderExp)} + \tN )^{-\transMarginExp/(\holderExp \rates)} \leq  (\sN ^{\rates / (\rates + \transMarginExp / \holderExp)})^{-\transMarginExp/(\holderExp \rates)} = \sN^{-\transMarginExp/( \holderExp \rates + \transMarginExp)}.
\end{align*}

Therefore we get:
\begin{align} \label{eq:iiCond}
\KLDiv{\sampleDist_{i}}{\sampleDist_{0}} &\leq 2 \m \w  \holderCoeff'^2 \radius^{2\holderExp} (\tN +   \sN^{1-\transMarginExp/(\holderExp \rates + \transMarginExp)}) \nonumber \\
&\leq 2 \m \wConst \holderCoeff'^2 {\color{black}(\radius / \domBnd)}^{\holderExp \rates} (\tN + \sN^{\rates / (\rates + \transMarginExp / \holderExp)}) \nonumber \\
&\leq 2  \rConst^{\holderExp \rates} \wConst \holderCoeff'^{2} \m  
\leq 2^{4}  \log(2)^{-1}   \rConst^{\holderExp \rates} \wConst \holderCoeff'^{2} \log(\M).
\end{align}

{\color{black} With our choice of constant $\wConst$, the constant in front of $\log(\M)$ is below $1/8$, and hence the last condition of Proposition \ref{prop:tsy25} is verified.}

\paragraph{Choosing the constant $\domBnd$} {\color{black} Based on the inequalities \eqref{eq:firstCondOnCm}, \eqref{eq:verif1trans} and \eqref{eq:verif2trans} we can find $\domBnd$ small enough such that Definitions \ref{def:transferCoefficient} and \ref{def:doublingMeas} will be verified for any pre-specified constants $\tDmCoeff$ and $\transMarginCoeff$. As mentioned earlier such a choice of $\domBnd$ will be independent of $\sN$ and $\tN$, and hence will depend only on the parameters of $\dmFam$.}

\paragraph{Concluding}

We have thus verified that the conditions of Proposition \ref{prop:tsy25} are all verified for the family $\{\Pi_i\}_{i = 0}^M$, included in $\dmFam$. We can now conclude from Proposition \ref{prop:tsy25} that, for any classifier $\hHat$ built upon $\sample$, we have:
\begin{equation*}
\sup_{(\sJoinProb, \tJoinProb) \in \dmFam} \mathbb{P}_{\sample}\left( \exErr(\hHat) \geq \s \right) \geq \sup_{\sig \in \{-1,1\}^{\m}} \sampleDist_{\sig} \left( \mathcal{E}_{\tJoinProb^{\sig}}(\h) \geq \s \right) \geq \frac{3 - 2 \sqrt{2}}{8},
\end{equation*}
where $\s = \dmLbConst \left( \sN ^{\rates / (\rates + \transMarginExp / \holderExp)} + \tN \right)^{ - (\tTsyExp + 1) /\rates}$. By Markov's inequality, we therefore get the lower bound in expectation of the proposition's statement.
\end{proof}

%

\section{Upper-bound Analysis}
\label{sec:upper-bounds}

We build on previous insights from work on $k$-NN methods whenever possible. Two new technicalities are (a), accounting for the noise condition (the parameter $\beta$) in the $\bcn$ setting (without assuming local choices of $k$ or knowledge of $Q_X$ as in \cite{gadat2014classification}), and (b),  merging this with the fact that $\hPQ$ is defined on two non-identical samples (and accounting for $\gamma$). 

First, a general step in analyses of $k$-NN (and plug-in classifiers in general) is the following inequality which relates classification error for $\hPQ = \mathbbm{1}\{\hat \eta_k \geq 1/2\}$ to the regression error $|\hat \eta_k - \eta|$: 
\begin{equation} \label{eq:firstBound}
\exErr(\hPQ) \leq 2 \mathbb{E}_{\tJoinProb} \left|\regFct(\featVar)-\frac{1}{2}\right|  \cdot \mathbbm{1} \left \{\left| \regFct(\featVar)-\frac{1}{2} \right| \leq \left| \empReg_k(\featVar) - \regFct(\featVar) \right| \right \}.
\end{equation}
This is direct from the definition of excess error in equation \eqref{eq:excesserror}: notice that, for any fixed $x$, the event $\hPQ(x) \neq h^*(x)$ implies that $|\hat \eta_k (x) - \eta(x)| \geq |\eta(x) - 1/2|$. 

The usual approach in accounting for $\beta$ relies on the following simple insight: suppose a uniform bound 
$\sup_x |\hat \eta_k (x) - \eta(x)| \leq t$ held (at least in high-probability) for some $t = t(k, n_P, n_Q)$, then \eqref{eq:firstBound} implies  
$\exErr(\hPQ) \leq C_\beta t^{(\beta + 1)}$, using the fact that 
$\Expectation Z\cdot \mathbbm{1}\{Z \leq t\} \leq t\cdot \mathbb{P}(Z \leq t)$, and letting $Z \doteq |\eta(X) - 1/2|$. 

Under $\dm$ such uniform bound on regression error are possible, even in our transfer setting, since the problem is similarly hard everywhere on $\mathcal{X}_Q$. Unfortunately, this is not the case under $\bcn$ where regression difficulty can change over space as both $P_X, Q_X$ vary.   
Our approach therefore is to decompose the regression error into various terms, some of which can be bounded uniformly over 
$x \in \mathcal{X}_Q$. Namely, suppose $|\hat \eta_k (x) - \eta(x)| \leq \sum_{i\in [c]} G_i(x)$, then 
\begin{align} 
\mathbbm{1}\{Z \leq |\hat \eta_k (x) - \eta(x)| \} \leq \sum_{i \in [c]} \mathbbm{1}\{Z \leq c\cdot G_i(x)\}. 
\label{eq:incatortrick}
\end{align}

In other words, if we can bound some such term $G_i$ uniformly over $x$ by some $t_i$, we can proceed as above to bound $\Expectation Z \cdot \mathbbm{1}\{Z \leq c \cdot G_i(X)\}$ by  $C_\beta (c \cdot t_i)^{(\beta +1)}$, and thus account for $\beta$ in our final bound on the classification error $\exErr(\hPQ)$. We start our decomposition in a standard way as follows.

Fix any $x$ and let $\{X_{(i)}\}_1^k$ denote its $k$ nearest neighbors in $\mathbf{X} \doteq \mathbf{X}_P \cup \mathbf{X}_Q$. By a triangle inequality and the fact that $\eta$ is $(C_\alpha, \alpha)$ H\"older, we have: 
\begin{align} 
\left| \empReg_k(x) - \regFct(x) \right| 
 \leq \frac{1}{\nn}\left| \sum_{i = 1}^{\nn} \nnLab{i}{} - \regFct(\nnFeat{i}{}) \right| + \frac{\holderCoeff}{\nn} \sum_{i = 1}^{\nn} \dist( \nnFeat{i}{} , x )^{\holderExp}. \label{eq:errdecompositon1}
\end{align}

Now, although NN distances $\dist( \nnFeat{i}{} , x )$ over $\mathbf{X}$ can be bounded by 
the distance to the $k$-th NN of $x$ in \emph{either} samples $\mathbf{X}_P$ or $\mathbf{X}_Q$, this fails to capture the 
interaction between the two samples, as captured by $\gamma$. 
As it turns out, 
such interaction is captured by directly bounding $1$-NN (rather than $k$-NN) distances over $\mathbf{X}$ .

We therefore proceed by first reducing the problem of bounding $k$-NN distances to that of bounding $1$-NN distances,
where we adapt a technique of \citet[Section 6.3]{gyorfi2006distribution} to our transfer setting with two samples: 

\begin{mydefinition}[Implicit $1$-NNs] \label{def:implicit1NN}
Divide $\sample$ into $\nn$ disjoint batches each containing $\left \lfloor \frac{\sN}{\nn} \right \rfloor$ samples from $\sSample$ and $\left \lfloor \frac{\tN}{\nn} \right \rfloor$ samples from $\tSample$. 
Fix $x \in \spa$ and define $\{\tilde{X}_{i}\}_{i =1}^k$ as its $1$-NNs in each of the $\nn$ batches. Let the assignment to each batch consist of picking, without replacement, $\left \lfloor \frac{\sN}{\nn} \right \rfloor$ indices from $[n_P]$ and $\left \lfloor \frac{\tN}{\nn} \right \rfloor$ indices from $[n_Q]$, so that the $\tilde{X}_i$'s are i.i.d. given $x$.
\end{mydefinition}

It can then be shown that, for any fixed $x\in \mathcal{X}$ we have (see Lemma \ref{lem:biasBoundImplicit1NN} of Section \ref{sec:upper-bounds}) 
$\sum_{i = 1}^{\nn} \dist(\nnFeat{i}{} , x) ^{\holderExp} \leq  \sum_{i = 1}^{\nn} \dist( \tilde{X}_{i} , x )^{\holderExp}.$
Combining this last inequality with \eqref{eq:errdecompositon1}, it follows that $|\empReg_k(x)  - \regFct(x) |$ is at most 
\begin{align}
 \underbrace{\frac{1}{\nn}\left| \sum_{i = 1}^{\nn} \nnLab{i}{} - \regFct(\nnFeat{i}{}) \right|}_{G_1(x)}
+ \underbrace{\frac{\holderCoeff}{\nn} \sum_{i = 1}^{\nn} \left(\dist( \tilde{X}_i , x )^{\holderExp} - \Expectation_{\tilde{X}_1} \dist( \tilde{X}_1 , x )^{\holderExp}\right)}_{G_2(x)} + 
\underbrace{\holderCoeff\Expectation_{\tilde{X}_1} \dist( \tilde{X}_1 , x )^{\holderExp}}_{G_3(x)}. \label{eq:decompStep2}
\end{align} 
The decomposition in \eqref{eq:decompStep2} serves to further isolate terms that can be bounded uniformly over $x$, namely $G_1$ and $G_2$. We arrive at the following proposition. 

\begin{myproposition}[Error Decomposition] \label{prop:biasVarianceDecomp}
Let $1\leq \nn \leq \sN \vee \tN$ and let $\hPQ$ be the $\nn$-NN classifier on $\sample$. Consider any $x\in \mathcal{X}$ 
with $k$ nearest neighbors $\{X_{(i)}\}_{1}^k$, and implicit $1$-NN's $\{\tilde X_i\}_1^k$. Let $G_i(x)$, $i\in [3]$ denote the terms in \eqref{eq:decompStep2}, and define 
$\bigElement_{i}(x) \doteq  2 \left|\regFct(x)-{1}/{2}\right| \cdot\mathbbm{1}\left \{ \left| \regFct(x)-{1}/{2} \right| \leq 3G_{i}(x) \right \}$. We have:
\begin{equation}\label{ineq:propBiasVar}
 \quad \mathbb{E}[\exErr(\hPQ)] \leq   \,\, \mathbb{E}[ \bigElement_1(X) ] + \mathbb{E} [ \bigElement_2(X) ] +\mathbb{E} [ \bigElement_3(X) ], 
\end{equation}
where the expectations are taken over $\sample$ and $X$. 
\end{myproposition}
\begin{proof} 
Apply \eqref{eq:incatortrick} (with $Z \doteq |\eta -1/2|$) to the decomposition of \eqref{eq:decompStep2}, and 
conclude using \eqref{eq:firstBound}.  
\end{proof} 



\subsection{Proof of Theorem \ref{thm:expErrRates}}
The main arguments are given here inline, and require bias and variance bounds we establish in subsequent sections. 


Under both \dm~and \bcn, the terms $\mathbb{E}[ \bigElement_1(X) ] + \mathbb{E} [ \bigElement_2(X) ]$ in \eqref{ineq:propBiasVar} are of order 
$(1/\sqrt{k})^{(\beta +1)}$
as shown via concentration and an step-wise integration argument in Lemma \ref{lem:boundingVar}. The term $\mathbb{E} [ \bigElement_3(X) ]$ is bounded using Lemma \ref{lem:boundPhi3DM} (of Appendix \ref{app:upperBound}) for \dm, and Lemma \ref{lem:boundPhi3BCN} for \bcn. 
This last term accounts for $\gamma$. 

In both cases \dm, \bcn, $\mathbb{E}_{\sample}[\exErr(\hPQ)]$ is then bounded by 
\begin{align*}
\upConst_{1} \left( \frac{1}{\sqrt{\nn}} \right)^{\tTsyExp + 1} + \upConst_{2} \left( \left \lfloor \frac{\sN}{\nn} \right \rfloor^{(\rates - 2)/((\rates - 2) + \transMarginExp / \holderExp)} + \left \lfloor \frac{\tN}{\nn} \right \rfloor\right)^{-(\tTsyExp + 1)/(\rates - 2)}, 
\end{align*}
where, under \dm, $\rates = 2 + \tDmDim / \holderExp$, and under \bcn, 
$\rates = 2 + \tTsyExp + \tCovDim / \holderExp$. 

The upperbounds of Theorem \ref{thm:expErrRates} are then deduced by plugging in the value of $k = \Theta ( \sN ^{\rates / (\rates + \transMarginExp/ \holderExp)} + \tN )^{2 /\rates}$ (where $\rates$ is defined as in Theorem \ref{thm:expErrRates}).
The fact that the given setting of $k$ indeed yields the rates of Theorem \ref{thm:expErrRates} involves a bit of algebra
handled in Lemma \ref{lem:boundImpliesRates} of Appendix \ref{app:upperBound}. The rates for \bcn\, being of independent interest for vanilla $k$-NN, we provide all essential arguments in this section; similar (but more standard) arguments for \dm\, are instead given in Appendix \ref{app:upperBound}.

\subsection{Supporting Lemmas} 
\label{app:techLem}

The next two lemmas are proved in Appendix \ref{app:upperBound}.

\begin{lemma} [A useful inequality]\label{lem:basicIneq} 
Let $\alpha_1, \alpha_2, \beta_1, \beta_2 > 0$ and $a,b \geq 0$ such that $a+b>0$ and $\alpha_1 \beta_1 \leq 1$. Assume $\alpha_2 - \alpha_1 = \frac{1}{\beta2} - \frac{1}{\beta1}$. Then, defining $c = \max(a^{\beta_1}, b^{\beta_2})^{-1}$ we have:
\begin{equation*}
\left(a c^{\alpha_1} + b c^{\alpha_2}\right)^{-1} \leq 2\left(a^{1-\alpha_1 \beta_1} + b^{1 - \alpha_2 \beta_2}\right)^{-1}.
\end{equation*}



\end{lemma}

\begin{lemma}[Relating $k$-NN's to implicit 1-NNs] \label{lem:biasBoundImplicit1NN}
Fix $x \in \spa$. Let $\{\nnFeat{i}{}\}_{i =1}^{\nn}$ its $\nn$ nearest-neighbors as in Definition \ref{def:kNNClass} and $\{ \tilde{X}_{i} \}_{i =1}^{\nn}$ its $\nn$ implicit 1-NNs from Definition \ref{def:implicit1NN}. We have the following inequality:
\begin{equation*}
\sum_{i = 1}^{\nn} \dist(\nnFeat{i}{} , x) ^{\holderExp} \leq  \sum_{i = 1}^{\nn} \dist( \tilde{X}_{i} , x )^{\holderExp}.
\end{equation*}
\end{lemma}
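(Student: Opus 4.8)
The inequality here is purely combinatorial: it holds for an arbitrary fixed configuration of the $\sN+\tN$ points of $\featVectBase$, and neither the two-sample structure nor the i.i.d.\ property built into Definition \ref{def:implicit1NN} plays any role --- all that matters is that the $\nn$ batches are pairwise disjoint. The plan is to extend the single-sample argument of \citet[Section~6.3]{gyorfi2006distribution}: the sum of any nondecreasing function of the $\nn$ nearest-neighbor distances is bounded by the same sum taken over \emph{any} collection of $\nn$ distinct sample points, and the $\nn$ implicit $1$-NNs form such a collection.

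First I would record that $\tilde X_1,\dots,\tilde X_\nn$ are $\nn$ \emph{distinct} elements of $\featVectBase$: by construction $\tilde X_i$ is the $1$-NN of $x$ inside the $i$-th batch, and since the batches are disjoint we have $\tilde X_i\neq\tilde X_j$ for $i\neq j$ (each batch is nonempty because $\lfloor \sN/\nn\rfloor+\lfloor \tN/\nn\rfloor\geq 1$ under the standing constraint $\nn\le \sN\vee\tN$). Next, relabel the values $\dist(\tilde X_i,x)$, $i\in[\nn]$, in nondecreasing order as $\tilde d_{(1)}\le\cdots\le\tilde d_{(\nn)}$. The crux is the pointwise comparison $\dist(\nnFeat{j}{},x)\le\tilde d_{(j)}$ for every $j\in[\nn]$: the numbers $\tilde d_{(1)},\dots,\tilde d_{(j)}$ are the distances from $x$ to $j$ distinct points of $\featVectBase$, so at least $j$ points of $\featVectBase$ lie within distance $\tilde d_{(j)}$ of $x$, whence the $j$-th smallest distance from $x$ to a point of $\featVectBase$ --- which equals $\dist(\nnFeat{j}{},x)$ regardless of how ties in the nearest-neighbor ordering were resolved --- is at most $\tilde d_{(j)}$.

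Finally, since $t\mapsto t^{\holderExp}$ is nondecreasing on $\mathbb{R}_+$ (as $\holderExp>0$), raising the $\nn$ inequalities to the power $\holderExp$ and summing yields $\sum_{j=1}^{\nn}\dist(\nnFeat{j}{},x)^{\holderExp}\le\sum_{j=1}^{\nn}\tilde d_{(j)}^{\holderExp}=\sum_{i=1}^{\nn}\dist(\tilde X_i,x)^{\holderExp}$, which is the claim. There is no genuine analytic obstacle in this lemma; the only points demanding care are verifying the disjointness (hence distinctness) of the implicit $1$-NNs and the bookkeeping around tie-breaking in the definition of the $\nn$ nearest neighbors, both handled above.
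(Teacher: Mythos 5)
Your proof is correct and follows essentially the same route as the paper's: both reduce the claim to the pointwise comparison that the $j$-th smallest of the distances $\dist(\tilde X_i,x)$ dominates $\dist(\nnFeat{j}{},x)$ because the implicit $1$-NNs form a size-$\nn$ subset of the full sample, then sum the $\holderExp$-powers. Your added care about distinctness of the $\tilde X_i$ and tie-breaking is a minor refinement of the same argument.
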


\subsection{Bounding $\mathbb{E}[\bigElement_{1}(\featVar)]$ and $\mathbb{E}[\bigElement_{2}(\featVar)]$}

The following is a generalization of an integral approximation argument of \cite[Lemma 3.1]{audibert2007fast} adapted to our setting. In particular, in their result, the counterpart for the function $G_{k}$ is the \emph{regression error of a generic estimator}; 
here we extend their techniques to any $G_{k}$ depending on $\nn$. 

\begin{lemma} [A generic integration argument] \label{lem:chaining}
Consider a distribution $\tJoinProb$ with noise parameters $\tTsyExp, \tTsyCoeff > 0$ (see Definition \ref{def:doublingMeas}). Let $\{ G_{\nn}(\sample;\featVar)\}_{\nn = 1}^{\sN \vee \tN}$ a set of measurable functions of $\sample$ and $\featVar$ indexed by $\nn$, where $\featVar \sim \tProb$ independent of $\sample$. Suppose that there exist $\upConst, \lowConst > 0$, such that:
\begin{equation*}
\forall x \in \spa, \forall \nn \geq 1, \forall \epsilon > 0, \quad \mathbb{P}_{\sample}(G_{\nn}(\sample, x) \geq \epsilon) \leq \upConst \exp(-\lowConst \nn \epsilon^{2}).
\end{equation*}
Then the below expectation (taken w.r.t.~both $\sample$ and $\featVar$) is bounded as follows:
\begin{equation} \label{eq:chaining}
\mathbb{E}\left[\left| \regFct(X) - \frac{1}{2} \right| \cdot \mathbbm{1} \left\{  \left| \regFct(X) - \frac{1}{2} \right| \leq G_{\nn}(\sample,\featVar)\right\} \right] \leq 3 \upConst \cdot\tTsyCoeff \left( \frac{\tTsyExp + 1}{\lowConst \nn} \right)^{(\tTsyExp +1)/2}.
\end{equation}
\end{lemma}
The proof of the above lemma is given in Appendix \ref{app:upperBound}. 

\begin{lemma} [Bounding $\mathbb{E}(\bigElement_{1}(\featVar))$ and $\mathbb{E}(\bigElement_{2}(\featVar))$] \label{lem:boundingVar}
Consider $\bigElement_{1}$ and $\bigElement_{2}$ as defined in Proposition \ref{prop:biasVarianceDecomp}. Under both \dm~and \bcn~distributional regimes, there exists a constant $\upConst > 0$ such that:
\begin{equation*}
\mathbb{E}[\bigElement_{1}(\featVar)] +\mathbb{E}[\bigElement_{2}(\featVar)] \leq \upConst \left( \frac{1}{\sqrt{\nn}} \right)^{\tTsyExp + 1}.
\end{equation*}
\end{lemma}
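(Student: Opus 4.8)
The plan is to treat the two terms independently and reduce each to a single application of the generic chaining bound of Lemma~\ref{lem:chaining}; the only genuine work is verifying a sub-Gaussian concentration estimate, uniform in $x$, for $3G_1(\sample,\cdot)$ and $3G_2(\sample,\cdot)$, where (recall \eqref{eq:decompStep2}) $G_1(x)=\frac1\nn\bigl|\sum_{i=1}^\nn \nnLab{i}{}-\regFct(\nnFeat{i}{})\bigr|$ and $G_2(x)=\frac{\holderCoeff}{\nn}\sum_{i=1}^\nn\bigl(\dist(\tilde X_i,x)^\holderExp-\Expectation_{\tilde X_1}\dist(\tilde X_1,x)^\holderExp\bigr)$. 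Since $\featVar\sim\tProb$ is independent of $\sample$ and $\bigElement_i(x)=2|\regFct(x)-1/2|\,\mathbbm{1}\{|\regFct(x)-1/2|\le 3G_i(x)\}$, the quantity $\Expectation[\bigElement_i(\featVar)]$ is precisely twice the left-hand side of \eqref{eq:chaining} with the role of $G_\nn$ there played by $3G_i$.

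For $G_1$, I would condition on the feature sample $\featVectBase=\sFeatVect\cup\tFeatVect$. Under covariate-shift (Definition~\ref{def:covShiftSet}), every label in $\sample$ --- whether drawn from $\sJoinProb$ or from $\tJoinProb$ --- has conditional distribution $\mathrm{Bernoulli}(\regFct(\cdot))$; hence, given $\featVectBase$, the neighbor identities $\{\nnFeat{1}{},\dots,\nnFeat{\nn}{}\}$ are fixed (for any fixed tie-breaking rule) and the summands $\nnLab{i}{}-\regFct(\nnFeat{i}{})$ are independent, mean zero, and supported in an interval of length $1$. Hoeffding's inequality then gives $\prob_\sample\bigl(3G_1(\sample,x)\ge\epsilon\mid\featVectBase\bigr)\le 2\exp(-2\nn\epsilon^2/9)$ for every $x$ and $\epsilon>0$, and averaging over $\featVectBase$ leaves the bound unchanged; so $3G_1$ satisfies the hypothesis of Lemma~\ref{lem:chaining} with $\upConst=2$, $\lowConst=2/9$. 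For $G_2$, the implicit $1$-NNs $\tilde X_1,\dots,\tilde X_\nn$ of Definition~\ref{def:implicit1NN} are i.i.d.\ given $x$ (each batch is an independent copy of the same fixed-size mini-sample), so $\dist(\tilde X_1,x)^\holderExp,\dots,\dist(\tilde X_\nn,x)^\holderExp$ are i.i.d.\ and take values in $[0,\diamDom^{\holderExp}]$. Applying the one-sided Hoeffding inequality to their centered sum and rescaling by $3\holderCoeff$ yields $\prob_\sample\bigl(3G_2(\sample,x)\ge\epsilon\bigr)\le\exp\!\bigl(-2\nn\epsilon^2/(9\holderCoeff^2\diamDom^{2\holderExp})\bigr)$ for every $x$ and $\epsilon>0$; thus $3G_2$ satisfies the hypothesis of Lemma~\ref{lem:chaining} with $\upConst=1$, $\lowConst=2/(9\holderCoeff^2\diamDom^{2\holderExp})$. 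Both pairs of constants depend only on fixed parameters of $\dmFam$ (or $\bcnFam$), not on $\sN,\tN,\nn$.

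Finally, the noise condition on $\tProb$ with parameters $\tTsyExp,\tTsyCoeff$ holds under both regimes, so Lemma~\ref{lem:chaining} applies to $3G_1$ and to $3G_2$; multiplying each conclusion by $2$ gives $\Expectation[\bigElement_i(\featVar)]\le\upConst_i\,(\tTsyExp+1)^{(\tTsyExp+1)/2}\,\nn^{-(\tTsyExp+1)/2}$ for $i=1,2$, with $\upConst_i$ depending only on $\tTsyExp,\tTsyCoeff,\holderCoeff,\holderExp,\diamDom$. Summing and collapsing the constants into a single $\upConst>0$ yields $\Expectation[\bigElement_1(\featVar)]+\Expectation[\bigElement_2(\featVar)]\le\upConst\,(1/\sqrt\nn)^{\tTsyExp+1}$, as claimed. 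I expect the only delicate point to be the conditioning step for $G_1$: one must condition on the features rather than the labels, and observe that covariate-shift is exactly what makes the $\nn$-NN labels conditionally independent $\mathrm{Bernoulli}(\regFct(\cdot))$ irrespective of which sample ($\sJoinProb$ or $\tJoinProb$) each neighbor came from; the $G_2$ bound is essentially free once the i.i.d.\ structure of Definition~\ref{def:implicit1NN} is invoked.
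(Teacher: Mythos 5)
Your proposal is correct and follows essentially the same route as the paper: the paper also conditions on the features (its ``tower property'') to apply Hoeffding to $3G_1$, applies Hoeffding directly to the centered i.i.d.\ implicit $1$-NN distances for $3G_2$, and then invokes Lemma \ref{lem:chaining} for each term, arriving at the same constants up to the immaterial one-sided versus two-sided tail bound for $G_2$. No gaps.
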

\begin{proof}
We start with $\bigElement_{1}$.
Let $A_k(x)= A(\sample, x) \doteq \frac{3}{k} \left| \sum_{i = 1}^{\nn} \nnLab{i}{} - \regFct(\nnFeat{i}{}) \right|$. 
By Hoeffding's inequality, we have that $\forall \nn \geq 1, \forall x \in \spa, \forall \epsilon>0$:
\begin{equation}
\mathbb{P}_{\sample}(A_k(x)  \geq \epsilon ) = \mathbb{E}_{\featVect}\left[ \mathbb{P}_{\labVect|\featVect}\left( \frac{3}{k} \left| \sum_{i = 1}^{\nn} \nnLab{i}{} - \regFct(\nnFeat{i}{}) \right| \geq \epsilon\right) \right] \leq 
2 e^{-\frac{2}{9}\nn \epsilon^{2}}. \label{eq:varhighprob}
\end{equation}  

Now let $B_k(x) = B_k(\sample, x)$ denote the quantity \\
$\frac{3 \holderCoeff}{\nn} \sum_{i = 1}^{\nn} \left( \dist (\tilde{\featVar}_{i} , x )^{\holderExp} - \mathbb{E}_{\tilde{\featVar}_{1}} \left[ \dist( \tilde{\featVar}_{1} ,x )^{\holderExp} \right] \right)$.
 Again, using Hoeffding's inequality we have that $\forall \nn \geq 1, \forall x \in \spa, \forall \epsilon>0$:
\begin{equation}
\mathbb{P}_{\sample}(B_k(x) \geq \epsilon) \leq 2 \exp \left(-\frac{2 \nn \epsilon^{2}}{9 \holderCoeff^{2} \diamDom^{2 \holderExp}} \right). \label{eq:biashighprob}
\end{equation} 

Hence, we conclude by applying Lemma \ref{lem:chaining} twice to bound $\bigElement_1$ and $\bigElement_2$. 
\end{proof}

\subsection{Bounding $\mathbb{E}[\bigElement_{3}(\featVar)]$ under \bcn}
Recall that a first issue under \bcn \, is that $k$-NN distances are not uniformly bounded over $x$. 
However, they can be bounded by decomposition over finite covers of $\mathcal{X}_Q$; such intuition appears in 
previous work on $k$-NN, e.g., \citet{gyorfi2006distribution, kulkarni1995rates}. Here, our added difficulty is in that we consider $k$-NN distances over a combined sample $\mathbf{X}$ from two distributions. Our second issue, is how to (crucially) account for the noise parameter $\beta$. 
We start with a result concerning the \emph{tail} of such distances, whose proof is given in Appendix \ref{app:upperBound}. 

\begin{lemma}[Bounding 1-NN bias] \label{lem:kulkarniAdapt}
Let $(P, Q) \in \mathcal{T}_\bcn$, $\color{black} \epsilon \in (0,\diamDom^{\holderExp} \wedge 1]$ and
\begin{equation*}
A(\epsilon,x) \doteq \int_{\epsilon}^{\diamDom^{\holderExp}} \left(1 - \sProb(\cBall{x}{t^{1/\holderExp}})\right)^{\left \lfloor \frac{\sN}{\nn} \right \rfloor} \left(1 - \tProb(\cBall{x}{t^{1/\holderExp}}) \right)^{\left \lfloor \frac{\tN}{\nn} \right \rfloor} \diff t.
\end{equation*}
Then, there exist two constants $\upConst_{1}, \upConst_{2} > 0$ such that, when $\transMarginExp < \infty$:
\begin{equation*}
\mathbb{E}_{\tJoinProb}[A(\epsilon, \featVar)] \leq \left \{
\begin{matrix}
\upConst_{1} \left( \left \lfloor \frac{\sN}{\nn} \right \rfloor \epsilon^{(\transMarginExp + \tCovDim)/\holderExp - 1} + {\color{black}(\left \lfloor \frac{\tN}{\nn} \right \rfloor \vee 1)} \epsilon^{ \tCovDim/\holderExp - 1} \right)^{-1}, \quad & \text{ for } \holderExp < \tCovDim, \\
\upConst_{1} (\log(1/\epsilon) + \upConst_{2}) \left( \left \lfloor \frac{\sN}{\nn} \right \rfloor \epsilon^{\transMarginExp /\holderExp} + {\color{black} \left \lfloor \frac{\tN}{\nn} \right \rfloor \vee 1 }\right )^{-1}, \quad & \text{ for } \holderExp = \tCovDim,
\end{matrix}
\right.
\end{equation*}
and when $\transMarginExp = \infty$, the bound matches the limit of the above as $\gamma \to \infty$.
\end{lemma}

\begin{lemma} [Bounding $\mathbb{E}(\bigElement_{3}(\featVar))$ under \bcn] \label{lem:boundPhi3BCN}
Consider $\bigElement_{3}$ as defined in Proposition \ref{prop:biasVarianceDecomp}. We work under \bcn~regime. Assume $\transMarginExp < \infty$, there exist two constants $\upConst_{1}, \upConst_{2} > 0$ such that, for $\tCovDim > \holderExp$:
\begin{equation*}
\mathbb{E}[\bigElement_{3}(\featVar)] \leq \upConst_{1} \left( \left \lfloor \frac{\sN}{\nn} \right \rfloor^{\frac{(\rates - 2)}{(\rates - 2) + \transMarginExp / \holderExp}} + \left \lfloor \frac{\tN}{\nn} \right \rfloor\right)^{-(\tTsyExp + 1)/ (\rates - 2)},\, \rates \doteq \tCovDim / \holderExp + \tTsyExp + 2.
\end{equation*}
For $\tCovDim = \holderExp$, replace $\upConst_{1}$ above, by 
$\upConst_{1} \left( \log \left(\left \lfloor \frac{\sN}{\nn} \right \rfloor + \left \lfloor \frac{\tN}{\nn} \right \rfloor \right) + \upConst_{2}\right) $.

The case $\transMarginExp = \infty$ matches the limits (as $\gamma \to \infty$) of the above bounds . 
\end{lemma}

\begin{proof}
We start with a decomposition 
into small and larger nearest neighbor distances, captured by a \emph{tail} parameter $\epsilon>0$. We have that 
$\mathbb{E}_{\tilde{\featVar}_{1}} [\dist(\tilde{\featVar}_{1},x)^{\holderExp}]$ equals 
\begin{align*}
\int_{0}^{\diamDom^{\holderExp}} & \mathbb{P}_{\tilde{\featVar}_{1}} \left( \dist(\tilde{\featVar}_{1},x)^{\holderExp} > t \right) \diff t
= \int_{0}^{\diamDom^{\holderExp}} \mathbb{P}_{\tilde{\featVar}_{1}} \left( \dist(\tilde{\featVar}_{1},x) > t^{1/\holderExp} \right) \diff t\\
& = \int_{0}^{\diamDom^{\holderExp}} \left(1 - \sProb(\cBall{x}{t^{1/\holderExp}})\right)^{\left \lfloor \frac{\sN}{\nn} \right \rfloor} \left(1 - \tProb(\cBall{x}{t^{1/\holderExp}})\right)^{\left \lfloor \frac{\tN}{\nn} \right \rfloor} \diff t \\
& \leq \epsilon + \int_{\epsilon}^{\diamDom^{\holderExp}} \left(1 - \sProb(\cBall{x}{t^{1/\holderExp}})\right)^{\left \lfloor \frac{\sN}{\nn} \right \rfloor} \left(1 - \tProb(\cBall{x}{t^{1/\holderExp}})\right)^{\left \lfloor \frac{\tN}{\nn} \right \rfloor} \diff t \\
& \doteq \epsilon + A(\epsilon, x).
\end{align*}
We can now use the above to bound
$\mathbb{E}[\bigElement_{3}(\featVar)]$ as follows. Let $g(x) \doteq |\eta(x) - 1/2|$. We have that 
$\mathbb{E}[\bigElement_{3}(\featVar)]$ is at most 
\begin{align} \label{eq:boundPhi3Decomp}
 2 \mathbb{E}_{\tJoinProb} &\left[  g(X) \mathbbm{1}\left \{ g(X) \leq 3 \holderCoeff (\epsilon + A(\epsilon,\featVar))\right \} \right] \nonumber\\
& \leq 2 \mathbb{E}_{\tJoinProb} \left[ g(X) \mathbbm{1}\left \{ g(X) \leq 6 \holderCoeff \epsilon \right \} \right] 
 + 2 \mathbb{E}_{\tJoinProb} \left[ g(X) \mathbbm{1}\left \{ g(X) \leq 6 \holderCoeff A(\epsilon,\featVar)\right \} \right] \nonumber \\
& \leq 2 \tTsyCoeff (6\holderCoeff \epsilon)^{\tTsyExp + 1} + 12 \holderCoeff \mathbb{E}_{\tJoinProb}[A(\epsilon, \featVar)],
\end{align}

where we used equation \eqref{eq:ass1equation} from Definition \ref{def:transferCoefficient} in the last inequality. We now use Lemma \ref{lem:kulkarniAdapt} to bound $\mathbb{E}_{\tJoinProb}[A(\epsilon, \featVar)]$ for $\epsilon \leq 1$. Assume $\transMarginExp < \infty$. The case $\transMarginExp = \infty$ follows the same lines (and is in fact more direct). Recall that $\holderExp \leq \tCovDim$ and take
\begin{equation*}\color{black}
c_k \doteq \left( \max \left( \left \lfloor \frac{\sN}{\nn}\right \rfloor^{\holderExp/(\tCovDim + \transMarginExp + \tTsyExp \holderExp)}, \left \lfloor \frac{\tN}{\nn}\right \rfloor^{\holderExp/(\tCovDim + \tTsyExp \holderExp)} \right) \right)^{-1} \leq 1, 
\end{equation*}
and {\color{black}let $\epsilon = (\diamDom^{\holderExp} \wedge 1) \cdot c_k $, then }$\epsilon^{\beta +1}$ is of the desired order of the lemma's statement. Now let $a = \left \lfloor \frac{\sN}{\nn}\right \rfloor$ and $b = \left \lfloor \frac{\tN}{\nn}\right \rfloor$. Set $\alpha_{1} = \frac{\transMarginExp + \tCovDim - \holderExp}{ \holderExp}$, $\alpha_{2} = \frac{ \tCovDim - \holderExp}{ \holderExp}$, $\beta_{1} = \frac{\holderExp}{\tCovDim + \transMarginExp + \tTsyExp \holderExp}$, $\beta_{2} = \frac{\holderExp}{\tCovDim + \tTsyExp \holderExp}$. Notice that $\alpha_{1} \beta_{1} = \frac{\transMarginExp + \tCovDim - \holderExp}{\transMarginExp + \tCovDim + \tTsyExp \holderExp} \leq 1$ and $\alpha_{2} - \alpha_{1} = \frac{\transMarginExp}{\holderExp} = \frac{1}{\beta_{2}} - \frac{1}{\beta_{1}} $. {\color{black} Define $C = ( \diamDom^{\holderExp} \wedge 1)^{-\alpha_1}$}. Therefore we can apply Lemma \ref{lem:basicIneq} to bound $\mathbb{E}_{\tJoinProb}[A(\epsilon, \featVar)]$ as follows: {\color{black}
\begin{align*}  \label{eq:boundPhi3PlugIneq}
 &\left( \left \lfloor \frac{\sN}{\nn} \right \rfloor \epsilon^{\frac{\transMarginExp + \tCovDim}{\holderExp} - 1} 
+ \left \lfloor \frac{\tN}{\nn} \right \rfloor \epsilon^{\frac{ \tCovDim}{\holderExp} - 1}\right)^{-1} \leq C (a c_k^{\alpha_{1}} + b c_k^{\alpha_{2}})^{-1} \nonumber \\
\leq  \quad & 2 C \left( \left \lfloor \frac{\sN}{\nn} \right \rfloor^{\frac{\tTsyExp + 1}{(\rates - 2) + \transMarginExp / \holderExp}} + \left \lfloor \frac{\tN}{\nn} \right \rfloor^{\frac{\tTsyExp + 1}{(\rates - 2) }} \right)^{-1}
= 2 C \left( a^{1-\alpha_{1} \beta_{1}} + b^{1 - \alpha_{2} \beta_{2}}\right)^{-1},
\end{align*}}
which again is of the desired order (relevant inequalities are given in the appendix). 


The case $\tCovDim = \holderExp$ is treated the same way.
\end{proof}

\subsection{Analysis Outline for Adaptive Rates}
This section lays out the main intuition behind the adaptive rates of Theorem \ref{thm:genericadaptivity}, while the full proof is given in Appendix \ref{app:adaRates}. 

First, the classifier returned by Algorithm \ref{alg:adaptiveKNN} is defined as $\hat h \doteq \mathbbm{1}\{\hat \eta\geq 1/2\}$ for a $k$-NN regression estimate $\hat \eta$, where $k = k(x)$ is chosen adaptively \emph{at every $x$}. 
Namely, $\empReg(x)$ is chosen from a confidence interval on $\regFct(x)$  iteratively refined over $\nn$-NN regression estimates $\empReg_{\nn}(x)$ of $\regFct(x)$ for increasing values of $\nn$ in the range $\mathcal{K}$. These intervals are of the form $\hat \eta_k \pm 1/\sqrt{k}$, accounting for variance in the estimates, and are shown to \emph{overlap} -- i.e., they all contain $\eta(x)$ -- as long as variance dominates bias. The stopping condition is such that, whenever these intervals no longer overlap, 
the current value of $k$ is shown to approximately balances bias and variance, and in particular yields a regression bound $\Phi(x; n_P, n_Q, k^*)$ on $|\hat \eta(x) - \eta(x)|$, of similar order -- up to log terms -- as would be obtained with an optimal global choice $k^* = k^*(\gamma, \alpha, \beta, d)$ (a priori unknown). 

Up to this point, the main arguments are standard (see e.g. Chapter 9.9 of \cite{wasserman2006all}), but require specializing various details to our setting with non-identical data distributions. It now remains to show that the above regression rates translate into the right classification rates, especially given the earlier difficulties -- outlined in Section \ref{sec:upper-bounds} -- in accounting for $\beta$ under $\bcn$ where regression rates are not uniform in $x\in {\cal X}_Q$. 
Here again, inequality \eqref{eq:firstBound} comes in handy in showing that the classification error of $\hat h$ is of similar order as that of $\hat h_{k^*}$ since, pointwise we have 
$$\mathbbm{1}\{|\eta(x) - 1/2| < |\hat \eta(x) - \eta(x)|\} \leq \mathbbm{1}\{|\eta(x) - 1/2| < \Phi(x; n_P, n_Q, k^*)\},$$
where $\Phi(x; n_P, n_Q, k^*)$ upper-bounds both $|\hat \eta(x) - \eta(x)|$ and $|\eta_{k^*}(x) - \eta(x)|$. 
The rest of the argument is then identical to that laid out in Section \ref{sec:upper-bounds}.

\subsection{Analysis Outline for Adaptive Labeling}
\paragraph{Adaptive Classification Rates}
To show that Algorithm \ref{alg:adaptiveKNN} with $\sample_R$ as input, instead of $\sample$, achieves the same adaptive rates, we use the same argument as above, by first showing that $\sample_R$ maintains important properties of $\sample$. In particular, 
as shown by \cite{berlind2015active}, NN distances are approximately preserved; in our case we show that such distances are preserved uniformly over choices of $k$.



\paragraph{Labeling Complexity} The analysis relies on the main intuition below. 
Fix $x=X_i\in \mathbf{X}_Q$. Initially $\mathbf{X}_R = \mathbf{X}_P$. Thus, we won't request a label at $x$ if at least $k$ samples \emph{from} $\mathbf{X}_P$ fall in a neighborhood of $x$. In particular, if $n_P$ is sufficiently large with respect to $n_Q$, we can ensure that the smallest ball $B_Q(x)$ containing $k$ samples from $\mathbf{X}_Q$ must also contain $k$ samples from $\mathbf{X}_P$ (this follows from lower-bounding $P_X$-mass by $Q_X$-mass using the definition of $\gamma$). Now, if the smallest ball $B_{P, Q}(x)$ containing $2k$ samples from $\mathbf{X}$ contains $B_Q(x)$, we are done, i.e., $x$'s label won't be queried; 
otherwise $B_{P, Q}(x)$ has less than $k$ samples from $\mathbf{X}_Q$ and so must have at least $k$ samples from $\mathbf{X}_P$, in which case again there is no label query at $x$. The theorem formalizes these conditions on $B_Q(x)$, $n_P$, $n_Q$, $\gamma$.  

The detailed proofs are given in Appendix \ref{app:adaRates} and \ref{app:adaResults}.

\section*{Final Remarks}
The transfer-exponent $\gamma$ successfully captures the relative benefits of source and target data, as shown through matching upper and lower-bounds. Our results hold for nonparametric classification. However, other interesting transfer problems such as in parametric models of regression have received much attention in the literature \citep{blitzer2011domain, kuzborskij2013stability, hoffman2017multiple}; such problems certainly require separate consideration. 


\bibliography{refs}

\begin{thebibliography}{38}
\providecommand{\natexlab}[1]{#1}
\providecommand{\url}[1]{\texttt{#1}}
\expandafter\ifx\csname urlstyle\endcsname\relax
  \providecommand{\doi}[1]{doi: #1}\else
  \providecommand{\doi}{doi: \begingroup \urlstyle{rm}\Url}\fi

\bibitem[Ben-David et~al.(2010{\natexlab{a}})Ben-David, Blitzer, Crammer,
  Kulesza, Pereira, and Vaughan]{ben2010theory}
Shai Ben-David, John Blitzer, Koby Crammer, Alex Kulesza, Fernando Pereira, and
  Jennifer~Wortman Vaughan.
\newblock A theory of learning from different domains.
\newblock \emph{Machine learning}, 79\penalty0 (1-2):\penalty0 151--175,
  2010{\natexlab{a}}.

\bibitem[Ben-David et~al.(2010{\natexlab{b}})Ben-David, Lu, Luu, and
  P{\'a}l]{david2010impossibility}
Shai Ben-David, Tyler Lu, Teresa Luu, and D{\'a}vid P{\'a}l.
\newblock Impossibility theorems for domain adaptation.
\newblock In \emph{Proceedings of the Thirteenth International Conference on
  Artificial Intelligence and Statistics}, pages 129--136, 2010{\natexlab{b}}.

\bibitem[Germain et~al.(2013)Germain, Habrard, Laviolette, and
  Morvant]{germain2013pac}
Pascal Germain, Amaury Habrard, Fran{\c{c}}ois Laviolette, and Emilie Morvant.
\newblock A pac-bayesian approach for domain adaptation with specialization to
  linear classifiers.
\newblock In \emph{International Conference on Machine Learning}, pages
  738--746, 2013.

\bibitem[Mansour et~al.(2009{\natexlab{a}})Mansour, Mohri, and
  Rostamizadeh]{mansour2009domain}
Yishay Mansour, Mehryar Mohri, and Afshin Rostamizadeh.
\newblock Domain adaptation: Learning bounds and algorithms.
\newblock \emph{arXiv preprint arXiv:0902.3430}, 2009{\natexlab{a}}.

\bibitem[Mohri and Medina(2012)]{mohri2012new}
Mehryar Mohri and Andres~Munoz Medina.
\newblock New analysis and algorithm for learning with drifting distributions.
\newblock In \emph{International Conference on Algorithmic Learning Theory},
  pages 124--138. Springer, 2012.

\bibitem[Cortes et~al.(2019)Cortes, Mohri, and Medina]{cortesadaptation}
Corinna Cortes, Mehryar Mohri, and Andr{\'e}s~Munoz Medina.
\newblock Adaptation based on generalized discrepancy.
\newblock \emph{The Journal of Machine Learning Research}, 20\penalty0
  (1):\penalty0 1--30, 2019.

\bibitem[Quionero-Candela et~al.(2009)Quionero-Candela, Sugiyama, Schwaighofer,
  and Lawrence]{quionero2009dataset}
Joaquin Quionero-Candela, Masashi Sugiyama, Anton Schwaighofer, and Neil~D
  Lawrence.
\newblock \emph{Dataset shift in machine learning}.
\newblock The MIT Press, 2009.

\bibitem[Sugiyama et~al.(2012)Sugiyama, Suzuki, and
  Kanamori]{sugiyama2012density}
Masashi Sugiyama, Taiji Suzuki, and Takafumi Kanamori.
\newblock \emph{Density ratio estimation in machine learning}.
\newblock Cambridge University Press, 2012.

\bibitem[Kpotufe(2017)]{kpotufe2017lipschitz}
Samory Kpotufe.
\newblock Lipschitz density-ratios, structured data, and data-driven tuning.
\newblock In \emph{Artificial Intelligence and Statistics}, pages 1320--1328,
  2017.

\bibitem[Sugiyama et~al.(2008)Sugiyama, Nakajima, Kashima, Buenau, and
  Kawanabe]{sugiyama2008direct}
Masashi Sugiyama, Shinichi Nakajima, Hisashi Kashima, Paul~V Buenau, and
  Motoaki Kawanabe.
\newblock Direct importance estimation with model selection and its application
  to covariate shift adaptation.
\newblock In \emph{Advances in neural information processing systems}, pages
  1433--1440, 2008.

\bibitem[Mansour et~al.(2009{\natexlab{b}})Mansour, Mohri, and
  Rostamizadeh]{mansour2009multiple}
Yishay Mansour, Mehryar Mohri, and Afshin Rostamizadeh.
\newblock Multiple source adaptation and the r{\'e}nyi divergence.
\newblock In \emph{Proceedings of the Twenty-Fifth Conference on Uncertainty in
  Artificial Intelligence}, pages 367--374. AUAI Press, 2009{\natexlab{b}}.

\bibitem[Audibert and Tsybakov(2007)]{audibert2007fast}
Jean-Yves Audibert and Alexandre~B Tsybakov.
\newblock Fast learning rates for plug-in classifiers.
\newblock \emph{The Annals of Statistics}, 35\penalty0 (2):\penalty0 608--633,
  2007.

\bibitem[Huang et~al.(2007)Huang, Gretton, Borgwardt, Sch{\"o}lkopf, and
  Smola]{huang2007correcting}
Jiayuan Huang, Arthur Gretton, Karsten~M Borgwardt, Bernhard Sch{\"o}lkopf, and
  Alex~J Smola.
\newblock Correcting sample selection bias by unlabeled data.
\newblock In \emph{Advances in neural information processing systems}, pages
  601--608, 2007.

\bibitem[Ben-David and Urner(2012)]{ben2012hardness}
Shai Ben-David and Ruth Urner.
\newblock On the hardness of domain adaptation and the utility of unlabeled
  target samples.
\newblock In \emph{International Conference on Algorithmic Learning Theory},
  pages 139--153. Springer, 2012.

\bibitem[Saha et~al.(2011)Saha, Rai, Daum{\'e}, Venkatasubramanian, and
  DuVall]{saha2011active}
Avishek Saha, Piyush Rai, Hal Daum{\'e}, Suresh Venkatasubramanian, and Scott~L
  DuVall.
\newblock Active supervised domain adaptation.
\newblock In \emph{Joint European Conference on Machine Learning and Knowledge
  Discovery in Databases}, pages 97--112. Springer, 2011.

\bibitem[Chen et~al.(2011)Chen, Weinberger, and Blitzer]{chen2011co}
Minmin Chen, Kilian~Q Weinberger, and John Blitzer.
\newblock Co-training for domain adaptation.
\newblock In \emph{Advances in neural information processing systems}, pages
  2456--2464, 2011.

\bibitem[Chattopadhyay et~al.(2013)Chattopadhyay, Fan, Davidson, Panchanathan,
  and Ye]{pmlr-v28-chattopadhyay13}
Rita Chattopadhyay, Wei Fan, Ian Davidson, Sethuraman Panchanathan, and Jieping
  Ye.
\newblock Joint transfer and batch-mode active learning.
\newblock In Sanjoy Dasgupta and David McAllester, editors, \emph{Proceedings
  of the 30th International Conference on Machine Learning}, volume~28 of
  \emph{Proceedings of Machine Learning Research}, pages 253--261, 2013.

\bibitem[Yang et~al.(2013)Yang, Hanneke, and Carbonell]{yang2013theory}
Liu Yang, Steve Hanneke, and Jaime Carbonell.
\newblock A theory of transfer learning with applications to active learning.
\newblock \emph{Machine learning}, 90\penalty0 (2):\penalty0 161--189, 2013.

\bibitem[Berlind and Urner(2015)]{berlind2015active}
Christopher Berlind and Ruth Urner.
\newblock Active nearest neighbors in changing environments.
\newblock In \emph{International Conference on Machine Learning}, pages
  1870--1879, 2015.

\bibitem[Kpotufe and Martinet(2018)]{pmlr-v75-kpotufe18a}
Samory Kpotufe and Guillaume Martinet.
\newblock Marginal singularity, and the benefits of labels in covariate-shift.
\newblock In \emph{Conference On Learning Theory}, pages 1882--1886, 2018.
\newblock URL \url{http://proceedings.mlr.press/v75/kpotufe18a.html}.

\bibitem[Cortes et~al.(2010)Cortes, Mansour, and Mohri]{cortes2010learning}
Corinna Cortes, Yishay Mansour, and Mehryar Mohri.
\newblock Learning bounds for importance weighting.
\newblock In \emph{Advances in neural information processing systems}, pages
  442--450, 2010.

\bibitem[Samworth et~al.(2012)]{samworth2012optimal}
Richard~J Samworth et~al.
\newblock Optimal weighted nearest neighbour classifiers.
\newblock \emph{The Annals of Statistics}, 40\penalty0 (5):\penalty0
  2733--2763, 2012.

\bibitem[Chaudhuri and Dasgupta(2014)]{chaudhuri2014rates}
Kamalika Chaudhuri and Sanjoy Dasgupta.
\newblock Rates of convergence for nearest neighbor classification.
\newblock In \emph{Advances in Neural Information Processing Systems}, pages
  3437--3445, 2014.

\bibitem[Shalev-Shwartz and Ben-David(2014)]{shalev2014understanding}
Shai Shalev-Shwartz and Shai Ben-David.
\newblock \emph{Understanding machine learning: From theory to algorithms}.
\newblock Cambridge university press, 2014.

\bibitem[Gadat et~al.(2014)Gadat, Klein, and Marteau]{gadat2014classification}
S{\'e}bastien Gadat, Thierry Klein, and Cl{\'e}ment Marteau.
\newblock Classification with the nearest neighbor rule in general finite
  dimensional spaces: necessary and sufficient conditions.
\newblock \emph{arXiv preprint arXiv:1411.0894}, 2014.

\bibitem[Cannings et~al.(2017)Cannings, Berrett, and
  Samworth]{cannings2017local}
Timothy~I Cannings, Thomas~B Berrett, and Richard~J Samworth.
\newblock Local nearest neighbour classification with applications to
  semi-supervised learning.
\newblock \emph{arXiv preprint arXiv:1704.00642}, 2017.

\bibitem[Lepski et~al.(1997)Lepski, Mammen, and Spokoiny]{lepski1997optimal}
Oleg~V Lepski, Enno Mammen, and Vladimir~G Spokoiny.
\newblock Optimal spatial adaptation to inhomogeneous smoothness: an approach
  based on kernel estimates with variable bandwidth selectors.
\newblock \emph{The Annals of Statistics}, pages 929--947, 1997.

\bibitem[Goldenshluger and Nemirovski(1997)]{goldenshluger1997spatially}
A~Goldenshluger and A~Nemirovski.
\newblock On spatially adaptive estimation of nonparametric regression.
\newblock \emph{Mathematical methods of Statistics}, 6\penalty0 (2):\penalty0
  135--170, 1997.

\bibitem[Yu(1997)]{yu1997assouad}
Bin Yu.
\newblock Assouad, fano, and le cam.
\newblock In \emph{Festschrift for Lucien Le Cam}, pages 423--435. Springer,
  1997.

\bibitem[Tsybakov(2009)]{tsybakov2009introduction}
Alexandre~B Tsybakov.
\newblock \emph{Introduction to nonparametric estimation}.
\newblock Springer, 2009.

\bibitem[Gy{\"o}rfi et~al.(2006)Gy{\"o}rfi, Kohler, Krzyzak, and
  Walk]{gyorfi2006distribution}
L{\'a}szl{\'o} Gy{\"o}rfi, Michael Kohler, Adam Krzyzak, and Harro Walk.
\newblock \emph{A distribution-free theory of nonparametric regression}.
\newblock Springer Science \& Business Media, 2006.

\bibitem[Kulkarni and Posner(1995)]{kulkarni1995rates}
Sanjeev~R Kulkarni and Steven~E Posner.
\newblock Rates of convergence of nearest neighbor estimation under arbitrary
  sampling.
\newblock \emph{IEEE Transactions on Information Theory}, 41\penalty0
  (4):\penalty0 1028--1039, 1995.

\bibitem[Wasserman(2006)]{wasserman2006all}
Larry Wasserman.
\newblock \emph{All of nonparametric statistics}.
\newblock Springer Science \& Business Media, 2006.

\bibitem[Blitzer et~al.(2011)Blitzer, Kakade, and Foster]{blitzer2011domain}
John Blitzer, Sham Kakade, and Dean Foster.
\newblock Domain adaptation with coupled subspaces.
\newblock In \emph{Proceedings of the Fourteenth International Conference on
  Artificial Intelligence and Statistics}, pages 173--181, 2011.

\bibitem[Kuzborskij and Orabona(2013)]{kuzborskij2013stability}
Ilja Kuzborskij and Francesco Orabona.
\newblock Stability and hypothesis transfer learning.
\newblock In \emph{Proceedings of the 30th International Conference on Machine
  Learning}, pages 942--950, 2013.

\bibitem[Hoffman et~al.(2017)Hoffman, Mohri, and Zhang]{hoffman2017multiple}
Judy Hoffman, Mehryar Mohri, and Ningshan Zhang.
\newblock Multiple-source adaptation for regression problems.
\newblock \emph{arXiv preprint arXiv:1711.05037}, 2017.

\bibitem[Kpotufe(2011)]{SK:78}
S.~Kpotufe.
\newblock {k-NN Regression Adapts to Local Intrinsic Dimension}.
\newblock \emph{NIPS}, 2011.

\bibitem[Vapnik and Chervonenkis(1971)]{VC:72}
V.~Vapnik and A.~Chervonenkis.
\newblock On the uniform convergence of relative frequencies of events to their
  expectation.
\newblock \emph{Theory of probability and its applications}, 16:\penalty0
  264--280, 1971.

\end{thebibliography}

\appendix
\section{Lower-bounds for $\bcnFam$, and the case $\transMarginExp =\infty$}
\label{app:lowerbounds}
\subsection{Lower Bound for $\family = \bcnFam$ when $\transMarginExp < \infty$}

\begin{myproposition} \label{prop:lbBcnFinite}
Let $(\spa,\dist) = ([0,1]^{\tDmDim},\| . \|_{\infty})$, for some $\tDmDim \in \mathbb{N}^{*}$, and assume that $\transMarginExp < \infty$. There exists a constant $\lowConst = \lowConst(\bcnFam)$ such that, for any classifier $\hat h$ learned on $\sample$ and with knowledge of $P_X, Q_X$, we have, for $\rates = 2 + \tTsyExp + \tCovDim / \holderExp$:
\begin{equation*}
\sup_{(\sJoinProb, \tJoinProb) \in \bcnFam} \mathbb{E}_{\sample}[\exErr(\hat h)] \geq \lowConst \left( \sN ^{\rates / (\rates + \transMarginExp / \holderExp)} + \tN \right)^{ -(\tTsyExp + 1) / \rates}. 
\end{equation*}
\end{myproposition}

\begin{proof}
The proof of the lower bound for the \bcn~regime follows almost all the same lines as the above lower bound proof of Proposition \ref{prop:lbDmFinite} for the \dm~regime. The only difference is that we don't have to satisfy the doubling measure assumption for $\tProb$ and hence we don't need the densities to be bounded away from zero independently of $\sN$ and $\tN$ as in equation \eqref{eq:firstCondOnCm}. In this case, we can set $\radius$, $\m$ and $\w$ as follows:
\begin{equation*}
\radius = \rConst {\color{black} \domBnd} \left( \sN^{\rates / (\rates + \transMarginExp / \holderExp)} + \tN \right)^{-1/(\holderExp \rates)} , \m = \left \lfloor \mConst {\color{black}\left( \frac{\radius}{\domBnd} \right)}^{- \tDmDim} \right \rfloor , \w = \wConst {\color{black}\left( \frac{\radius}{\domBnd} \right)}^{\tDmDim + \holderExp \tTsyExp },
\end{equation*}
where {\color{black}$\wConst =\domBnd^{\alpha\beta} \min(\mConst^{-1}\tTsyCoeff(\holderCoeff'/2)^{\tTsyExp}, 2^{-4} \log(2) \rConst^{-\holderExp \rates}\holderCoeff'^{-2}, 1/2)$}, $\rConst = 1/9$ and $\mConst = (8/9)^{\tCovDim}$, implying that $8 \leq \m < \left \lfloor \radius^{-1} \right \rfloor^{\tDmDim}$ and $\m \w < 1$. After, all the steps are identical to the lower bound proof for \dm, with the difference that $\rates = 2 + \tTsyExp + \tCovDim / \holderExp$ here. In particular, equations \eqref{eq:tsyMarginCond} and \eqref{eq:iiCond} are unchanged, {\color{black} and $\domBnd$ can be chosen low enough so that we can achieve any desired $\tDmCoeff$ or $\transMarginCoeff$}. Finally, from equation \eqref{eq:defineS}, we have again, $\forall 0 \leq i < j \leq \M$:
\begin{equation*}
\semiDist{\hStar_i}{\hStar_j} \geq \holderCoeff' \frac{\w \m \radius^{\holderExp}}{8 } \doteq 2 \s \geq 2 \dmLbConst \left( \sN ^{\rates / (\rates + \transMarginExp / \holderExp)} + \tN \right)^{ - (\tTsyExp + 1) /\rates},
\end{equation*}
where $\dmLbConst>0$. Thus, by applying Proposition \ref{prop:tsy25}, we get the desired lower bound. 
\end{proof}

\subsection{Lower Bounds when $\transMarginExp = \infty$}

\begin{myproposition} \label{prop:lbInfinite}
Let $(\spa,\dist) = ([0,1]^{\tDmDim},\| . \|_{\infty})$, for some $\tDmDim \in \mathbb{N}^{*}$ and consider $\transMarginExp = \infty$. Let $\family$  denote either $\dmFam$ or $\bcnFam$. {For $\family = \dmFam$ assume further that $\holderExp \tTsyExp\leq \tDmDim$} and when $\alpha \beta = \tDmDim$ the lower bound holds only when $\tTsyCoeff$ is higher than some threshold which depends on the other parameters of $\dmFam$ and is derived in the proof. There exists a constant $\lowConst = \lowConst(\family)$ such that, for any classifier $\hat h$ learned on $\sample$ and with knowledge of $P_X, Q_X$, we have: 
\begin{equation*}
\sup_{(\sJoinProb, \tJoinProb) \in \family} \mathbb{E}_{\sample}[\exErr(\hat h)] \geq \lowConst \left( 1 + \tN \right)^{ -(\tTsyExp + 1) / \rates}, 
\end{equation*}
where $\rates = 2  + \tCovDim / \holderExp$ when $\family = \dmFam$, and $\rates = 2 + \tTsyExp + \tCovDim / \holderExp$ when $\family = \bcnFam$.
\end{myproposition}
\begin{proof}
As the proof of the lower bound for $\transMarginExp = \infty$ is again quite similar to the previous ones, we treat both regimes \bcn~and \dm~simultaneously, by taking $\rates = 2 + \tDmDim/\holderExp$ when $\family = \dmFam$ and $\rates = 2 + \tTsyExp + \tCovDim / \holderExp$ when $\family = \bcnFam$. Actually, the main difference is the choice of the source marginal $\sProb$. Notice that because $\transMarginExp = \infty$ we have no restriction on the choice of such a probability measure. In particular, we could set the density of $\sProb$ being equal to zero on $\tDom$, and the proof would be even more direct. However, we do the proof of the lower bound with $\PDensity_{0}, \PDensity_{1} > 0$ to show that, indeed, the lower bound even holds for the situations where we have both $\tDom \subset \sDom$ and $\transMarginExp = \infty$. For $\rConst = 1/9$, we set:
\begin{equation*}
\radius = \rConst D \left( 1 + \tN \right)^{-1/(\holderExp \rates)},
\end{equation*}
and $\w$ and $\m$ are defined as in the previous proofs. The construction of the marginal $\tProb$ remains also the same. Recall that $\lbda_{1}$ is the density of $\tProb$ w.r.t.~Lebesgue measure on each set $\cBall{\p}{\radius/6}$ for $\p \in \grid_{1}$. We define $\sProb$ as having density $\PDensity_{1}$ on these sets as follows:
\begin{equation*}
 \PDensity_{1} = \frac{\lbda_{1}}{\sN}.
\end{equation*}

Note that when $\sN = 0$ we can actually choose any arbitrary distribution for $\sProb$ as it will no longer appear in inequality \eqref{eq:iiCond}.

Furthermore, as before, we let $\sProb$ be uniform on $\cBall{\p}{\radius/2}  \backslash  \cBall{\p}{\radius/3}$ for each $\p \in \grid_{1}$ so that $\sProb(\cBall{\p}{\radius/2})= \tProb(\cBall{\p}{\radius/2})$, and similarly we let $\sProb$ to have the same density $\PDensity_{0} = \lbda_{0}$ as $\tProb$ on the hypercubes $\cBall{\p}{\radius/2}$ for all $\p \in \grid_{0}$. The main arguments remain unchanged, apart from the bound on Kullback-Leibler divergence which changes as follows, when $\sN > 0$: $\forall i \in \{1, \ldots, \M \}$
\begin{align*}
 \KLDiv{\sJoinProb^{i}}{\sJoinProb^{0}}
&= \sN^{-1} \hammingDist{\sig_{i}}{\sig_{0}} \w \log \left( \frac{1 + \holderCoeff' \radius^{\holderExp }}{1 - \holderCoeff' \radius^{\holderExp}} \right)\holderCoeff' \radius^{\holderExp}\\ 
&\leq \sN^{-1} \m\w \holderCoeff'^2\radius^{2\holderExp}/(1 - \holderCoeff' \radius^{\holderExp})
\leq 2 \sN^{-1} \m \w  \holderCoeff'^2\radius^{2\holderExp}.
\end{align*}

Hence, equation \eqref{eq:iiCond} becomes: $\forall i \in \{1, \ldots, \M \}$
\begin{align*}
\quad \KLDiv{\sampleDist_{i}}{\sampleDist_{0}} &\leq 2 \m \w  \holderCoeff'^2 \radius^{2\holderExp} (\tN + 1) \\
&\leq 2  \rConst^{\holderExp \rates} \wConst \holderCoeff'^{2} \m  
\leq 2^{4}  \log(2)^{-1}   \rConst^{\holderExp \rates} \wConst \holderCoeff'^{2} \log(\M).
\end{align*}

Note also that equation \eqref{eq:defineS} is now as follows:
\begin{equation*}
\forall 0 \leq i < j \leq \M, \quad \semiDist{\hStar_i}{\hStar_j} \geq \holderCoeff' \frac{\w \m \radius^{\holderExp}}{8 } \doteq 2 \s \geq 2 \dmLbConst \left( 1 + \tN \right)^{ - (\tTsyExp + 1) /\rates}.
\end{equation*}

Finally, we can apply Proposition \ref{prop:tsy25} to get the lower bound of Theorem \ref{thm:minimaxLowerBound} for the case where $\transMarginExp = \infty$.
\end{proof}


\subsection{Extension: Lower Bound for Differing Support Dimensions}
\label{app:example4}

Consider the case of Example \ref{ex:diffDim}, i.e., where $P$ and $Q$ are of different dimensions $d_P, d_Q$, under $\dm$, with $\gamma = d_P - d_Q$. In this case, the upper bound of Theorem \ref{thm:expErrRates} becomes 
    $$O\left ( n_P^{{(2\alpha + d_Q)}/{(2\alpha + d_P)}} + n_Q   \right)^{-1/(2\alpha + d_Q)}.$$
The \emph{effective sample size contributed by $P$}, namely the term $n_P^{{(2\alpha + d_Q)}/{(2\alpha + d_P)}}$ can be explained through the effective mass of sample points $\sim P_X$ near the support ${\cal X}_Q$ of $Q_X$ (as pointed out by one of the reviewers): roughly, if the mass of a ball $B(x, r)$ under $P$ behaves as $r^{d_P}$, then (up to curvature), the mass under $P_X$ of the envelope 
${\cal X}_{Q+r} \doteq \cup_{x\in {\cal X}_Q } B(x, r)$ behaves like $ r^{d_P - d_Q}$ (assuming ${\cal X}_Q$ is of diameter bounded by $1$). Now for an optimal choice of $$k \approx \left (n_P^{(2\alpha + d_Q) /(2\alpha + d_P)}+ n_Q\right)^{2\alpha /(2\alpha + d_Q)} \geq n_P^{2\alpha/(2\alpha + d)},$$ 
the $k$ nearest neighbors of any $x \in {\cal X}_Q$ are at distance at most $r \doteq (k/n_P)^{1/d_P}\approx n_P^{-1/(2\alpha + d_P)}$ in expectation. Then, for this $r$, the number of datapoints contributed from $P$ to 
${\cal X}_{Q+r}$ would be of order at most $n_P\cdot r^{d_P - d_Q} = n_P^{(2\alpha + d_Q)/(2\alpha + d_P)}$. 

This intuition is further validated by the alternative lower bound construction of Proposition \ref{prop:lbDmLowerDim} below which covers this situation where $P$ and $Q$ are of different dimensions. 

\begin{myproposition} \label{prop:lbDmLowerDim}
Let $(\spa,\dist) = ([0,1]^{\tDmDim + \transMarginExp},\| . \|_{\infty})$, for some $\tDmDim, \transMarginExp \in \mathbb{N}^{*}$, and let $\alpha \beta \leq d$. Consider $\dmFam$ as in Definition \ref{def:distClass}, except that when $\alpha \beta = \tDmDim$ the lower bound holds only when $\tTsyCoeff$ is higher than some threshold which depends on the other parameters of $\dmFam$ and is derived in the proof. Call $\mathcal{T}^{\tDmDim}_{\dm}$ the family of distribution tuples $(\sJoinProb,\tJoinProb)$ from $\dmFam$ such that $\tProb$ has support of dimension $\sDmDim$ and $\sProb$ has support of dimension $\sDmDim + \transMarginExp$. There exists a constant $\lowConst = \lowConst(\mathcal{T}^{\tDmDim}_{\dm})$ such that, for any classifier $\hat h$ learned on $\sample$ and with knowledge of $P_X, Q_X$, we have, for $\rates = 2 + \tCovDim / \holderExp$:
\begin{equation*}
\sup_{(\sJoinProb, \tJoinProb) \in \mathcal{T}^{\tDmDim}_{\dm}} \mathbb{E}_{\sample}[\exErr(\hat h)] \geq \lowConst \left( \sN ^{\rates / (\rates + \transMarginExp / \holderExp)} + \tN \right)^{ -(\tTsyExp + 1) / \rates}. 
\end{equation*}
\end{myproposition}

\begin{proof}
The proof of Proposition \ref{prop:lbDmLowerDim} follows similar steps than for Proposition \ref{prop:lbDmFinite} apart from a few modifications outlined below.

$\tProb$ is now built on a subset of $\spa$ of dimension $\sDmDim$: $\spa_{\tDmDim} \doteq [0,D]^\tDmDim \times \{ 1/2 \}^\transMarginExp$, while the definitions of $\radius$, $\m$ and $\w$ remain identical. The construction of $\tProb$ is the same as in Proposition \ref{prop:lbDmFinite} where $\spa'$ is replaced by $\spa_{\tDmDim}$ and every balls $\cBall{\p}{\radius/2}$ or $\cBall{\p}{\radius/6}$ are replaced by their $\tDmDim$-dimensional restrictions to $\spa_{\tDmDim}$, denoted $B_{\tDmDim}(\p,\radius/2)$ and $B_{\tDmDim}(\p,\radius/6)$. That is, we subdivide $\spa_{\tDmDim}$ in $\lfloor \domBnd \radius^{-1} \rfloor^{\tDmDim}$ hypercubes of dimension $d$ and, as in Proposition \ref{prop:lbDmFinite}, we split the set $\grid$ of the centers of these hypercubes into two disjoint subsets $\grid_0$ and $\grid_1$ such that $|\grid_1| = \m$. We then set $\tProb$ to have a uniform density $\lbda_{1}$ on each set $B_{\tDmDim}(\p,\radius/6)$ for $\p \in \grid_1$ such that $Q_X(B_{\tDmDim}(\p,\radius/6)) = \w$, and the remaining mass $1 - \m\w$ of $\tProb$ is distributed uniformly over $\spa_{0} \doteq \cup_{\p \in \grid_{0}} \cBall{\p}{\radius/2}$ with density $\lbda_{0}$. Note, the densities $\lbda_0$ and $\lbda_1$ are therefore with respect to the Lebesgue measure on $\spa_{\tDmDim}$, and also satisfy inequalities \eqref{eq:firstCondOnCm}, meaning that $\domBnd$ can be chosen arbitrarily small to achieve any desired $\tDmCoeff$.

The marginal $\sProb$, on the other hand, is built on a support of full dimension $d + \gamma$ in the space $\spa$. For every $\p \in \grid_{0}$ we set $\sProb$ to be uniformly distributed on $\cBall{\p}{\radius/2}$ with density $\PDensity_{0}$ such that $\sProb(\cBall{\p}{\radius/2}) = \tProb(B_{\tDmDim}(\p,\radius/2))$, and for every $\p \in \grid_{1}$ we set $\sProb$ to be uniformly distributed on $\cBall{\p}{\radius/6}$ with density $\PDensity_{1} = \lbda_{1} (\domBnd / 3)^{-\transMarginExp}$. Here the densities are understood to be w.r.t.~the Lebesgue measure on $\spa$. Hence we have
\begin{align*}
\forall \p \in \grid_{1}, \quad \sProb(\cBall{\p}{\radius/6}) &= \lbda_{1} (\domBnd/3)^{-\transMarginExp} \left(\frac{\radius}{3}\right)^{\tDmDim + \transMarginExp} = \tProb(B_{\tDmDim}(\p,\radius/6)) \domBnd^{-\transMarginExp} \radius^{\transMarginExp} \\
& \leq \tProb(B_{\tDmDim}(\p,\radius/6)) = \w.
\end{align*}
Then, we also put the remaining mass on $\cBall{\p}{\radius/2} \backslash \cBall{\p}{\radius/3}$ in each of these hypercubes such that $\sProb(\cBall{\p}{\radius/2}) = \tProb(B_{\tDmDim}(\p,\radius/2)) = \w, \,\, \forall \p \in \grid_{1}$. Recall that in order to verify that \eqref{eq:ass1equation} holds we only need to check the inequality only at points $x$ in the support of $\tProb$. We deduce from this an inequality similar to \eqref{eq:verif1trans}, that is for any $\p \in \grid_{1}$, we have that $\forall x \in B_{\tDmDim}(\p,\radius/6), \forall \tempRadius \in [0, \radius /3]$,
\begin{align*}
 \sProb(\cBall{x}{\tempRadius}) &\geq \PDensity_{1} \text{vol}(\cBall{x}{\tempRadius} \cap \cBall{\p}{\radius/6}) \geq (\domBnd / 3)^{-\transMarginExp}  \tempRadius^{\transMarginExp} \tProb(B_{\tDmDim}(x,\tempRadius)).
\end{align*}

Finally, for $\p \in \grid_{0}$, and $\forall x \in B_{\tDmDim}(\p,\radius/2), \forall \tempRadius \in [0, \radius /3]$, we also see that an inequality similar to \eqref{eq:verif2trans} is satisfied:
\begin{align*}
\sProb(\cBall{x}{\tempRadius}) &\geq \sProb(\cBall{x}{\tempRadius} \cap (\cup_{\p \in \grid_{0}} \cBall{\p}{\radius/2})) \\
&\geq p_0 \text{vol}(\cBall{x}{\tempRadius} \cap (\cup_{\p \in \grid_{0}} \cBall{\p}{\radius/2})) \\
&\geq \domBnd^{-\transMarginExp} (2\tempRadius)^{\transMarginExp} \tProb(B_{\tDmDim}(x,\tempRadius)), 
\end{align*}
since we have
$$
\forall \p \in \grid_{0}, \quad \PDensity_{0} = \tProb(B_{\tDmDim}(\p,\radius/2)) / \radius^{\tDmDim + \transMarginExp} = \lbda_0 / \radius^\transMarginExp \geq \lbda_0 \domBnd^{-\transMarginExp}.
$$

 Such inequalities show that, as in the proof of Proposition \ref{prop:lbDmFinite}, we can achieve any $\transMarginCoeff$ by taking a small enough $\domBnd$, if necessary. Therefore, condition \eqref{eq:ass1equation} from Definition \ref{def:transferCoefficient} is also satisfied for the above construction of $(\sProb, \tProb)$.

Now, in the definitions of the excess error in \eqref{eq:distExcessEquiv} and the ensuing semi-distance $\semiDist{\cdot}{\cdot}$, simply replace 
$\cBall{\p}{\radius/2}$ by $B_{\tDmDim}(\p,\radius/2)$. 

All remaining steps of the proof are then identical to those in Proposition \ref{prop:lbDmFinite}: in particular the conditional distributions are defined the same way and both inequalities \eqref{eq:defineS} and \eqref{eq:iiCond} are satisfied. To be more precise, we define the $(\holderCoeff, \holderExp)$--H\"{o}lder functions exactly as in the proof of Proposition \ref{prop:lbDmFinite}, that is:
\begin{equation*}
\forall \p \in \grid_1, \quad \regFct_{\p}(x) \doteq \holderCoeff' \radius^{\holderExp} \holderFct^{\holderExp}( \| x - \p \|_{\infty} / \radius),
\end{equation*}
where here $\| \cdot \|_{\infty} $ refers to the infinite norm on the (full dimensional) space $\spa$. Then we define the regression functions $\regFct_{\sig}$ and the distributions $(\sJoinProb^{\sig}, \tJoinProb^{\sig})$, $\sampleDist_{\sig}$ and $\sampleDist_{i}$ also the same way, so that the family of distributions satisfies condition (i) of Proposition \ref{prop:tsy25}. Furthermore, condition (ii) of Proposition \ref{prop:tsy25} holds
in the same way here as in \eqref{eq:iiCond}, since it relies only on the mass that either $\sProb$ or $\tProb$ assigns to regions where the regression functions $\regFct_{\sig}$ are not equal to $1/2$. 

The result then follows. 
\end{proof}

\section{Upper Bound Analysis} 
\label{app:upperBound} 

\subsection{Proof of Technical Lemmas}

The inequality of Lemma \ref{lem:basicIneq} are combined with other useful inequalities in the following lemma. 

\begin{lemma} [Basic inequalities]\label{lem:basicIneq2} We have the following inequalities:

\begin{enumerate}
\item \label{ineq:basicIneq1} Take $\alpha \geq 1$ and $a,b \geq 0$, then:
\begin{equation*}
a^\alpha + b^\alpha \leq (a+b)^\alpha \leq 2^{\alpha - 1}(a^\alpha + b^\alpha).
\end{equation*}

\item \label{ineq:basicIneq2} Take $\alpha, \alpha' > 0$ and $a,b \geq 0$ such that $a+b>0$. Then if $\alpha \geq 1$:
\begin{equation*}
\left( \frac{1}{a^{\alpha/\alpha'}+ b}  \right)^{\frac{1}{\alpha}} \geq \frac{1}{a^{1/\alpha'} + b^{1/\alpha}} \geq \frac{1}{2} \left( \frac{1}{a^{\alpha/\alpha'} + b} \right)^{\frac{1}{\alpha}},
\end{equation*}
and when $\alpha < 1$, we have:
\begin{equation*}
2^{1 - 1 / \alpha} \frac{1}{a^{1/\alpha'} + b^{1/\alpha}} \leq \left( \frac{1}{a^{\alpha/\alpha'}+ b}  \right)^{\frac{1}{\alpha}} \leq \frac{1}{a^{1/\alpha'} + b^{1/\alpha}}.
\end{equation*}

\item \label{ineq:basicIneq3} Take $\alpha_1, \alpha_2, \beta_1, \beta_2 > 0$ and $a,b \geq 0$ such that $a+b>0$ and $\alpha_1 \beta_1 \leq 1$. Assume $\alpha_2 - \alpha_1 = \frac{1}{\beta2} - \frac{1}{\beta1}$. Then, for $c = \max(a^{\beta_1}, b^{\beta_2})^{-1}$ we have:
\begin{equation*}
\frac{1}{a c^{\alpha_1} + b c^{\alpha_2}} \leq \frac{2}{a^{1-\alpha_1 \beta_1} + b^{1 - \alpha_2 \beta_2}}.
\end{equation*}
\end{enumerate}

\end{lemma}
\begin{proof}
Inequalities (a) are well-known and inequalities (b) are direct consequences of the later. So we need just to prove inequality (c). Note that the cases $a = 0$ or $b = 0$ are trivial, so we can restrict ourselves to the situation where both $a>0$ and $b>0$. Plugging in the expression of $c$ we get:
\begin{equation*}
\frac{1}{a c^{\alpha_1} + b c^{\alpha_2}} = \frac{1}{\min(a^{1 -\alpha_1 \beta_1}, a  b^{-\alpha_1 \beta_2} ) + \min(b a^{- \alpha_2 \beta_1}, b^{1 - \alpha_2 \beta_2} )}.
\end{equation*}

Note that:
\begin{align} \label{ineq:basicIneqProof}
 & \quad a^{1- \alpha_1 \beta_1} \leq a b^{-\alpha_1 \beta_2}
\Leftrightarrow  \quad a^{- \alpha_1 \beta_1} \leq b^{-\alpha_1 \beta_2}
\Leftrightarrow 	\quad a^{- \beta_1} \leq b^{- \beta_2}\\
\Leftrightarrow 	& \quad a^{- \alpha_2 \beta_1} \leq b^{-\alpha_2 \beta_2}
\Leftrightarrow 	\quad b a^{- \alpha_2 \beta_1} \leq b^{1 -\alpha_2 \beta_2}. \nonumber
\end{align}

This means that $a^{1- \alpha_1 \beta_1}$ is minimum in the left component of the denominator if and only if $ b a^{- \alpha_2 \beta_1}$ is minimum in the right component. 
First, assume that it is $a^{1- \alpha_1 \beta_1}$ the minimum in the left component. Recall that $\alpha_1 \beta_1 \leq 1$ and $\alpha_2 - \alpha_1 = \frac{1}{\beta2} - \frac{1}{\beta1} \Leftrightarrow \frac{\beta_2}{\beta_1} - \alpha_1 \beta_2 = 1 - \alpha_2 \beta_2$. In this case, from equation (\ref{ineq:basicIneqProof}) we have:
\begin{equation*}
a^{-\beta_1} \leq b^{-\beta_2} \Leftrightarrow a \geq b^{\beta_2 / \beta_1} \Rightarrow a^{1 - \alpha_1 \beta_1} \geq b^{\frac{\beta_2}{\beta_1} - \alpha_1 \beta_2}  = b^{1 - \alpha_2 \beta_2}.
\end{equation*}

Hence, we can notice that $a^{1 - \alpha_1 \beta_1} \geq \frac{1}{2} a^{1 - \alpha_1 \beta_1} + \frac{1}{2} b^{1 - \alpha_2 \beta_2}$. This lead us to the result:
\begin{equation*}
\frac{1}{a c^{\alpha_1} + b c^{\alpha_2}} \leq \frac{1}{a^{1 - \alpha_1 \beta_1}} \leq \frac{2}{a^{1-\alpha_1 \beta_1} + b^{1 - \alpha_2 \beta_2}}.
\end{equation*}

Now assume that $b^{1-\alpha_{2}\beta_{2}}$ is strictly the minimum in the right component (recall that, by (\ref{ineq:basicIneqProof}), this is equivalent to $a^{-\beta_{1}} > b^{-\beta_{2}}$), we have:
\begin{equation*}
a^{-\beta_1} > b^{-\beta_2} \Leftrightarrow a < b^{\beta_2 / \beta_1} \Rightarrow a^{1 - \alpha_1 \beta_1} \leq b^{\frac{\beta_2}{\beta_1} - \alpha_1 \beta_2}  = b^{1 - \alpha_2 \beta_2}.
\end{equation*}

Therefore, again we have $b^{1 - \alpha_2 \beta_2} \geq \frac{1}{2} a^{1 - \alpha_1 \beta_1} + \frac{1}{2} b^{1 - \alpha_2 \beta_2}$ from which we can conclude:
\begin{equation*}
\frac{1}{a c^{\alpha_1} + b c^{\alpha_2}} \leq \frac{1}{b^{1 - \alpha_2 \beta_2}} \leq \frac{2}{a^{1-\alpha_1 \beta_1} + b^{1 - \alpha_2 \beta_2}}.
\end{equation*}
\end{proof}

\begin{proof}[Proof of Lemma \ref{lem:biasBoundImplicit1NN}]
The relation in fact holds generally for any subset of size $\nn$ of the data $\sample$.
That is, for any $\{ \featVar'_{i} \}_{i = 1}^{\nn} \subset \{ \featVar_{i} \}_{i = 1}^{\sN + \tN}$ we have:
\begin{equation} \label{ineq:biasImplicitIneq}
\sum_{i = 1}^{\nn} \dist(\nnFeat{i}{} , x) ^{\holderExp} \leq  \sum_{i = 1}^{\nn} \dist( X'_{i} , x )^{\holderExp}.
\end{equation}

Indeed, assume WLOG that $\dist(\featVar'_{1},x) \leq \ldots \leq \dist(\featVar'_{\nn},x)$. Then, $\featVar'_{i}$ is in fact the $i$th nearest neighbor of $x$ from $\{ \featVar'_{i} \}_{i = 1}^{\nn}$, while $\nnFeat{i}{}$ is its $i$th nearest neighbor from $\{ \featVar_{i} \}_{i = 1}^{\sN + \tN}$. As $\{ \featVar'_{i} \}_{i = 1}^{\nn} \subset \{ \featVar_{i} \}_{i = 1}^{\sN + \tN}$, this clearly implies that $\forall i \in \{1, \ldots, \nn \}, \,\, \dist(\nnFeat{i}{},x) \leq \dist(\featVar'_{i},x)$. Inequality \eqref{ineq:biasImplicitIneq} follows.
\end{proof}

\subsection{Proof of Theorem \ref{thm:expErrRates}}

\begin{lemma} [Plugging in the value of $\nn$] \label{lem:boundImpliesRates}
Let the exponent $\rates>2$ be as defined in Theorem \ref{thm:expErrRates}, that is, $\rates = 2  + \tCovDim / \holderExp$ when $\family = \dmFam$, and $\rates = 2 + \tTsyExp + \tCovDim / \holderExp$ when $\family = \bcnFam$. Recall that $1 \leq \nn \leq \sN \vee \tN$. Suppose that for some constant $\upConst_{1} > 0$, $k$ is upper-bounded as 
\begin{equation*}
\nn \leq \upConst_{1} \left(\sN^{\rates / (\rates + \transMarginExp / \holderExp) } + \tN \right)^{2/\rates}.
\end{equation*}
Then, for some constant $\upConst_{2} > 0$ we have that:
\begin{equation} \label{eq:resultBIR}
\left( \left \lfloor \frac{\sN}{\nn} \right \rfloor^{(\rates - 2) / ((\rates - 2) + \transMarginExp / \holderExp)} + \left \lfloor \frac{\tN}{\nn} \right \rfloor \right)^{-1/ (\rates - 2)} \leq \upConst_{2}(\sN^{\rates / (\rates + \transMarginExp / \holderExp)} + \tN)^{-1/\rates}.
\end{equation}
\end{lemma}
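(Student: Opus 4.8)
The plan is to first take reciprocal $-(\rates-2)$-th powers of both sides of \eqref{eq:resultBIR}, which (since $\rates>2$) turns the claim into the equivalent \emph{lower} bound
\[
\left\lfloor \sN/\nn\right\rfloor^{p} + \left\lfloor \tN/\nn\right\rfloor \;\geq\; \lowConst\cdot T^{(\rates-2)/\rates},
\qquad p\doteq\frac{\rates-2}{(\rates-2)+\transMarginExp/\holderExp},\quad T\doteq \sN^{\rates/(\rates+\transMarginExp/\holderExp)}+\tN ,
\]
for a constant $\lowConst>0$ depending only on $\upConst_{1},\rates,\transMarginExp/\holderExp$; then $\upConst_{2}=\lowConst^{-1/(\rates-2)}$ yields \eqref{eq:resultBIR}. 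Since $T$ is a sum of two nonnegative terms, at least one of the cases (A) $\tN\geq T/2$ or (B) $\sN^{\rates/(\rates+\transMarginExp/\holderExp)}\geq T/2$ holds, and I would bound the left-hand side below in each case, using the hypothesis $\nn\leq\upConst_{1}T^{2/\rates}$ together with the elementary inequality $\lfloor u\rfloor\geq u/2$ valid for $u\geq1$.

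In case (A) one has $T\leq 2\tN$, hence $\nn\leq\upConst_{1}(2\tN)^{2/\rates}$ and so $\tN/\nn\geq \upConst_{1}^{-1}2^{-2/\rates}\,\tN^{(\rates-2)/\rates}\geq \upConst_{1}^{-1}2^{-2/\rates}(T/2)^{(\rates-2)/\rates}$; when this ratio is at least $1$ we get $\lfloor \tN/\nn\rfloor\geq \tN/(2\nn)$, which is of the required order $T^{(\rates-2)/\rates}$, and the left-hand side dominates $\lfloor \tN/\nn\rfloor$. Case (B) is analogous but exploits the exponent identity $p\left((\rates-2)+\transMarginExp/\holderExp\right)=\rates-2$: from $T\leq 2\sN^{\rates/(\rates+\transMarginExp/\holderExp)}$ one gets $\nn\leq\upConst_{1}2^{2/\rates}\sN^{2/(\rates+\transMarginExp/\holderExp)}$, hence $\sN/\nn$ is of order $\sN^{((\rates-2)+\transMarginExp/\holderExp)/(\rates+\transMarginExp/\holderExp)}$, and raising it to the power $p$ collapses the exponent, so that $\lfloor \sN/\nn\rfloor^{p}$ is of order $\left(\sN^{\rates/(\rates+\transMarginExp/\holderExp)}\right)^{(\rates-2)/\rates}\geq(T/2)^{(\rates-2)/\rates}$ --- again provided $\sN/\nn\geq 1$. (Alternatively, the main regime of both cases can be packaged through Lemma~\ref{lem:basicIneq}(c) applied with $a=\sN^{p}$, $b=\tN$, $c=\nn^{-1}$, but the floor corrections described next are still required.)

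The only genuine subtlety is the interaction of the floor functions with the hypotheses, i.e.\ the degenerate sub-cases where a floor vanishes so that $\lfloor u\rfloor\geq u/2$ does not apply. The key remark is that if, say, $\lfloor \tN/\nn\rfloor=0$ then $\nn>\tN$, so the constraint $\nn\leq \sN\vee\tN$ forces $\sN\geq\nn$ and hence $\lfloor \sN/\nn\rfloor\geq1$; simultaneously $T\leq 2\tN<2\nn\leq 2\upConst_{1}T^{2/\rates}$ pins $T^{(\rates-2)/\rates}$ into a range bounded by $2\upConst_{1}$, so the trivial estimate $\lfloor \sN/\nn\rfloor^{p}\geq1$ already implies the desired inequality with a small enough constant; the symmetric situation $\lfloor \sN/\nn\rfloor=0$ (which forces $\tN\geq\nn\geq1$ and, via $T\leq 2\sN^{\rates/(\rates+\transMarginExp/\holderExp)}$ and $\sN<\nn\leq\upConst_{1}T^{2/\rates}$, a bounded $T$) is handled identically. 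Collecting the finitely many constants produced by the two cases and their degenerate sub-cases and taking their minimum yields $\lowConst$, whence $\upConst_{2}=\lowConst^{-1/(\rates-2)}$; tracking the constants shows it depends only on $\upConst_{1}$, $\rates$, and $\transMarginExp/\holderExp$, as claimed.
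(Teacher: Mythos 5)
Your proof is correct, but it follows a genuinely different route from the paper's. The paper first invokes Lemma \ref{lem:basicIneq}(\ref{ineq:basicIneq2}) to replace the outer $-1/(\rates-2)$ power by a sum of separate fractional powers, then lower-bounds the floors by a uniform factor $1/3$ using $1\leq \nn\leq \sN\vee\tN$, rewrites the hypothesis $\nn\leq\upConst_1 T^{2/\rates}$ as $\nn\leq 2^{2/\rates}\upConst_1\max(\sN^{2/(\rates+\transMarginExp/\holderExp)},\tN^{2/\rates})$, and finally feeds this into the algebraic inequality of Lemma \ref{lem:basicIneq}(\ref{ineq:basicIneq3}) with a specific choice of $\alpha_i,\beta_i$. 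You instead keep the inequality in the form of a lower bound on $\lfloor\sN/\nn\rfloor^{p}+\lfloor\tN/\nn\rfloor$ and run a direct two-case analysis according to which of $\tN$ or $\sN^{\rates/(\rates+\transMarginExp/\holderExp)}$ carries at least half of $T$, using the exponent identity $p\bigl((\rates-2)+\transMarginExp/\holderExp\bigr)=\rates-2$ explicitly in the source-dominated case. What the paper's route buys is modularity: part (\ref{ineq:basicIneq3}) of the basic-inequalities lemma is reused in Lemmas \ref{lem:boundPhi3DM} and \ref{lem:boundPhi3BCN}, so the case analysis is done once and packaged. What your route buys is a more careful treatment of the floor functions: the paper's blanket factor-$1/3$ step is stated without addressing the sub-case where one floor vanishes while the corresponding fractional power of the ratio is still close to $1$, whereas you explicitly observe that a vanishing floor forces the other ratio to be at least $1$ and, via $T\leq 2\nn\leq 2\upConst_1 T^{2/\rates}$ (or its analogue), forces $T$ itself to be bounded, so the trivial estimate suffices there. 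Both arguments yield the same constant dependence on $\upConst_1$, $\rates$, and $\transMarginExp/\holderExp$.
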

\begin{proof}
From result \ref{ineq:basicIneq2} of Lemma \ref{lem:basicIneq}, note that proving the bound (\ref{eq:resultBIR}) is in fact equivalent to proving that there exists a constant $\upConst_{2}>0$ such that:
\begin{equation*}
\left( \left (\frac{\sN}{\nn} \right )^{1/((\rates - 2) + \transMarginExp / \holderExp)} + \left (\frac{\tN}{\nn} \right )^{1/(\rates - 2)} \right)^{-1} \leq \upConst_{2} \left( \sN^{1/(\rates + \transMarginExp / \holderExp)} + \tN^{1/ \rates}\right)^{-1}.
\end{equation*} 

To further apply Lemma \ref{lem:basicIneq}, remark that we can rewrite the upper bound on $\nn$ as:
\begin{equation*}
\nn \leq 2^{2/\rates} \upConst_{1} \max \left( \sN^{2/(\rates + \transMarginExp / \holderExp)}, \tN^{2 / \rates} \right).
\end{equation*}

We can use finally result \ref{ineq:basicIneq3} from Lemma \ref{lem:basicIneq} by setting: $\alpha_{1} = 1/((\rates - 2) + \transMarginExp / \holderExp)$ and $\alpha_{2} = 1/ (\rates - 2)$ ; $\beta_{1} = 2(\alpha_{1})^{-1}/(\rates + \transMarginExp / \holderExp)$ and $\beta_{2} = 2(\alpha_{2})^{-1}/\rates$ ; $a = \sN^{\alpha_{1}}$ and $b = \tN^{\alpha_{2}}$. Now let's verify the conditions of result \ref{ineq:basicIneq3}:
\begin{align*}
& 1-\alpha_{1} \beta_{1} = \frac{(\rates - 2) + \transMarginExp / \holderExp}{\rates + \transMarginExp / \holderExp} = \frac{\alpha_{1}^{-1}}{\rates + \transMarginExp / \holderExp} \geq 0, 1-\alpha_{2} \beta_{2} = \frac{(\rates - 2) }{\rates} = \frac{\alpha_{2}^{-1}}{\rates}\\
& \frac{1}{\beta_{2}} - \frac{1}{\beta_{1}} = \alpha_{2} - \alpha_{1} + \frac{(\rates - 2)}{2} \alpha_{2} - \frac{(\rates - 2) + \transMarginExp / \holderExp}{2}\alpha_{1} = \alpha_{2} - \alpha_{1}.
\end{align*}

Therefore we conclude by using inequality (\ref{ineq:basicIneq3}) from Lemma \ref{lem:basicIneq}. For some constant $\upConst_{2}>0$, we have:
\begin{align*}
\left( \left(\frac{\sN}{\nn} \right)^{1/((\rates - 2) + \transMarginExp / \holderExp)} + \left(\frac{\tN}{\nn} \right)^{1/(\rates - 2)} \right)^{-1} & \leq \upConst_{2} \left( a^{1 - \alpha_{1} \beta_{1}} + b^{1 - \alpha_{2} \beta{2}}\right)^{-1}\\
& \leq \upConst_{2} \left( \sN^{1/(\rates + \transMarginExp / \holderExp)} + \tN^{1/\rates}\right)^{-1}.
\end{align*}
\end{proof}

\subsection{Bounding $\mathbb{E}[\bigElement_{1}(\featVar)]$ and $\mathbb{E}[\bigElement_{2}(\featVar)]$}

\begin{proof}[Proof of Lemma \ref{lem:chaining}]
By using Fubini theorem along with the bound assumed in the lemma, we have:
\begin{align*}
& \mathbb{E}\left[\left| \regFct(X) - \frac{1}{2} \right| \mathbbm{1} \left\{  \left| \regFct(X) - \frac{1}{2} \right| \leq G_{\nn}(\sample,\featVar)\right\} \right]\\
&= \,\, \mathbb{E}_{\tJoinProb} \left[\left| \regFct(x) - \frac{1}{2}\right| \mathbb{P}_{\sample} \left(G_{\nn}(\sample, \featVar) \geq \left| \regFct(X) - \frac{1}{2} \right| \right) \right]\\
& \leq \,\,  \upConst \mathbb{E}_{\tJoinProb}\left[\left| \regFct(X) - \frac{1}{2} \right| e^{ -\lowConst \nn \left| \regFct(\featVar) - 1/2 \right|^{2}} \right].
\end{align*}

Let $\delta = \sqrt{\frac{\tTsyExp + 1}{\lowConst \nn}}$ and $\delta_{i} = i.\delta$ for $i \geq 0$. Call $A_{i} = \{ x: \left| \regFct(x) - \frac{1}{2} \right| \in ( \delta_{i}, \delta_{i+1}] \}$. We can decompose the above expectation over the disjoint sets $A_i$ as:
\begin{align} \label{eq:summation}
\sum_{i \geq 0} \mathbb{E}_{\tJoinProb}\left[\left| \regFct(X) - \frac{1}{2} \right| e^{-\lowConst \nn \left| \regFct(\featVar) - 1/2 \right|^{2}} \mathbbm{1} \{\featVar \in A_i \}\right].
\end{align}

Now each term in the above sum is upper-bounded by
\begin{align} \label{eq:boundTheTerm}
&\delta_{i+1} e^{-\lowConst \nn \delta_{i}^{2}} \tProb \left(\delta_{i} < \left| \regFct(x) - 1/2 \right| \leq \delta_{i+1} \right) \nonumber \\
&\leq \,\, \delta (i+1) e^{-\lowConst \nn \delta^{2} i^{2}} \tProb \left(0 < \left| \regFct(x) - 1/2 \right| \leq \delta_{i+1} \right) \nonumber \\
&\leq \,\, \tTsyCoeff \delta^{\tTsyExp +1} (i + 1)^{\tTsyExp + 1} e^{-\lowConst \nn \delta^{2} i^{2}} \nonumber \\
&\leq \,\, \tTsyCoeff \left( \frac{\tTsyExp + 1}{\lowConst \nn} \right)^{(\tTsyExp + 1)/2} (i + 1)^{\tTsyExp + 1}  e^{-(\tTsyExp + 1) i^{2}},
\end{align}
where we used Definition \ref{def:noise} in the second inequality, and replaced $\delta$ by its value in the last one. Now, we have:
\begin{align} \label{eq:boundSum}
& \sum_{i \geq 0} (i+1)^{\tTsyExp + 1} e^{-(\tTsyExp + 1) i^{2}} = \sum_{i \geq 0} e^{-(\tTsyExp + 1) i^{2} + (\tTsyExp + 1)\log(i+1)} \leq \sum_{i \geq 0} e^{-(\tTsyExp + 1) i^{2} + (\tTsyExp + 1)i}  \nonumber \\
\leq \,\, & \sum_{i \geq 0} e^{-(\tTsyExp + 1) i (i - 1)} = 1 + \sum_{i \geq 1} e^{-(\tTsyExp + 1) i (i - 1)} \leq 1 + \frac{1}{1-\exp(-(\tTsyExp + 1))} \leq 3.
\end{align}

Therefore, from equation (\ref{eq:summation}) and using the inequalities from (\ref{eq:boundTheTerm}) and (\ref{eq:boundSum}), we finally get inequality (\ref{eq:chaining}) of the lemma.

\end{proof}

\subsection{Bounding $\mathbb{E}[\bigElement_{3}(\featVar)]$ under \dm}

\begin{lemma} [Bounding $\mathbb{E}(\bigElement_{3}(\featVar))$ under \dm] \label{lem:boundPhi3DM}
Consider $\bigElement_{3}$ as defined in Proposition \ref{prop:biasVarianceDecomp}. Under \dm, there exists a constant $\upConst > 0$ such that
\begin{equation*}
\mathbb{E}[\bigElement_{3}(\featVar)] \leq \upConst \left( \left \lfloor \frac{\sN}{\nn} \right \rfloor^{\frac{\tDmDim}{ \tDmDim + \transMarginExp}} + \left \lfloor \frac{\tN}{\nn}\right \rfloor \right)^{-\holderExp(\tTsyExp + 1)/ \tDmDim}.
\end{equation*}
\end{lemma}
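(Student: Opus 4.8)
The plan is to derive a \emph{uniform} (over $x\in\tDom^\gamma$) upper bound on the averaged‑bias term $G_3(x)=\holderCoeff\,\Expectation_{\tilde X_1}\dist(\tilde X_1,x)^{\holderExp}$ from \eqref{eq:decompStep2} --- which is available precisely because, under \dm, the problem is equally hard everywhere in space --- and then plug it into Tsybakov's noise condition exactly as in the discussion around \eqref{eq:incatortrick}. Recall $\bigElement_3(x)=2\,|\regFct(x)-1/2|\,\mathbbm{1}\{|\regFct(x)-1/2|\le 3G_3(x)\}$, that $G_3(x)$ is deterministic given $x$ (it is already an expectation over the implicit batch), and that $\tilde X_1$ is the $1$‑NN of $x$ in a single batch (Definition \ref{def:implicit1NN}) of $m_P\doteq\lfloor\sN/\nn\rfloor$ i.i.d.\ draws from $\sProb$ and $m_Q\doteq\lfloor\tN/\nn\rfloor$ i.i.d.\ draws from $\tProb$; note $m_P\vee m_Q\ge1$ since $\nn\le\sN\vee\tN$.

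\textbf{Step 1 (tail of the implicit $1$-NN).} For $x\in\tDom^\gamma$ and $t\in(0,\diamDom]$, the event $\dist(\tilde X_1,x)>t$ means that no batch point lies in $\cBall{x}{t}$, so $\mathbb{P}(\dist(\tilde X_1,x)>t)=(1-\sProb(\cBall{x}{t}))^{m_P}(1-\tProb(\cBall{x}{t}))^{m_Q}$. Under \dm, $\tProb$ is $(\tDmCoeff,\tDmDim)$‑doubling (Definition \ref{def:doublingMeas}), hence $\tProb(\cBall{x}{t})\ge\tDmCoeff(t/\diamDom)^{\tDmDim}$, and combining this with the transfer‑exponent inequality \eqref{eq:ass1equation} gives $\sProb(\cBall{x}{t})\ge\transMarginCoeff\tDmCoeff(t/\diamDom)^{\tDmDim+\transMarginExp}$. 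Using $1-u\le e^{-u}$ I obtain, uniformly over $x\in\tDom^\gamma$,
\[
\mathbb{P}(\dist(\tilde X_1,x)>t)\le\exp\Big(-m_P\,\transMarginCoeff\tDmCoeff\,(t/\diamDom)^{\tDmDim+\transMarginExp}-m_Q\,\tDmCoeff\,(t/\diamDom)^{\tDmDim}\Big).
\]

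\textbf{Step 2 (tail integration, the crux).} Writing $G_3(x)=\holderCoeff\int_0^{\diamDom}\holderExp\,t^{\holderExp-1}\,\mathbb{P}(\dist(\tilde X_1,x)>t)\,dt$ and using the elementary estimate $\int_0^{\diamDom}\holderExp\,t^{\holderExp-1}e^{-\lambda(t/\diamDom)^p}dt\le c(\holderExp,p)\,\diamDom^{\holderExp}\lambda^{-\holderExp/p}$ (substitute $s=(t/\diamDom)^p$, then bound by a Gamma integral), I compare $m_P^{\,\tDmDim/(\tDmDim+\transMarginExp)}$ with $m_Q$. If $m_Q\ge m_P^{\,\tDmDim/(\tDmDim+\transMarginExp)}$, I discard the $\sProb$‑factor (bound it by $1$) and take $p=\tDmDim$ to get $G_3(x)\le c_1\diamDom^{\holderExp}m_Q^{-\holderExp/\tDmDim}$; otherwise I discard the $\tProb$‑factor and take $p=\tDmDim+\transMarginExp$ to get $G_3(x)\le c_1\diamDom^{\holderExp}m_P^{-\holderExp/(\tDmDim+\transMarginExp)}$. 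The same dichotomy absorbs the degenerate cases $m_P=0$ and $m_Q=0$. In every case this yields the \emph{uniform} bound $G_3(x)\le C_3\,t_0$ on $\tDom^\gamma$, where $t_0\doteq\big(\lfloor\sN/\nn\rfloor^{\,\tDmDim/(\tDmDim+\transMarginExp)}+\lfloor\tN/\nn\rfloor\big)^{-\holderExp/\tDmDim}$ and $C_3$ collects $\holderCoeff$, $\diamDom^{\holderExp}$, the Gamma constants, and powers of $\transMarginCoeff,\tDmCoeff$.

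\textbf{Step 3 (conclusion) and main obstacle.} Since $2\,|\regFct(x)-1/2|$ vanishes when $\regFct(x)=1/2$ and $G_3(x)\le C_3t_0$ on $\tDom^\gamma$ (which has full $\tProb$‑mass), $\mathbb{E}[\bigElement_3(X)]\le 2\,\mathbb{E}\big[|\regFct(X)-1/2|\,\mathbbm{1}\{0<|\regFct(X)-1/2|\le 3C_3t_0\}\big]$; bounding $|\regFct(X)-1/2|\le 3C_3t_0$ on that event and invoking Definition \ref{def:noise} (noise parameters $\tTsyExp,\tTsyCoeff$ for $Q$) gives $\mathbb{E}[\bigElement_3(X)]\le 6C_3t_0\cdot\tTsyCoeff(3C_3t_0)^{\tTsyExp}=\upConst\,t_0^{\tTsyExp+1}$, i.e.\ the claimed bound with exponent $-\holderExp(\tTsyExp+1)/\tDmDim$. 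The main obstacle is Step~2: combining the two exponential factors so that the effective $1$‑NN radius is governed by whichever of $m_P^{1/(\tDmDim+\transMarginExp)}$ and $m_Q^{1/\tDmDim}$ is larger --- thereby reproducing the aggregate $\big(\lfloor\sN/\nn\rfloor^{\tDmDim/(\tDmDim+\transMarginExp)}+\lfloor\tN/\nn\rfloor\big)$ scaling --- and checking that the boundary regimes (one batch size zero, or $\transMarginExp$ very large, where the $\sProb$‑bound degenerates) are absorbed by the same dichotomy; everything else is the routine plug‑in‑to‑Tsybakov step already used for $\bigElement_1,\bigElement_2$ in Lemma \ref{lem:boundingVar}.
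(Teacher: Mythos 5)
Your proof is correct and follows essentially the same route as the paper's: the same implicit $1$-NN tail formula, the same ball-mass lower bounds obtained by combining the doubling condition with the transfer-exponent inequality, the same $1-u\le e^{-u}$ step, and the same uniform-bound-then-Tsybakov conclusion. The only difference is in bounding the two-exponential tail integral: you use a max-dichotomy (discard the smaller of the two factors and integrate a single exponential via a Gamma substitution), whereas the paper discretizes the integral on a grid of width $c=1/\max\bigl(\lfloor n_P/k\rfloor^{\alpha/(d+\gamma)},\lfloor n_Q/k\rfloor^{\alpha/d}\bigr)$ and sums a series; both hinge on the observation that $\lfloor n_P/k\rfloor^{d/(d+\gamma)}+\lfloor n_Q/k\rfloor$ is comparable to the corresponding maximum, so the two arguments are interchangeable.
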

\begin{proof}
Recall that 
$$\bigElement_{3}(x) \doteq 2 \left|\regFct(x)-\frac{1}{2}\right| \mathbbm{1}\left \{ \left| \regFct(x)-\frac{1}{2} \right| \leq 3 \holderCoeff \mathbb{E}_{\tilde{\featVar}_{1}} \left[ \dist( \tilde{\featVar}_{1} , x )^{\holderExp} \right] \right \} .$$
Let $x \in \tDom$, we have that $\mathbb{E}_{\tilde{\featVar}_{1}}\left[ \dist(\tilde{\featVar}_{1},x)^{\holderExp} \right]$ equals 
\begin{align*}
    & \int_{0}^{\diamDom^{\holderExp}} \mathbb{P}_{\tilde{\featVar}}\left( \dist(\tilde{\featVar}_{1},x)^{\holderExp} > t\right) \mathrm{d}t 
= \int_{0}^{\diamDom^{\holderExp}} \mathbb{P}_{\tilde{\featVar}}\left( \dist(\tilde{\featVar}_{1},x) > t^{1/\holderExp}\right) \mathrm{d}t \\
		& =  \int_{0}^{\diamDom^{\holderExp}} \left(1-\sProb(\cBall{x}{t^{1/\holderExp}}) \right)^{\left \lfloor \frac{\sN}{k} \right \rfloor} \left(1-\tProb(\cBall{x}{t^{1/\holderExp}}) \right)^{\left \lfloor \frac{\tN}{k} \right \rfloor} \mathrm{d}t.
\end{align*}

Now let's recall that $\tProb$ is doubling (see \dm~and Definition \ref{def:doublingMeas}), that is:
\begin{equation*}
\forall x \in \tDom, \forall r \in (0,\diamDom], \quad \tProb(\cBall{x}{r}) \geq \tDmCoeff \left( \frac{r}{\diamDom} \right)^{\tDmDim}.
\end{equation*}

Combining with (\ref{eq:ass1equation}) from Definition \ref{def:transferCoefficient}, we have $\forall x \in \tDom, \forall r \in (0,\diamDom]$
\begin{equation*}
 \sProb(\cBall{x}{r}) \geq  \tProb(\cBall{x}{r}) \transMarginCoeff \left( \frac{r}{\diamDom} \right)^\transMarginExp \geq  \tDmCoeff \transMarginCoeff \left( \frac{r}{\diamDom} \right)^{\tDmDim +\transMarginExp}.
\end{equation*}

Thus, note that for any $\transMarginExp$ (and in particular for  $\transMarginExp = \infty$), using only the fact that $\tProb$ is doubling, we can bound the expectation as:
\begin{equation} \label{eq:dmGammaInfinity}
\forall x\in \tDom, \quad \mathbb{E}_{\tilde{\featVar}_{1}} \left[ \dist(\tilde{\featVar}_{1},x)^{\holderExp} \right]   \leq \int_{0}^{\diamDom^{\holderExp}} \left(1-\tDmCoeff  \left( t\diamDom^{-\holderExp} \right)^{\tDmDim / \holderExp} \right)^{\left \lfloor \frac{\tN}{k} \right \rfloor} \diff t.
\end{equation}

For the moment assume $\transMarginExp < \infty$. Recall that $\tDmCoeff, \transMarginCoeff\leq 1$. We get for any $x \in \tDom$ that $\mathbb{E}_{\tilde{\featVar}_{1}}  \left[ \dist(\tilde{\featVar}_{1},x)^{\holderExp} \right]$ is at most 
\begin{align*}
& \int_{0}^{\diamDom^{\holderExp}} \left(1-\tDmCoeff \transMarginCoeff \left( t\diamDom^{-\holderExp} \right)^{(\tDmDim +\transMarginExp)/\holderExp} \right)^{\left \lfloor \frac{\sN}{k} \right \rfloor} \left(1-\tDmCoeff  \left( t\diamDom^{-\holderExp} \right)^{\tDmDim / \holderExp} \right)^{\left \lfloor \frac{\tN}{k} \right \rfloor} \mathrm{d}t \\
& \leq \int_{0}^{\diamDom^{\holderExp}} \exp \left( -\tDmCoeff \transMarginCoeff \left( t\diamDom^{-\holderExp} \right)^{(\tDmDim +\transMarginExp)/\holderExp} \left \lfloor \frac{\sN}{k} \right \rfloor -\tDmCoeff  \left( t\diamDom^{-\holderExp} \right)^{\tDmDim / \holderExp} \left \lfloor \frac{\tN}{k} \right \rfloor \right) \diff t \\
& \leq \int_{0}^{\diamDom^{\holderExp}} \exp \left( -\left(t (\tDmCoeff \transMarginCoeff )^{\holderExp/\tDmDim} \diamDom^{-\holderExp} \right)^{(\tDmDim +\transMarginExp)/\holderExp} \left \lfloor \frac{\sN}{k} \right \rfloor \right. \\
&\qquad \qquad \qquad \quad - \left.  \left( t (\tDmCoeff \transMarginCoeff)^{\holderExp / \tDmDim} \diamDom^{-\holderExp} \right)^{\tDmDim / \holderExp} \left \lfloor \frac{\tN}{k} \right \rfloor \right) \diff t \\
& \leq \frac{\diamDom^{\holderExp}}{(\tDmCoeff \transMarginCoeff)^{\holderExp / \tDmDim}} \int_{0}^{\infty} \exp \left( -  \left \lfloor \frac{\sN}{k} \right \rfloor s^{(\tDmDim +\transMarginExp)/\holderExp} - \left \lfloor \frac{\tN}{k} \right \rfloor s^{\tDmDim / \holderExp}  \right) \diff s,
\end{align*}
using the change of variable $s = t (\tDmCoeff \transMarginCoeff)^{\holderExp / \tDmDim} \diamDom^{-\holderExp}$. 

To bound this last integral we break it up over a suitable discretization of its range. Set $r_k = \left(\max(\left \lfloor \frac{\sN}{k} \right \rfloor^{\holderExp/(\tDmDim +\transMarginExp)}, \left \lfloor \frac{\tN}{k} \right \rfloor^{\holderExp / \tDmDim}) \right)^{-1}$. We make use of inequality (\ref{ineq:basicIneq3}) from Lemma \ref{lem:basicIneq}. Following the notations of the lemma, set $\alpha_{1} = \frac{\tDmDim + \transMarginExp}{\holderExp} = \frac{1}{\beta_{1}}$ and $\alpha_{2} = \frac{\tDmDim }{\holderExp} = \frac{1}{\beta_{2}}$. Notice that this implies $\alpha_{2} - \alpha_{1} = \frac{1}{\beta_{2}} - \frac{1}{\beta_{1}}$ and $\alpha_{1} \beta_{1} = 1 = \alpha_{2} \beta_{2}$. Finally let $a = \left \lfloor \frac{\sN}{\nn} \right \rfloor$ and $b = \left \lfloor \frac{\tN}{\nn} \right \rfloor$. Recall that $\nn \leq \sN \vee \tN$ and hence $a+b>0$. Applying inequality (\ref{ineq:basicIneq3}) from Lemma \ref{lem:basicIneq}:
\begin{equation} \label{eq:dmUseOfBasicIneq}
\left \lfloor \frac{\sN}{\nn} \right \rfloor r_k^{(\tDmDim + \transMarginExp)/\holderExp} + \left \lfloor \frac{\tN}{\nn} \right \rfloor r_k^{\tDmDim / \holderExp} = a r_k^{\alpha_{1}} + b r_k^{\alpha_{2}} \geq \frac{1}{2} (a^{1 - \alpha_{1} \beta_{1}} + b^{1 - \alpha_{2} \beta_{2}}) \geq \frac{1}{2}.
\end{equation}

{Now let $c_{i} = i\cdot r_k$ for integer $i \geq 0$. We then have:
\begin{align*}
\int_{0}^{\infty} &\exp \left( -  \left \lfloor \frac{\sN}{k} \right \rfloor t^{(\tDmDim +\transMarginExp)/\holderExp} - \left \lfloor \frac{\tN}{k} \right \rfloor t^{\tDmDim / \holderExp}  \right) \diff t \\
& = \sum_{i\geq 0} \int_{c_{i}}^{c_{i+1}} \exp \left ( - \left \lfloor \frac{\sN}{k} \right \rfloor t^{(\tDmDim +\transMarginExp)/\holderExp} - \left \lfloor \frac{\tN}{k} \right \rfloor t^{\tDmDim / \holderExp} \right ) \diff t \\
&\leq r_k \sum_{i \geq 0} \exp\left( - \left \lfloor \frac{\sN}{k} \right \rfloor c_{i}^{(\tDmDim +\transMarginExp)/\holderExp} - \left \lfloor \frac{\tN}{k} \right \rfloor c_{i}^{\tDmDim / \holderExp} \right)
 \leq r_k \sum_{i \geq 0} \exp \left(-\frac{i^{\tDmDim/\holderExp}}{2} \right),
\end{align*}
where we used equation (\ref{eq:dmUseOfBasicIneq}) in the last inequality. Notice that, as $\tDmDim > 0$, there exists a constant $\upConst > 0$ such that 
$\sum_{i \geq 0} \exp \left( - \frac{i^{\tDmDim / \holderExp}}{2} \right) \leq \upConst$.}

Now, for any $x \in \tDom$ we can bound the expectation as follows:
\begin{align}
\mathbb{E}_{\tilde{\featVar}_{1}} \left[ \dist(\tilde{\featVar}_{1},x)^{\holderExp} \right] & \leq \frac{ \upConst \diamDom^{\holderExp}}{( \tDmCoeff \transMarginCoeff)^{\holderExp / \tDmDim}} r_k \nonumber\\
& \doteq \frac{\upConst \diamDom^{\holderExp}}{(\tDmCoeff \transMarginCoeff)^{\holderExp / \tDmDim}} \left(\max \left(
\left \lfloor \frac{\sN}{k} \right \rfloor^{\tDmDim/(\tDmDim +\transMarginExp )}, \left \lfloor \frac{\tN}{k} \right \rfloor \right) \right)^{-\holderExp/\tDmDim} \label{eq:expecBias}\\
& \leq \frac{\upConst \diamDom^{\holderExp} 2^{\holderExp / \tDmDim}}{(\tDmCoeff \transMarginCoeff)^{\holderExp / \tDmDim}}  \left( \left \lfloor \frac{\sN}{\nn} \right \rfloor^{\frac{\tDmDim}{ \tDmDim + \transMarginExp}} + \left \lfloor \frac{\tN}{\nn}\right \rfloor \right)^{-\holderExp / \tDmDim} \doteq G_{\nn}^{'}(\sN,\tN,x),\nonumber
\end{align}
where we used result (\ref{ineq:basicIneq2}) from Lemma \ref{lem:basicIneq} in the last inequality. Therefore, by using  Definition \ref{def:noise}, we get:
\begin{align*}
\mathbb{E}_{\tJoinProb}[\bigElement_{3}(\featVar)] & \leq 2 \mathbb{E}_{\tJoinProb} \left[ \left|\regFct(x)-\frac{1}{2}\right| \mathbbm{1}\left \{ \left| \regFct(x)-\frac{1}{2} \right| \leq 3 \holderCoeff G_{\nn}^{'}(\sN,\tN,x) \right \} \right] \\
& \leq 2 \tTsyCoeff \left( \frac{ 3 \upConst \holderCoeff \diamDom^{\holderExp} 2^{\holderExp / \tDmDim}}{(\tDmCoeff \transMarginCoeff)^{\holderExp / \tDmDim}} \right)^{\tTsyExp + 1} \left( \left \lfloor \frac{\sN}{\nn} \right \rfloor^{\frac{\tDmDim}{ \tDmDim + \transMarginExp}} + \left \lfloor \frac{\tN}{\nn}\right \rfloor \right)^{-\holderExp(\tTsyExp + 1)/ \tDmDim}.
\end{align*}

Finally, the case $\transMarginExp = \infty$ is proved by starting from equation (\ref{eq:dmGammaInfinity}) and by following the same steps (and even simpler ones) as above.
\end{proof}


\subsection{Bounding $\mathbb{E}[\bigElement_{3}(\featVar)]$ under \bcn}

\begin{proof}[Proof of Lemma \ref{lem:kulkarniAdapt}]
By Fubini, we get that $\mathbb{E}_{\tJoinProb}[A(\epsilon, \featVar)]$ equals 
\begin{align*}
\int_{\epsilon}^{\diamDom^{\holderExp}} \int_{\tDom} \left(1 - \sProb(\cBall{x}{t^{1/\holderExp}})\right)^{\left \lfloor \frac{\sN}{\nn} \right \rfloor} \left(1 - \tProb(\cBall{x}{t^{1/\holderExp}})\right)^{\left \lfloor \frac{\tN}{\nn} \right \rfloor} \diff \tProb(x) \diff t.
\end{align*}

Let's now consider the inner integral. Take $t \in [\epsilon, \diamDom^{\holderExp}]$ and consider a cover of $\tDom$ with balls $(B_{i})_{i \in I}$ of {\color{black} diameter} $t^{1/\holderExp}$ indexed by some set $I$ of size $\mathcal{N}(\tDom,\dist, \frac{1}{2}t^{1/\holderExp})$. The inner integral is then at most 
\begin{align} \label{eq:lemKulkarniAssumption3}
& \sum_{i \in I} \int_{B_i} \left(1 - \sProb(\cBall{x}{t^{1/\holderExp}})\right)^{\left \lfloor \frac{\sN}{\nn} \right \rfloor} \left(1 - \tProb(\cBall{x}{t^{1/\holderExp}})\right)^{\left \lfloor \frac{\tN}{\nn} \right \rfloor} \diff \tProb(x) \nonumber \\
\leq \quad & \sum_{i \in I} \left(1 - \sProb(B_{i})\right)^{\left \lfloor \frac{\sN}{\nn} \right \rfloor} \left(1 - \tProb(B_{i})\right)^{\left \lfloor \frac{\tN}{\nn} \right \rfloor} \tProb(B_{i}) \nonumber \\
\leq \quad & \sum_{i \in I} \exp \left( - \sProb(B_{i}) \left \lfloor \frac{\sN}{\nn} \right \rfloor - \tProb(B_{i}) \left \lfloor \frac{\tN}{\nn} \right \rfloor\right) \tProb(B_{i}) \nonumber \\
\leq \quad & \sum_{i \in I} \exp \left( - \tProb(B_{i})  \transMarginCoeff \left( t^{1/\holderExp}/(2 \diamDom) \right)^{\transMarginExp} \left \lfloor \frac{\sN}{\nn} \right \rfloor - \tProb(B_{i}) \left \lfloor \frac{\tN}{\nn} \right \rfloor \right) \tProb(B_{i}) \\ 
\leq \quad & \color{black} \tCovCoeff 2^\tCovDim\diamDom^\tCovDim t^{- \tCovDim / \holderExp} \left(  \transMarginCoeff \left( t^{1/\holderExp}/(2 \diamDom) \right)^{\transMarginExp} \left \lfloor \frac{\sN}{\nn} \right \rfloor + \left \lfloor \frac{\tN}{\nn} \right \rfloor \right)^{-1}, \nonumber
\end{align} 
where we used equation \eqref{eq:ass1equation} from Definition \ref{def:transferCoefficient} in inequality (\ref{eq:lemKulkarniAssumption3}), {\color{black} assuming $\transMarginExp < \infty$}. 
Note also that when $\transMarginExp = \infty$ {\color{black} and $\left \lfloor \frac{\tN}{\nn} \right \rfloor > 0$} inequality (\ref{eq:lemKulkarniAssumption3}) can be replaced by:
\begin{equation} \label{eq:lemKulkarniInfinity}
\sum_{i \in I} \exp \left( - \left \lfloor \frac{\tN}{\nn} \right \rfloor \tProb(B_{i}) \right) \tProb(B_{i}) \leq {\color{black} \tCovCoeff 2^\tCovDim\diamDom^\tCovDim t^{- \tCovDim / \holderExp}  \left \lfloor \frac{\tN}{\nn} \right \rfloor ^{-1}}.
\end{equation}

However for now, let's consider $\transMarginExp < \infty$, and let $\tCovCoeff' \doteq \tCovCoeff 2^\tCovDim\diamDom^\tCovDim$. Assuming that $\left \lfloor \frac{\sN}{\nn} \right \rfloor, \left \lfloor \frac{\tN}{\nn} \right \rfloor > 0$ (recall also that $\transMarginCoeff \leq 1$), we get that $\mathbb{E}_{\tJoinProb}[A(\epsilon,\featVar)]$ is at most 
\begin{align*}
 & \leq \tCovCoeff' \int_{\epsilon}^{\diamDom^{\holderExp}} \left( \transMarginCoeff t^{(\tCovDim + \transMarginExp)/\holderExp} /(2 \diamDom)^{\transMarginExp} \left \lfloor \frac{\sN}{\nn} \right \rfloor + t^{ \tCovDim / \holderExp} \left \lfloor \frac{\tN}{\nn} \right \rfloor \right)^{-1} \diff t \\
& \leq \tCovCoeff' \int_{\epsilon}^{\diamDom^{\holderExp}} \min \left( t^{-(\tCovDim + \transMarginExp)/\holderExp} \left(\left \lfloor \frac{\sN}{\nn} \right \rfloor \transMarginCoeff /(2 \diamDom)^{\transMarginExp} \right)^{-1}, t^{- \tCovDim / \holderExp} \left \lfloor \frac{\tN}{\nn} \right \rfloor^{-1} \right) \diff t.
\end{align*}

Assume $\tCovDim > \holderExp$. Switching integral and min, the above is upper-bounded by 
\begin{align} \label{eq:lemKulkarniIntegral}
& \tCovCoeff' \min \left( \frac{\holderExp}{\transMarginExp + \tCovDim - \holderExp} \left(\left \lfloor \frac{\sN}{\nn} \right \rfloor \frac{\transMarginCoeff}{(2 \diamDom)^{\transMarginExp}} \right)^{-1} \epsilon^{-\frac{\tCovDim + \transMarginExp}{\holderExp}+1} \right., 
\left. \frac{\holderExp}{\tCovDim - \holderExp} \left \lfloor \frac{\tN}{\nn} \right \rfloor^{-1} 
\epsilon^{- \frac{\tCovDim}{\holderExp}+1} \right)   \\
& \leq 2 \frac{\tCovCoeff'}{\transMarginCoeff} \frac{\holderExp}{\tCovDim - \holderExp} ((2 \diamDom)^{\transMarginExp}\vee 1) \left( \left \lfloor \frac{\sN}{\nn} \right \rfloor  \epsilon^{(\tCovDim  + \transMarginExp) / \holderExp - 1} + \left \lfloor \frac{\tN}{\nn} \right \rfloor \epsilon^{ \tCovDim / \holderExp - 1} \right)^{-1}.\nonumber
\end{align}
When either $\left \lfloor \frac{\tN}{\nn} \right \rfloor= 0$ or $\left \lfloor \frac{\sN}{\nn} \right \rfloor= 0$, the same inequality (or even tighter) is obtained {\color{black} more directly}. 
When $\tCovDim = \holderExp$, the r.h.s. of (\ref{eq:lemKulkarniAssumption3}) can be bounded as:
\begin{align*}
\mathbb{E}_{\tProb}[A(\epsilon,\featVar)] & \leq \upConst \left(1 \vee (\log(\diamDom^{\holderExp}) - \log(\epsilon))\right) \min \left(\left \lfloor \frac{\sN}{\nn} \right \rfloor^{-1} \epsilon^{-\transMarginExp/\holderExp}, \left \lfloor \frac{\tN}{\nn} \right \rfloor^{-1} \right) \\
& \leq 2 \upConst \left(1 \vee (\log(\diamDom^{\holderExp}) - \log(\epsilon))\right) \left(\left \lfloor \frac{\sN}{\nn} \right \rfloor \epsilon^{\transMarginExp/\holderExp} + \left \lfloor \frac{\tN}{\nn} \right \rfloor\right)^{-1},
\end{align*}
where $\upConst = \frac{\tCovCoeff'}{\transMarginCoeff}  ((2 \diamDom)^{\transMarginExp} \vee 1) \left( 1 +\frac{\holderExp}{\transMarginExp} \mathbbm{1} \{\transMarginExp > 0 \} \right)$. Now, the case where either $\left \lfloor\frac{\tN}{\nn}\right \rfloor= 0$ or $\left \lfloor\frac{\sN}{\nn}\right \rfloor= 0$ is handled similarly. Finally for $\transMarginExp = \infty$ {\color{black} and $\left \lfloor \frac{\tN}{\nn} \right \rfloor > 0$}, by equation (\ref{eq:lemKulkarniInfinity}), and following the above intermediary steps, we get that $\mathbb{E}_{\tJoinProb_X}[A(\epsilon,\featVar)]$ is at most 
\begin{align*}
\tCovCoeff' \int_{\epsilon}^{\diamDom^{\holderExp}} t^{-\tCovDim /\holderExp} \left \lfloor \frac{\tN}{\nn} \right \rfloor^{-1} \diff t 
\leq \left \{
\begin{matrix}
\tCovCoeff' \frac{\holderExp}{\tCovDim - \holderExp} \left \lfloor \frac{\tN}{\nn} \right \rfloor^{-1} \epsilon^{ -\tCovDim/\holderExp + 1} , \quad & \text{ for } \holderExp < \tCovDim, \\
\tCovCoeff'  (\log(\diamDom^{\holderExp}) -\log(\epsilon) ) \left \lfloor \frac{\tN}{\nn} \right \rfloor^{-1}, \quad & \text{ for } \holderExp = \tCovDim.
\end{matrix}
\right.
\end{align*}

{\color{black} The bound for the case where $\transMarginExp = \infty$ and $\left \lfloor \frac{\tN}{\nn} \right \rfloor = 0$ is direct.}
\end{proof}

\section{Adaptive Rates}
\label{app:adaRates}

Algorithm \ref{alg:adaptiveKNN} works by considering the intersection of confidence sets (on the regression function $\regFct(x)$) for increasing values of $k$, and stops when confidence sets no longer intersect; this is an indication of having reached a good choice of $k$ which approximately balances regression bias and variance at a point $x$. While the basic Lepski's approach usually appears in the literature for adaptation to smoothness $\alpha$ -- as applied to \emph{kernel} regression type procedures, we will show that we also automatically get adaptation to $\gamma, \beta, d$ in our classification setting under transfer. 

Moreover, such adaptation extends beyond i.i.d. input $\featVectBase'$ (e.g. $\featVectBase' = \featVectBase$) to \emph{large} 
$k$-$2k$ covers $\featVectBase_{\coverSet}$ of $\featVectBase$. This is because such covers maintain useful properties of the original i.i.d. $\featVectBase$ as outlined in the Section \ref{sec:propofk2k} below. 

Therefore the proof of Theorem \ref{thm:genericadaptivity} can be given for a generic $\featVectBase' = \featVectBase_{\coverSet}$ as done in Theorem \ref{thm:genericadaptivity2} of this section. 

\subsection{Useful Properties of $k$-$2k$ covers $\featVectBase_{\coverSet}$ w.r.t. the Original Sample $\featVectBase$}
\label{sec:propofk2k} 
The first result below is an adaptation of Lemma 1 from \cite{berlind2015active} and gives a bound on the distance to nearest neighbors from a $\nn$-$2\nn$ cover. In particular, the result below holds simultaneously over any neighbor index $i \in [\nn]$, rather than only for $i = \nn$.

\begin{lemma}[Relating NN distances] \label{lem:berlindLemma}
Let $x \in \spa$, $1 \leq \nn \leq (\sN + \tN)/2$ and consider $\coverSet \subset [\sN + \tN]$ such that $\featVectBase_{\coverSet}$ is a $\nn$-$2\nn$ cover of $\featVectBase$. Let $\nnFeat{i}{\coverSet}$ denote the $i$-th NN of $x$ from $\featVectBase_{\coverSet}$, while as before, we let $\nnFeat{i}{}$ denote the same from $\featVectBase$: 
\begin{equation*}
\forall i \in [\nn], \quad \dist(\nnFeat{i}{\coverSet},x) \leq 3 \dist(\nnFeat{i + \nn}{},x).
\end{equation*}
\end{lemma}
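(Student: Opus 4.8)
The plan is to follow the skeleton of the original argument of \cite{berlind2015active} for the case $i=\nn$, but to replace their use of a single ``representative'' neighbor by a counting argument controlling how many of the relevant points of $\featVectBase_\coverSet$ land inside $\cBall{x}{3r}$, where $r\doteq\dist(\nnFeat{i+\nn}{},x)$. First I would record the elementary facts that, since $i+\nn\le 2\nn\le\sN+\tN=|\featVectBase|$, the point $\nnFeat{i+\nn}{}$ exists and the closed ball $\cBall{x}{r}$ contains the $i+\nn$ distinct points $\nnFeat{1}{},\dots,\nnFeat{i+\nn}{}$, so $|\featVectBase\cap\cBall{x}{r}|\ge i+\nn$. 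It then suffices to exhibit $i$ distinct points of $\featVectBase_\coverSet$ inside $\cBall{x}{3r}$, since that forces $\dist(\nnFeat{i}{\coverSet},x)\le 3r$.

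Next I would dichotomize. If $\featVectBase_\coverSet$ already has $\ge i$ points in $\cBall{x}{r}$, we are done (as $\cBall{x}{r}\subseteq\cBall{x}{3r}$). Otherwise $\cBall{x}{r}$ contains at most $i-1$ points of $\featVectBase_\coverSet$, so at least $(i+\nn)-(i-1)=\nn+1$ points of $\featVectBase\cap\cBall{x}{r}$ lie outside $\featVectBase_\coverSet$; fix one such point $\featVar_j$. Applying the $\nn$-$2\nn$ cover property (Definition~\ref{def:k2kCover}) to $\featVar_j$, there is an admissible set $N$ of $2\nn$ nearest neighbors of $\featVar_j$ in $\featVectBase$ (including $\featVar_j$ itself) containing at least $\nn$ points of $\featVectBase_\coverSet$. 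The heart of the proof is the bound \emph{at most $\nn-i$ points of $N$ lie outside $\cBall{x}{3r}$}, which I would prove by splitting on the $2\nn$-NN radius $\radius_j$ of $\featVar_j$: if $\radius_j\le 2r$, then every point of $N$ is within $2r$ of $\featVar_j$, hence within $2r+\dist(\featVar_j,x)\le 2r+r=3r$ of $x$ (triangle inequality, using $\featVar_j\in\cBall{x}{r}$), so no point of $N$ is outside $\cBall{x}{3r}$; if $\radius_j>2r$, then every point of $\featVectBase\cap\cBall{x}{r}$ is at distance $\le 2r<\radius_j$ from $\featVar_j$ and therefore belongs to \emph{every} admissible choice of $2\nn$-NN of $\featVar_j$, so $N\supseteq\featVectBase\cap\cBall{x}{r}$ already fills $\ge i+\nn$ of the $2\nn$ slots with points inside $\cBall{x}{3r}$, leaving at most $2\nn-(i+\nn)=\nn-i$ slots for points outside. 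Combining this with $|N\cap\featVectBase_\coverSet|\ge\nn$ gives at least $\nn-(\nn-i)=i$ points of $\featVectBase_\coverSet$ inside $\cBall{x}{3r}$, which is what we needed.

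I expect the main obstacle to be precisely that ``heart'' step. The naive generalization — copying the $i=\nn$ proof, which shows the whole $2\nn$-NN set of $\featVar_j$ lies in $\cBall{x}{3r}$ — breaks for $i<\nn$, since $\radius_j$ can be arbitrarily larger than $2r$ when $\featVectBase$ is sparse just outside $\cBall{x}{r}$; the resolution is the observation that in exactly that regime $\cBall{x}{r}\cap\featVectBase$ is \emph{forced into} the $2\nn$-NN set, so at most $\nn-i$ of the $2\nn$ neighbor slots can be ``spent'' on far-away points. I would also be careful about ties at distance $\radius_j$ — only points strictly closer than $\radius_j$ are guaranteed to lie in every admissible $2\nn$-NN set, which is all the argument uses — and about the tie-breaking clause in Definition~\ref{def:k2kCover}, i.e.\ choosing $N$ to be the admissible set that contains at least $\nn$ points of $\featVectBase_\coverSet$.
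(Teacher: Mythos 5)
Your proof is correct and follows essentially the same route as the paper's: both locate a non-cover point $X_j$ (the paper's $x'$) inside the ball $B(x,r)$ with $r=\dist(X_{(i+k)},x)$ by comparing the at least $i+k$ sample points it contains against the fewer than $i$ cover points, and both then invoke the cover property at that point together with the triangle-inequality inclusions $B(x,r)\subseteq B(X_j,2r)\subseteq B(x,3r)$. The only difference is presentational — you argue directly, counting how many of the $2\nn$ neighbors of $X_j$ can escape $B(x,3r)$, where the paper argues by contradiction, and your explicit case split on the $2\nn$-NN radius of $X_j$ spells out the tie/radius analysis that the paper's final ``therefore'' leaves implicit.
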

\begin{proof}
We proceed by contradiction. Assume $\exists i \in [\nn]$ such that
\begin{equation*}
\dist(\nnFeat{i}{\coverSet}, x) > 3 \dist(\nnFeat{i+\nn}{},x).
\end{equation*}

It means that in the ball $\cBall{x}{3 \dist (\nnFeat{i+\nn}{},x)}$ there are strictly less than $i$ observations from $\featVectBase_{\coverSet}$ and in the ball $\cBall{x}{\dist(\nnFeat{i+\nn}{},x)}$ there are at least $i + \nn$ observations from $\featVectBase$. Therefore, there exists $x' \in \featVectBase \backslash \featVectBase_{\coverSet}$ such that $x' \in \cBall{x}{\dist(\nnFeat{i+\nn}{}, x)}$. We have for such $x'$:
\begin{equation*}
\cBall{x}{\dist(\nnFeat{i+\nn}{},x)} \subset \cBall{x'}{2 \dist(\nnFeat{i+\nn}{},x)} \subset \cBall{x}{3 \dist(\nnFeat{i + \nn}{},x)}.
\end{equation*}

Thus $\cBall{x'}{2 \dist(\nnFeat{i+\nn}{},x)}$ contains strictly less than $i$ elements from $\featVectBase_{\coverSet}$ but at least $\nn + i$ elements from $\featVectBase$, meaning that it contains at least $k+1$ elements from $\featVectBase \backslash \featVectBase_{\coverSet}$ while having less than $\nn$ elements from $\featVectBase_{\coverSet}$.

Therefore, among the $2 \nn$ nearest neighbors of $x'$ from $\featVectBase$, there are strictly less than $\nn$ elements from $\featVectBase_{\coverSet}$, this is in contradiction with the definition of a $\nn$-$2\nn$ cover (see Definition \ref{def:k2kCover}).
\end{proof}

The next result relates the size of a $k$-2$k$ cover with that of the dataset $\featVectBase$. 

\begin{myproposition}
Suppose $\featVectBase_{\coverSet}$, contains $\featVectBase_P$, and is a $\nn$-$2\nn$ cover of $\featVectBase$\, 
for some $\nn\geq (\sN \vee \tN)/4$. 

Let $\nR = |\coverSet|$, we have: $\sN + \tN \geq \nR \geq (\sN \vee \tN)/4 \geq (\sN + \tN) / 8.$
\end{myproposition}
\begin{proof}
Either $\featVectBase_Q\subset \featVectBase_R$, or some $x\in \featVectBase_Q$ has at least $k$ neighbors in $\featVectBase_R$. 
\end{proof}

\subsection{Obtaining Theorem \ref{thm:genericadaptivity}}
The proof of the theorem requires the following lemma -- presented  without proof -- due to \cite{SK:78}, which bounds in high probability, and uniformly over $x\in \cal X$, the error of a $\nn$-NN regression estimator. We state it in a generic way which applies beyond i.i.d. labeled data $\sampleBase'$. 

\begin{lemma}[Lemma 3 from \cite{SK:78}] \label{lem:regFctBiasVarBound} Assume that class $\ballColl$ of all the balls in $(\spa, \dist)$ has finite VC-dimension $\vcDim$. Consider a sample $\sampleBase'$ of size $n$, where the $\labVar_{i}$'s are conditionally independent (conditioned on $\featVectBase' \doteq \{X_i'\}_{i = 1}^n$), with the conditional distribution $\regFct(x) = \mathbb{P}(Y_i' = 1 | X_i' = x), \,\, \forall i$. Assume $\eta$ is $(\holderCoeff, \holderExp)$-H\"{o}lder as in Definition \ref{def:smoothness}. Define the $\nn$-NN regression estimate $\empReg(x) \doteq \frac{1}{\nn} \sum_{i=1}^{\nn} Y_i'$ where $Y_{(i)}'$ is the label of the $i$-th NN of $x$ in $\featVectBase'$. Then for any $0< \delta <1$, we have with probability at least $1-\delta$ that:
\begin{equation*}
\forall x \in \spa, \forall \nn \in [n], \quad \left| \empReg(x) - \regFct(x) \right| \leq \sqrt{\frac{\vcDim \log (2n/\delta) + 8}{\nn}}+ \frac{\holderCoeff}{\nn} \sum_{i=1}^{\nn} \dist^{\holderExp}(X_{(i)}', x).
\end{equation*}
\end{lemma}

%


Theorem \ref{thm:genericadaptivity} follows from Theorem \ref{thm:genericadaptivity2} below by letting $\featVectBase_{\coverSet} = \featVectBase$. 

\begin{theorem}[Generic analysis of Algorithm \ref{alg:adaptiveKNN}] \label{thm:genericadaptivity2}
Let Assumption \ref{ass:boundedVC} hold, and let $\family$ denote $\dmFam$ or $\bcnFam$. For $\family = \bcnFam$ assume further that $\alpha < d$. Suppose Algorithm \ref{alg:adaptiveKNN} takes as input $\sampleBase' \doteq \sampleBase_{\coverSet}$, where $\featVectBase_{\coverSet}$ is a $\nn$-$2\nn$ cover of $\featVectBase$ for all $\nn \in \mathcal{K} \doteq \{2^{i} \nnZero: i \in \{ 0, \ldots, \lfloor \log_{2}((\sN \vee \tN)/2\nnZero) \rfloor \} \}$, 
where we let $\nnZero \doteq \Theta (\vcDim \log(\sN + \tN) )$\footnote{The asymptotic $\Theta$ is in $\sN \vee \tN$.}.
Suppose Algorithm \ref{alg:adaptiveKNN} outputs $\hHat_{\coverSet}$. We have: 
\begin{equation*}
\sup_{(\sJoinProb, \tJoinProb) \in \family} \mathbb{E}_{\sample}[\exErr(\hHat_{\coverSet})] \leq \upConst \left( \frac{\nnZero\cdot \log (2(\sN + \tN))}{\sN ^{\rates / (\rates + \transMarginExp / \holderExp)} + \tN }\right)^{ (\tTsyExp + 1) / \rates}, 
\end{equation*}
where $\rates = 2  + \tCovDim / \holderExp$ when $\family = \dmFam$, and $\rates = 2 + \tTsyExp + \tCovDim / \holderExp$ when $\family = \bcnFam$.

When $\family = \bcnFam$ with $\alpha = d$, replace $\upConst$ above with 
$\upConst\cdot \log(2(n_P + n_Q))$. 
\end{theorem}

\begin{proof}
Let $\nR \doteq |\coverSet|$, and for fixed $x$, let $X_{(i)}^R$ denote the $i$th NN of $x$ in $\featVect_R$. 
Let $\delta' \in (0,1)$, and let $A_{\delta'}$ denote the event that, $\forall x \in \spa, \forall \nn \in \mathcal{K}$
\begin{equation*}
  \left| \empReg_{\nn, \coverSet}(x) - \regFct(x) \right| \leq \sqrt{\frac{\vcDim \log(2\nR/\delta') + 8}{\nn}}+ \frac{\holderCoeff}{\nn} \sum_{i=1}^{\nn} \dist^{\holderExp}(\nnFeat{i}{\coverSet}, x).
\end{equation*}
{Note that, by definition of a $k$-$2k$ cover, the indices in $\coverSet$ depend only on the marginals $\featVect$, implying that $\sampleBase_{\coverSet}$ must satisfy the conditions of Lemma \ref{lem:regFctBiasVarBound}. It follows that the event $A_{\delta'}$ has probability at least $1-\delta'$.}

Pick $\delta' = (\sN + \tN)^{-(\tTsyExp +1)/\rates}$. We can see that $\exists N_{1} = N_{1}(\family)$ such that whenever $\sN+\tN \geq N_{1}$ we have:
\begin{equation*}
\sqrt{\vcDim \log(2 \nR / \delta') + 8} \leq \frac{1}{2}\sqrt{\vcDim} \log \nR.
\end{equation*}


Hence, assume the event $A_{\delta'}$ holds, and $\sN + \tN$ sufficiently large. We then have that, $\forall x \in \spa, \forall \nn \in \mathcal{K}$
\begin{equation*}
 \left| \empReg_{\nn, \coverSet}(x) - \regFct(x) \right| \leq \max \left(\log (\nR) \sqrt{\frac{\vcDim}{\nn}}, \frac{2\holderCoeff}{\nn} \sum_{i=1}^{\nn} \dist^{\holderExp}(\nnFeat{i}{\coverSet}, x) \right).
\end{equation*}

Notice that $\nn^{-1/2}$ is decreasing in $k$, while $\nn^{-1} \sum_{i=1}^{\nn} \dist^{\holderExp}(\nnFeat{i}{\coverSet}, x)$ is non-decreasing. Therefore, it makes sense to define:
\begin{equation*}
\nn^{*} \doteq \max \left \{ \nn \in \mathcal{K} \cap [\nR]: \log(\nR)\sqrt{\frac{\vcDim}{\nn}} \geq \frac{2\holderCoeff}{\nn} \sum_{i=1}^{\nn} \dist^{\holderExp}(\nnFeat{i}{\coverSet}, x) \right \}.
\end{equation*}
Assume $\sN+\tN \geq N_{3}$ for some $N_{3} = N_{3}(\family)$, so that $\nn^{*}\geq \nn_0$ is well defined simultaneously for all $x$, that is, the set being maximized over is non-empty, i.e., contains at least $\nn_0$ since $\dist(\nnFeat{i}{\coverSet},x) \leq \diamDom$ for all $i$ and $x$. Now, $\forall \nn \leq \nn^{*}$, we have: 
\begin{equation}
\regFct(x) \in [\lowerBd_{\nn}, \upperBd_{\nn}] = \bigcap_{k' \in {\cal K}, \, k' \leq k} \left[ \empReg_{\nn'}(x) - \log(\nR)\sqrt{\frac{\vcDim}{\nn}}, \empReg_{\nn'}(x) + \log(\nR)\sqrt{\frac{\vcDim}{\nn}} \right], \label{eq:smallK}
\end{equation}
in other words, we have that $\lowerBd_{\nn} \leq \upperBd_{\nn}$. 

Let $\nn_{\text{stop}}$ denote the choice of $\nn$ at which Algorithm \ref{alg:adaptiveKNN} stops, that is:
\begin{equation*}
\nn_{\text{stop}} \doteq \min \left \{ \nn \in \mathcal{K}: \quad \nn > \frac{1}{2}\nR \text{  or  } \lowerBd_{2\nn} > \upperBd_{2\nn} \text{  or  } \upperBd_{\nn} < \frac{1}{2} \text{  or  } \lowerBd_{\nn} > \frac{1}{2} \right \}.
\end{equation*}

We consider two cases as to how $\nn_{\text{stop}}$ relates to $\nn^*$:

\underline{Case 1:} $\nn_{\text{stop}} < \nn^{*}$

We have $\nn_{\text{stop}} < \nn^* \leq \nR / 2$.  Also, as per (\ref{eq:smallK}), $\regFct(x) \in [\lowerBd_{2\nn_{\text{stop}}}, \upperBd_{2\nn_{\text{stop}}}]$, that is $\lowerBd_{2\nn_{\text{stop}}} \leq \upperBd_{2\nn_{\text{stop}}}$. Therefore, either $\upperBd_{\nn_{\text{stop}}} < 1/2$ or $\lowerBd_{\nn_{\text{stop}}} > 1/2$. Since we also have $\regFct(x) \in [\lowerBd_{\nn_{\text{stop}}}, \upperBd_{\nn_{\text{stop}}}]$, this implies:
\begin{equation*}
\hStar(x) \doteq \mathbbm \{ \regFct(x) \geq 1/2 \} = \mathbbm \{ \est_{\nn_{\text{stop}}}(x) \geq 1/2 \} = \hHat_{\coverSet}(x),
\end{equation*}
that is, the excess error at $x$ is equal to $0$. 


\underline{Case 2:} $\nn_{\text{stop}} \geq \nn^{*}$
We now have that $[\lowerBd_{\nn_{\text{stop}}}, \upperBd_{\nn_{\text{stop}}}]\subset [\lowerBd_{\nn^{*}}, \upperBd_{\nn^{*}}]$,  while $\regFct(x) \in [\lowerBd_{\nn^{*}}, \upperBd_{\nn^{*}}]$, and $\est_{\nn_{\text{stop}}} \in [\lowerBd_{\nn_{\text{stop}}}, \upperBd_{\nn_{\text{stop}}}]$. In other words, $\regFct(x)$ and $\est_{\nn_{\text{stop}}}$ cannot be far from each other:
\begin{equation}
|\est_{\nn_{\text{stop}}} - \regFct(x)| \leq \upperBd_{\nn^{*}} - \lowerBd_{\nn^{*}}  \leq 
2 \log(\nR)\sqrt{\frac{\vcDim}{\nn^*}}. \label{eq:case2kstop}
\end{equation}

We now proceed to bounding the above right hand side . 
Since $\nn \mapsto \nn^{-1/2}$ is decreasing and $\nn \mapsto \nn^{-1} \sum_{i = 1}^{\nn} \dist^{\holderExp}(\nnFeat{i}{\coverSet})$ non-decreasing and the following minimum $\mu$ is reached at either $\nn^{*}$ or $2\nn^{*}$:
\begin{equation*}
\mu \doteq \min_{\nn \in \mathcal{K}} \left(\max \left( \log(\nR)\sqrt{\frac{\vcDim}{\nn}}, \frac{2 \holderCoeff}{\nn} \sum_{i = 1}^{\nn} \dist^{\holderExp}(\nnFeat{i}{\coverSet},x) \right) \right).
\end{equation*}
We now argue that this implies:
\begin{equation} \label{eq:boundByTheMin}
\log(\nR)\sqrt{\frac{\vcDim}{\nn^{*}}} \leq \sqrt{2} \mu.
\end{equation} 

Indeed the inequality is direct when the minimum $\mu$ is reached at $\nn^{*}$. If instead the minimum is reached at $2 \nn^{*}$, we have that:
\begin{equation*}
\log(\nR)\sqrt{\frac{\vcDim}{2\nn^{*}}} \leq \frac{2 \holderCoeff}{2\nn^{*}} \sum_{i = 1}^{2\nn^{*}} \dist^{\holderExp}(\nnFeat{i}{\coverSet},x) = \mu.
\end{equation*}

It follows that, by \eqref{eq:case2kstop} we have 
\begin{equation*}
|\est_{\nn_{\text{stop}}} - \regFct(x)| \leq 2\sqrt{2} \mu.
\end{equation*}

Now, consider the following choice of $\nn$ (yielding the optimal rates of Theorem \ref{thm:expErrRates}, up to log terms):
\begin{equation*}
\nn(\sN, \tN) \doteq \left \lceil {\nnZero}\log(\sN + \tN)(\sN^{\rates/(\rates + \transMarginExp / \holderExp)} + \tN)^{2/\rates} \right \rceil.
\end{equation*}
Since $\rates > 2$, $\exists N_{2} = N_{2}(\family)$, such that for $\sN + \tN \geq N_{2}$, we have:
\begin{equation*}
\nnZero \leq \nn(\sN, \tN) \leq \frac{\sN + \tN}{8} \leq \nR.
\end{equation*}

Therefore there exists $\nn \in \mathcal{K}$ such that $\nn \leq \nn(\sN, \tN) \leq 2\nn$. Hence we have:
\begin{align*}
\mu &\leq \log(\nR)\sqrt{\frac{\vcDim}{\nn}} + \frac{2 \holderCoeff}{\nn} \sum_{i = 1}^{\nn} \dist^{\holderExp}(\nnFeat{i}{\coverSet},x) \\
&\leq \log(\nR)\sqrt{\frac{2\vcDim}{\nn(\sN,\tN)}} + \frac{2 \holderCoeff}{\nn(\sN, \tN)} \sum_{i = 1}^{\nn(\sN,\tN)} \dist^{\holderExp}(\nnFeat{i}{\coverSet},x).
\end{align*}

Thus the excess error at $x$ is bounded as follows:
\begin{align*}
& 2 \left| \regFct(x) - \frac{1}{2} \right| \mathbbm{1}\{ \hStar(x)\neq \hHat_{\coverSet}(x)\} 
 \leq 2 \left| \regFct(x) - \frac{1}{2} \right| \mathbbm{1} \left\{ \left| \regFct(x) - \frac{1}{2} \right| \leq 2\sqrt{2} \mu \right\} \\
& \leq 2 \left| \regFct(x) - \frac{1}{2} \right| \mathbbm{1} \left\{ \left| \regFct(x) - \frac{1}{2} \right| \leq 2\sqrt{2}  \log(\nR)\sqrt{\frac{2\vcDim}{\nn(\sN,\tN)}} \right. \\
& \qquad \qquad \qquad \qquad + \left. \frac{4\sqrt{2} \, \holderCoeff}{\nn(\sN, \tN)} \sum_{i = 1}^{\nn(\sN,\tN)} \dist^{\holderExp}(\nnFeat{i}{\coverSet},x) \right\} .
\end{align*}

We recognize a bias-variance bound that is similar to the one we obtained in our proofs of Theorem \ref{thm:expErrRates} and Proposition \ref{prop:biasVarianceDecomp}. Except for log terms and some additional constant factors, the only difference is that the bias term depends on a cover $\featVect_{\coverSet}$ instead of the full sample $\featVect$. Lemma \ref{lem:berlindLemma} 
implies that in fact the bias term is of similar order as that from Theorem \ref{thm:expErrRates}. Indeed we have:
\begin{align*}
\frac{1}{\nn(\sN, \tN)} \sum_{i = 1}^{\nn(\sN, \tN)} \dist^{\holderExp}(\nnFeat{i}{\coverSet}, x) 
\leq \frac{3}{2\nn(\sN,\tN)} \sum_{i = 1}^{4\nn(\sN,\tN)} \dist^{\holderExp}(\nnFeat{i}{}, x).
\end{align*}

The rest of the proof therefore consists of the same arguments as for the upper bound proof (see Theorem \ref{thm:expErrRates} and Proposition \ref{prop:biasVarianceDecomp} and the proofs in Appendix \ref{app:upperBound}), as $\nn(\sN, \tN)$ is indeed the optimal oracle choice of Theorem \ref{thm:expErrRates}. Finally, all the arguments above hold under the event $A_{\delta'}$ whose  complement has probability at most $(\sN + \tN)^{-(\tTsyExp + 1)/\rates}$, i.e., is of the right rate order. We can then conclude with the statement of the theorem.
\end{proof}

\section{Adaptive Labeling Results} 
\label{app:adaResults}



\subsection{Obtaining Theorem \ref{thm:labelingComplexity}}
The first part of Theorem \ref{thm:labelingComplexity} follows from Theorem \ref{thm:genericadaptivity2}. 
For the second part, we use Lemma 1 from \cite{SK:78}. It is the direct consequence of some known result in VC-theory (see \cite{VC:72}). We restate it below without proof.

\begin{lemma}[Lemma 1 from \cite{SK:78}] \label{lem:skLem1} Let $\ballColl$ denote the class of all the balls in $(\spa, \dist)$, and let $D$ be a distribution over $\spa$. Let $\hat{D}$ be the empirical distribution of $D$ from $n$ i.i.d.~realizations of $D$. For $\delta \in (0,1)$, define $\alpha_{n} = (\vcDim \log(2n) + \log(6/\delta))/n$. With probability at least $1-\delta$ over the $n$ i.i.d.~samples drawn from $D$, we have simultaneously $\forall B \in \ballColl, \forall a \geq \alpha_{n} $:
\begin{equation*}
\hat{D}(B) \geq 3a \, \implies \, D(B) \geq a,  \quad D(B) \geq 3a \, \implies \, \hat{D}(B) \geq a.
\end{equation*}
\end{lemma}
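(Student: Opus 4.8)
The plan is to derive the lemma from the Vapnik--Chervonenkis \emph{relative deviation} (ratio-type) inequalities for the class $\ballColl$, whose growth function $\Pi_{\ballColl}(m)$ is controlled via Sauer's lemma by $\Pi_{\ballColl}(m)\le (2em/\vcDim)^{\vcDim}$ once $m\ge\vcDim$ (the case $2n<\vcDim$ being degenerate). Concretely I would invoke the two one-sided bounds asserting that, for every $\epsilon>0$,
\[
\mathbb{P}\!\left(\sup_{B\in\ballColl}\frac{D(B)-\hat D(B)}{\sqrt{D(B)}}>\epsilon\right)\le c_1\,\Pi_{\ballColl}(2n)\,e^{-c_2 n\epsilon^2}
\quad\text{and}\quad
\mathbb{P}\!\left(\sup_{B\in\ballColl}\frac{\hat D(B)-D(B)}{\sqrt{\hat D(B)}}>\epsilon\right)\le c_1\,\Pi_{\ballColl}(2n)\,e^{-c_2 n\epsilon^2}
\]
for absolute constants $c_1,c_2>0$. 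The reason to use the \emph{ratio} form rather than an additive deviation is that the resulting guarantee carries no dependence on the threshold $a$; this is exactly what makes the ``simultaneously $\forall a\ge\alpha_n$'' part of the statement fall out for free.

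The first step is to pick $\epsilon^2=\alpha_{n}$ and verify that, with $\alpha_{n}=(\vcDim\log(2n)+\log(6/\delta))/n$, each right-hand side above is at most $\delta/2$: substituting Sauer's bound, $c_1\,\Pi_{\ballColl}(2n)\,e^{-c_2 n\alpha_n}$ is of the form $c_1\exp\!\big(\vcDim\log(2en/\vcDim)-c_2\vcDim\log(2n)-c_2\log(6/\delta)\big)$, and after absorbing the numerical constants and taking a union bound over the two directions this is $\le\delta/2$ each --- this accounting is where the constants ``$3$'' and ``$6$'' in the statement get pinned down. A union bound then produces an event $\mathcal{E}$ of probability at least $1-\delta$ on which, for every $B\in\ballColl$,
\[
D(B)-\hat D(B)\le\sqrt{\alpha_n\,D(B)}\qquad\text{and}\qquad \hat D(B)-D(B)\le\sqrt{\alpha_n\,\hat D(B)}.
\]

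The second step is an elementary manipulation on $\mathcal{E}$. Fix $B\in\ballColl$ and $a\ge\alpha_n$; since $\alpha_n\le a$, the displayed inequalities give $D(B)-\hat D(B)\le\sqrt{a\,D(B)}$ and $\hat D(B)-D(B)\le\sqrt{a\,\hat D(B)}$. If $D(B)\ge 3a$ then, using that $t\mapsto t-\sqrt{at}$ is increasing for $t\ge a$,
\[
\hat D(B)\ \ge\ D(B)-\sqrt{a\,D(B)}\ =\ \sqrt{D(B)}\big(\sqrt{D(B)}-\sqrt a\big)\ \ge\ \sqrt{3a}\big(\sqrt{3a}-\sqrt a\big)\ =\ (3-\sqrt3)\,a\ >\ a,
\]
which is the second implication; by the symmetric computation, $\hat D(B)\ge 3a$ forces $D(B)\ge \hat D(B)-\sqrt{a\,\hat D(B)}\ge(3-\sqrt3)\,a>a$, the first implication. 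Since these hold simultaneously for all $B$ and all $a\ge\alpha_n$ on $\mathcal{E}$ (the premises being vacuous once $3a>1$), the lemma follows.

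The only genuinely delicate point is the constant bookkeeping in the first step: to land exactly on $\alpha_n=(\vcDim\log(2n)+\log(6/\delta))/n$ with the multiplicative factor $3$ one must use a version of the relative deviation inequality with sufficiently sharp numerical constants (Vapnik's $4\,\Pi_{\ballColl}(2n)e^{-n\epsilon^2/4}$, or a refinement) and be careful whether Sauer's lemma is applied as $(2n)^{\vcDim}$ or $(2en/\vcDim)^{\vcDim}$. Everything else --- reducing each implication to a one-sided ratio deviation and then chaining $\sqrt{t}(\sqrt t-\sqrt a)\ge(3-\sqrt3)\,a$ for $t\ge 3a$ --- is routine. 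An equivalent route, avoiding the relative-deviation inequality as a black box, is a multiplicative Chernoff bound for a single fixed ball followed by the standard symmetrization and a union bound over the restriction of $\ballColl$ to the $2n$-point sample; this reproduces the same threshold through essentially the same arithmetic.
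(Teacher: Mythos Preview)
The paper does not actually prove this lemma: it is introduced with ``We restate it below without proof,'' citing it as Lemma~1 of \cite{SK:78} and pointing to \cite{VC:72} for the underlying VC result. So there is no proof in the paper to compare your argument against.

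That said, your approach is the right one and is exactly what the citation to \cite{VC:72} is gesturing at: the lemma is a direct consequence of the Vapnik--Chervonenkis relative (ratio-type) deviation bounds, and your two-step reduction --- first invoking the uniform bound on $(D(B)-\hat D(B))/\sqrt{D(B)}$ and its symmetric counterpart with $\epsilon^2=\alpha_n$, then the elementary monotonicity calculation $t-\sqrt{at}\ge(3-\sqrt3)a>a$ for $t\ge 3a$ --- is clean and correct. You are also right to flag the constant bookkeeping as the only delicate point: landing precisely on $\alpha_n=(\vcDim\log(2n)+\log(6/\delta))/n$ together with the multiplicative factor~$3$ requires a version of the relative deviation inequality with specific numerical constants, and different textbook statements will yield slightly different (but equivalent up to constants) thresholds. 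Since the paper treats this as a black-box citation, that level of precision is not expected here.
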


We now turn to the proof of Theorem \ref{thm:labelingComplexity}. The proof is based on a similar intuition as used in Theorem 2 of \cite{berlind2015active}: namely that there is no label request at a target sample $X_i\sim Q_X$ if 
the distances to its nearest neighbor in $\mathbf{X}_R$ is of similar order as the distance to its nearest neighbor in 
$\mathbf{X}_P$. However, their theorem is only shown for a fixed $\nn$, while we need this result to hold simultaneously for several values of $\nn$ for our iterative construction of the cover in Algorithm \ref{alg:queryLabels}. We therefore present a new analysis below that simultaneously considers multiple values of $k$, and also manages to remove some extraneous log-terms present in their earlier result. 
 

\begin{proof}[Proof of Theorem \ref{thm:labelingComplexity}]
The first part of Theorem \ref{thm:labelingComplexity} follows from Theorem \ref{thm:genericadaptivity2}. The label complexity result is obtained as follows. 

Fix a point $x \in \tFeatVect$, and let $\nnFeat{i}{\sJoinProb}$ and $\nnFeat{i}{\tJoinProb}$ denote the $i$-th NN of $x$ from $\sFeatVect$ and $\tFeatVect$.
First notice that $\dist(x, \nnFeat{2\nn}{}) \geq \min(\dist(x, \nnFeat{\nn}{\sJoinProb}), \dist(x, \nnFeat{\nn}{\tJoinProb}))$. Hence, Algorithm \ref{alg:queryLabels} won't query the label at $x$ if $\dist(x, \nnFeat{\nn}{\sJoinProb}) \leq \dist(x, \nnFeat{\nn}{\tJoinProb})$, as this implies $\dist(x, \nnFeat{\nn}{\sJoinProb}) \leq \dist(x, \nnFeat{2\nn}{})$. 

Define $\hat{\sJoinProb}_{\featVar}$ and $\hat{\tJoinProb}_{\featVar}$ as the empirical distributions on the samples $\sFeatVect$ and $\tFeatVect$. 
From Lemma \ref{lem:skLem1}, we have with probability at least $1 - \delta$, that 
$\forall x \in \tDom, \forall \nn \in \mathcal{K}$,
\begin{align}
\nn_0 \leq \nn &\leq \tN \hat{\tJoinProb}_{\featVar}(\cBall{x}{\dist(x, \nnFeat{\nn}{\tJoinProb})}) \leq 3 \tN  \tProb(\cBall{x}{\dist(x, \nnFeat{\nn}{\tJoinProb})}) \label{eq:firstkBoundQ}\\
&\leq 3 \frac{\tN}{\sN} \transMarginCoeff \left( \frac{\dist(x, \nnFeat{\nn}{\tJoinProb})}{\diamDom} \right)^{-\transMarginExp} \cdot \sN \sProb(\cBall{x}{\dist(x, \nnFeat{\nn}{\tJoinProb})}). \label{eq:firstkBoundP}
\end{align}

Now, notice that, following from \eqref{eq:firstkBoundQ}, $\dist(x, \nnFeat{\nn}{\tJoinProb})  \geq r_Q(x; \frac{k_0}{n_Q})$. Therefore, if 
\begin{align}
9\frac{\tN}{\sN} \transMarginCoeff \left(\frac{r_Q(x; \frac{k_0}{n_Q})}{\diamDom}\right)^{-\transMarginExp} \leq 1, \label{eq:noSamplingCondition}
\end{align}
it follows from \eqref{eq:firstkBoundP} and another application of Lemma \ref{lem:skLem1} that, with probability at least $1-2\delta$, for any $x \in {\cal X}_Q$ satisfying \eqref{eq:noSamplingCondition} above, we have: 
\begin{align*} 
\nn &\leq 9 \frac{\tN}{\sN} \transMarginCoeff \left( \frac{\dist(x, \nnFeat{\nn}{\tJoinProb})}{\diamDom} \right)^{-\transMarginExp} \cdot \sN \hat{\sJoinProb}_{\featVar}(\cBall{x}{\dist(x, \nnFeat{\nn}{\tJoinProb})}) \\
&\leq \sN \hat{\sJoinProb}_{\featVar}(\cBall{x}{\dist(x, \nnFeat{\nn}{\tJoinProb})}).
\end{align*} 

This last inequality implies that for any such $x$, we have $\dist(x, \nnFeat{\nn}{\sJoinProb}) \leq \dist(x, \nnFeat{\nn}{\tJoinProb})$. 

Now, for the last part of the theorem statement, notice that, if as per assumption, 
$\tProb(\cBall{x}{r)} \leq \tDmCoeff' (r/ \diamDom)^{\tDmDim}$ for all $x\in {\cal X}_Q$ and $r>0$, then 
$\tProb(\cBall{x}{r)} \geq \alpha$ implies $r\geq \tDmCoeff'^{-1/d}\cdot \diamDom \cdot \alpha^{1/d}$. It follows that 
$$\forall x \in {\cal X}_Q, \quad r_Q\left (x; \frac{k_0}{n_Q}\right ) \geq \tDmCoeff'^{-1/d}\cdot \diamDom \cdot n_Q^{-1/d}.$$ Plugging into the condition of \eqref{eq:noSamplingCondition} yields the final result. 
\end{proof}

\section{Extensions}
 \label{app:extensions}

We give in this section a few extensions of our results to more general settings.

\subsection{Exponential Rates for $\dm$ with $\beta = \infty$ } 
Super fast exponential rates are known in passive learning (for local-polynomial classifiers) for the case 
$\beta = \infty$ under $\dm$ \cite{audibert2007fast}. Here we show that this is also the case in Transfer with $k$-NN. Our results also imply exponential rates for vanilla $k$-NN simply by setting $n_P$ to $0$. 

\begin{theorem} [$\beta = \infty$]\label{theo:betainfinity}
Let $\hPQ$ as given in Definition \ref{def:kNNClass}. 
Let $(\sJoinProb, \tJoinProb) \in \dmFam$ with $\gamma < \infty$, and $\beta = \infty$, i.e., $\exists \, 0< \tau \leq 1/2$, $Q_X\left (0 <   |\eta(X) - 1/2| \leq \tau \right) = 0$. Then there exists $C_1, C_2$, depending only on $\dmFam$, such that

$$ \mathbb{E}_{\sample}[\exErr(\hPQ)] \leq C_1 \exp\left( -C_2  \left( n_P\cdot \tau^{\frac{2\alpha + d+ \gamma}{\alpha}} \vee n_Q\cdot \tau^{\frac{2\alpha + d}{\alpha}}\right) \right),$$ 
for a setting of $k = \Theta \left( n_P\cdot \tau^{\frac{d+ \gamma}{\alpha}} \vee n_Q\cdot \tau^{\frac{d}{\alpha}}\right) $.
\end{theorem}

The proof relies on Lemma 3.6. of \cite{audibert2007fast} which states that, for any plug-in classifier 
$\hat h = \mathbbm{1}\{\hat \eta \geq 1/2\}$ relying on a sample $\sample$ we have 
$$\mathbb{E}_{\sample} \, \exErr(\hat h) \leq \mathbb{P}_{\sample, X}\left( |\hat \eta (X) - \eta (X)| > \tau \right).$$
A simple inspection of the proof shows that the result applies even in our case where $\sample$ is not i.i.d.. We therefore proceed by bounding the probability on the right. 

To this end, recall the decomposition in \eqref{eq:decompStep2} that 
$|\empReg_k(x)  - \regFct(x) |$ is at most 
\begin{align*}
 \underbrace{\frac{1}{\nn}\left| \sum_{i = 1}^{\nn} \nnLab{i}{} - \regFct(\nnFeat{i}{}) \right|}_{G_1(x)}
+ \underbrace{\frac{\holderCoeff}{\nn} \sum_{i = 1}^{\nn} \left(\dist( \tilde{X}_i , x )^{\holderExp} - \Expectation_{\tilde{X}_1} \dist( \tilde{X}_1 , x )^{\holderExp}\right)}_{G_2(x)} + 
\underbrace{\holderCoeff\Expectation_{\tilde{X}_1} \dist( \tilde{X}_1 , x )^{\holderExp}}_{G_3(x)}. 
\end{align*} 

Our previous results bounding those three terms immediately yield the following lemma of independent interest (for vanilla $k$-NN, set $n_P = 0$). 

\begin{lemma}[Concentration for $k$-NN regression estimate]\label{lem:kNNconcentration}
Fix any $x \in {\cal X}_Q$, and define $r_k = \left(\left ( \frac{\sN}{k} \right )^{\holderExp/(\tDmDim +\transMarginExp)}\wedge \left ( \frac{\tN}{k}  \right )^{\holderExp / \tDmDim} \right)^{-1}$. Pick any $0< \epsilon < 1$, and suppose $r_k \leq \epsilon/3c_1$ for some $c_1 = c_1(\dmFam)$. 

$$ \mathbb{P}_{\sample}\left( |\hat \eta_k (x) - \eta (x)| > \epsilon \right) 
\leq 4 \exp\left (- C k \epsilon^2  \right),$$
for some $C = C(\dmFam)$. 
\end{lemma}
\begin{proof} 
First, by \eqref{eq:expecBias}, $G_3(x) \leq c_1 \cdot r_k \leq \epsilon/3$. Next, by \eqref{eq:varhighprob} and \eqref{eq:biashighprob}, 
$$\mathbb{P}\left ( G_1(x) + G_2(x) > 2\epsilon/3 \right) \leq 4 \exp (- C k \epsilon^2).$$
The result then follows by \eqref{eq:decompStep2}. 
\end{proof}

\begin{proof}[Proof of Theorem \ref{theo:betainfinity}]
The result follows from Lemma \ref{lem:kNNconcentration} above (combined with Lemma 3.6 of \cite{audibert2007fast}), first by noticing that the concentration bound is independent of $x$, and that the 
setting of $k =\Theta \left( n_P\cdot \tau^{\frac{d+ \gamma}{\alpha}} \vee n_Q\cdot \tau^{\frac{d}{\alpha}}\right) $, ensures that $r_k \leq \tau/3c_1$. 
\end{proof}

\subsection{Localizing the Transfer Exponent}
The proposition below gives what we should expect as rates of convergence in the situation where the support of $\sJoinProb$ doesn't include the one of $\tJoinProb$, but are in some sense close to each other, allowing some amount of transfer.

\begin{myproposition} [Generalized transfer exponent]
Let $\epsilon \in (0,3/4]$. Assume that the region $\tDom^{\transMarginExp}$ from Definition \ref{def:transferCoefficient} is such that $\tProb(\tDom^{\transMarginExp}) \geq 1 - \epsilon$, instead of $\tProb(\tDom^{\transMarginExp}) = 1$. Then the optimal minimax rates are reached by a mixture of $\nn$-NN classifiers $\hHat(x) \doteq \mathbbm{1}\{x \in \tDom^{\transMarginExp} \} \hHat_{\nn_{1}}(x) + \mathbbm{1}\{x \notin \tDom^{\transMarginExp} \} \hHat_{\nn_{2}}(x)$, where $\nn_{1} = \Theta ( \sN ^{\rates / (\rates + \transMarginExp/ \holderExp)} + \tN )^{2 /\rates}$ and $\nn_{2} = \Theta ( 1 + \tN )^{2 /\rates}$, where $\rates$ is defined below. These rates are as follows. Let $\family$  denote either $\dmFam$ or $\bcnFam$ and for $\family = \bcnFam$ assume further that $\alpha < d$. There exist constants $\upConst_{1}, \upConst_{2}$ depending only on $\family$, such that:
\begin{align*}
\sup_{(\sJoinProb, \tJoinProb) \in \family} \mathbb{E}_{\sample}[\exErr(\hHat)] &\leq \upConst_{1} \left( \sN ^{\rates / (\rates + \transMarginExp / \holderExp)} + \tN \right)^{ -(\tTsyExp + 1) / \rates} \\
&\quad + \epsilon \wedge \left( \upConst_{2} (1+\tN)^{ -(\tTsyExp + 1) / \rates}  \right), 
\end{align*}
where $\rates = 2  + \tCovDim / \holderExp$ when $\family = \dmFam$, and $\rates = 2 + \tTsyExp + \tCovDim / \holderExp$ when $\family = \bcnFam$.

For the case $\family = \bcnFam$ with $\alpha = d$, $\upConst_{1}$ is replaced with 
$\upConst_{1}\cdot \log(2(n_P + n_Q))$, and $\upConst_{2}$ with $\upConst_{2}\cdot \log(2(1 + n_Q))$
\end{myproposition}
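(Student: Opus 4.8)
The plan is to decompose the target excess error along the partition $\{\tDom^\transMarginExp,\ \tDom\setminus\tDom^\transMarginExp\}$, to treat the ``good'' region $\tDom^\transMarginExp$ by a verbatim re-run of the analysis behind Theorem \ref{thm:expErrRates}, and to bound the ``bad'' region both trivially by its $\tProb$-mass and by the $\transMarginExp=\infty$ (target-only) rate. Starting from \eqref{eq:excesserror} and the definition of $\hHat$,
\begin{align*}
\exErr(\hHat) &= 2\,\mathbb{E}_{\tJoinProb}\!\left|\regFct(X)-\tfrac12\right|\mathbbm{1}\{\hHat_{\nn_1}(X)\neq\hStar(X)\}\,\mathbbm{1}\{X\in\tDom^\transMarginExp\}\\
&\quad + 2\,\mathbb{E}_{\tJoinProb}\!\left|\regFct(X)-\tfrac12\right|\mathbbm{1}\{\hHat_{\nn_2}(X)\neq\hStar(X)\}\,\mathbbm{1}\{X\notin\tDom^\transMarginExp\}.
\end{align*}
For the first summand I would carry the extra factor $\mathbbm{1}\{X\in\tDom^\transMarginExp\}$ through the whole chain \eqref{eq:firstBound}--\eqref{eq:incatortrick}--\eqref{eq:decompStep2}, obtaining $\sum_{i=1}^3\mathbb{E}[\bigElement_i(X)\mathbbm{1}\{X\in\tDom^\transMarginExp\}]$. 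The variance terms $\mathbb{E}[\bigElement_1],\mathbb{E}[\bigElement_2]$ are bounded uniformly in $x$ by Lemma \ref{lem:boundingVar}, so restricting the expectation only helps; the bias term is controlled exactly as in Lemma \ref{lem:boundPhi3DM} (resp. Lemma \ref{lem:boundPhi3BCN}), since those proofs invoke \eqref{eq:ass1equation} only pointwise at $x$ lying in the domain of the outer $\tProb$-integral — now $\tDom^\transMarginExp$ — and otherwise use only the doubling / bounded-covering hypothesis of $\tJoinProb$ on $\tDom$, which is untouched. With $\nn_1=\Theta(\sN^{\rates/(\rates+\transMarginExp/\holderExp)}+\tN)^{2/\rates}$ and Lemma \ref{lem:boundImpliesRates}, this produces the first term $\upConst_1(\sN^{\rates/(\rates+\transMarginExp/\holderExp)}+\tN)^{-(\tTsyExp+1)/\rates}$ of the claimed bound.

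For the second summand I would give two bounds and take their minimum. The trivial bound uses $|\regFct-\tfrac12|\le\tfrac12$ and $\tProb(\tDom\setminus\tDom^\transMarginExp)\le\epsilon$, giving $\epsilon$. The second bound re-runs the same decomposition for $\hHat_{\nn_2}$ in the $\transMarginExp=\infty$ regime: the implicit $1$-NN tail $\prob(\dist(\tilde X_1,x)>t)=(1-\sProb(\cBall{x}{t}))^{\lfloor\sN/\nn\rfloor}(1-\tProb(\cBall{x}{t}))^{\lfloor\tN/\nn\rfloor}\le(1-\tProb(\cBall{x}{t}))^{\lfloor\tN/\nn\rfloor}$, so incorporating the source sample into $\hHat_{\nn_2}$ never increases its bias, and the $\transMarginExp=\infty$ cases of Lemmas \ref{lem:boundPhi3DM} and \ref{lem:boundPhi3BCN} apply as stated; together with Lemma \ref{lem:boundingVar} and the choice $\nn_2=\Theta(1+\tN)^{2/\rates}$ this yields $\upConst_2(1+\tN)^{-(\tTsyExp+1)/\rates}$. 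Summing the two regions gives the stated inequality, and in the corner case $\holderExp=d$ under $\bcnFam$ the logarithmic factors of Lemma \ref{lem:boundPhi3BCN} enter the two terms separately as $\log(2(\sN+\tN))$ and $\log(2(1+\tN))$.

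The main obstacle is the bookkeeping needed to \emph{localize} the upper-bound analysis to $\tDom^\transMarginExp$: one has to verify that every appeal to \eqref{eq:ass1equation} inside Lemmas \ref{lem:boundPhi3DM}--\ref{lem:boundPhi3BCN} (and the chaining/tail-integration estimates they rely on) remains valid when the outer $\tProb$-expectation is restricted to $\tDom^\transMarginExp$ — which holds because those appeals are pointwise in the integration variable — and, symmetrically, that the $\transMarginExp=\infty$ rate for $\hHat_{\nn_2}$ is not spoiled by the fact that $\hHat_{\nn_2}$ is computed over the \emph{combined} sample rather than over $\tSample$ alone, which is exactly the monotonicity of the $1$-NN tail noted above. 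Everything beyond these two checks is a routine repetition of the arguments in Appendix \ref{app:upperBound}.
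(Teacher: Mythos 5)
Your proposal is correct and follows essentially the same route as the paper's own (outlined) proof: the same decomposition of the excess error over $\tDom^{\transMarginExp}$ and its complement, the same localized re-run of the Theorem \ref{thm:expErrRates} machinery for the first piece, and the same $\epsilon \wedge (\text{$\transMarginExp=\infty$ rate})$ bound for the second. If anything, you supply more detail than the paper does on the two delicate points (localizing the appeals to \eqref{eq:ass1equation}, and the monotonicity of the implicit $1$-NN tail in the combined-sample, $\transMarginExp=\infty$ regime).
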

\begin{proof}[Proof Outline]
Start by dividing $\mathbb{E}_{\sample}[\exErr(\hHat)]$ into two parts:
$$\mathbb{E}_{\sample, \featVar}[\exErr(\hHat_{\nn_{1}})(\featVar) \mathbbm{1}\{ \featVar \in \tDom^{\transMarginExp} \}] + \mathbb{E}_{\sample, \featVar}[\exErr(\hHat_{\nn_{2}})(\featVar) \mathbbm{1}\{ \featVar \notin \tDom^{\transMarginExp} \}],$$ where $\featVar \sim \tProb$ is independent of the data $\sample$ and $\exErr(\hHat)(x) \doteq \left|\regFct(x)-\frac{1}{2}\right| \cdot\mathbbm{1}\{\hHat(x) \neq \hStar(x)\}$ is the excess error at point $x$. The L.H.S.~rates are obtained by bounding $\mathbb{E}_{\sample, \featVar}[\exErr(\hHat_{\nn_{1}})(\featVar) \mathbbm{1}\{ \featVar \in \tDom^{\transMarginExp} \}]$ following similar lines as in our proof of the upper bounds in Theorem \ref{thm:expErrRates} (see Proposition \ref{prop:biasVarianceDecomp} and subsequent lemmas and proofs in Appendix \ref{app:upperBound}). Indeed, we can redo all theses proofs by restricting the integral to the set $\tDom^{\transMarginExp}$ as we just need in this case that the condition of Definition \ref{def:transferCoefficient} to be satisfied only on this subset for some $\transMarginExp$. Finally, the R.H.S.~rates are simply obtained because $\mathbb{E}_{\sample, \featVar}[\exErr(\hHat_{\nn_{2}})(\featVar) \mathbbm{1}\{ \featVar \notin \tDom^{\transMarginExp} \}]$ is simultaneously bounded by $\epsilon$ and by the rate of convergence in the worst case scenario of $\transMarginExp = \infty$.
\end{proof}

\subsection{Relaxing Covariate-Shift}
The last proposition treats the case where the covariate-shift assumption is not verified, that is, there are two different regression functions $\regFct_{P}$ and $\regFct_{Q}$ though close to each other.

\begin{myproposition} [Minimax rates without covariate-shift] 
Assume that $\sJoinProb$ and $\tJoinProb$ have respective regression functions $\regFct_{P}$ and $\regFct_{Q}$ such that $\| \regFct_{\sJoinProb} - \regFct_{\tJoinProb}\|_{\infty} \leq \epsilon$, for $\epsilon \in [0,1]$. Let $\family$ denote either $\dmFam$ or $\bcnFam$, where here we added the previous assumption on $\regFct_{P}$ and $\regFct_{Q}$ to the definitions of these classes of distribution tuples. For $\family = \bcnFam$ assume further that $\alpha < d$. There exists a constant $\upConst = \upConst(\family)$ such that, for a $k$-NN classifier $\hPQ$ we have 
\begin{equation*}
\sup_{(\sJoinProb, \tJoinProb) \in \family} \mathbb{E}_{\sample}[\exErr(\hPQ)] \leq \upConst \left( \sN ^{\rates / (\rates + \transMarginExp / \holderExp)} + \tN \right)^{ -(\tTsyExp + 1) / \rates} + 2 \tTsyCoeff (2 \epsilon)^{\tTsyExp+1}, 
\end{equation*}
for a choice of $k = \Theta \left( \sN ^{\rates / (\rates + \transMarginExp/ \holderExp)} + \tN \right)^{2 /\rates}$, where $\rates = 2  + \tCovDim / \holderExp$ when $\family = \dmFam$, and $\rates = 2 + \tTsyExp + \tCovDim / \holderExp$ when $\family = \bcnFam$.

For $\family = \bcnFam$ with $\alpha = d$, $\upConst$ above is replaced with 
$\upConst\cdot \log(2(n_P + n_Q))$. 
\end{myproposition}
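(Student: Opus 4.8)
The plan is to rerun the proof of Theorem~\ref{thm:expErrRates} almost verbatim, isolating the effect of the conditional mismatch into a single additive term of size $\Theta(\epsilon)$. Write $\hStar(x) = \mathbbm{1}\{\regFct_{\tJoinProb}(x) \geq 1/2\}$ for the Bayes classifier of $\tJoinProb$. Exactly as for \eqref{eq:firstBound}, the event $\hPQ(x) \neq \hStar(x)$ forces $|\empReg_{\nn}(x) - \regFct_{\tJoinProb}(x)| \geq |\regFct_{\tJoinProb}(x) - 1/2|$, so that, from \eqref{eq:excesserror} with $\regFct$ replaced by $\regFct_{\tJoinProb}$,
\[
\exErr(\hPQ) \leq 2\,\mathbb{E}_{\tJoinProb}\Big|\regFct_{\tJoinProb}(\featVar) - \tfrac{1}{2}\Big|\cdot\mathbbm{1}\Big\{\big|\regFct_{\tJoinProb}(\featVar) - \tfrac{1}{2}\big| \leq \big|\empReg_{\nn}(\featVar) - \regFct_{\tJoinProb}(\featVar)\big|\Big\}.
\]
The only new ingredient is the decomposition of $|\empReg_{\nn}(x) - \regFct_{\tJoinProb}(x)|$. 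Fix $x$, let $\{\nnFeat{i}{}\}_{i=1}^{\nn}$ be its $\nn$ nearest neighbors in $\featVectBase = \sFeatVect \cup \tFeatVect$, and set $\bar\regFct_{(i)} \doteq \mathbb{E}[\nnLab{i}{}\mid\nnFeat{i}{}]$; conditionally on the features, $\nnLab{i}{}$ is Bernoulli with mean $\regFct_{\sJoinProb}(\nnFeat{i}{})$ or $\regFct_{\tJoinProb}(\nnFeat{i}{})$ according to whether $\nnFeat{i}{}\in\sFeatVect$ or $\in\tFeatVect$, so $|\bar\regFct_{(i)} - \regFct_{\tJoinProb}(\nnFeat{i}{})|\leq\epsilon$ by assumption. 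Starting from the same split as \eqref{eq:decompStep2}, using the $(\holderCoeff,\holderExp)$-H\"older continuity of $\regFct_{\tJoinProb}$ and the implicit-$1$-NN reduction of Lemma~\ref{lem:biasBoundImplicit1NN} on the bias sum, we obtain
\[
\big|\empReg_{\nn}(x) - \regFct_{\tJoinProb}(x)\big| \;\leq\; G_1(x) + G_2(x) + G_3(x) + \epsilon,
\]
where $G_1,G_2,G_3$ are the three quantities of \eqref{eq:decompStep2} with $\regFct$ replaced by $\regFct_{\tJoinProb}$, $G_1$ being recentered at $\bar\regFct_{(\cdot)}$ instead of $\regFct_{\tJoinProb}(\nnFeat{\cdot}{})$, and the extra $\epsilon$ absorbing $\tfrac1{\nn}\sum_i|\bar\regFct_{(i)} - \regFct_{\tJoinProb}(\nnFeat{i}{})|$.

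Next I would apply \eqref{eq:incatortrick} to the four-term sum: with $Z \doteq |\regFct_{\tJoinProb}(\featVar) - 1/2|$ and coefficients $6,6,6$ on the $G_i$ and $2$ on $\epsilon$ (so that $\tfrac16+\tfrac16+\tfrac16+\tfrac12=1$),
\[
\mathbbm{1}\big\{Z \leq G_1+G_2+G_3+\epsilon\big\} \leq \mathbbm{1}\{Z\leq 6G_1\} + \mathbbm{1}\{Z\leq 6G_2\} + \mathbbm{1}\{Z\leq 6G_3\} + \mathbbm{1}\{Z\leq 2\epsilon\},
\]
which, as in Proposition~\ref{prop:biasVarianceDecomp}, gives $\mathbb{E}_{\sample}[\exErr(\hPQ)] \leq \mathbb{E}[\bigElement_1] + \mathbb{E}[\bigElement_2] + \mathbb{E}[\bigElement_3] + \mathbb{E}[\bigElement_\epsilon]$, where the $\bigElement_i$ are as in that proposition (with constant $6$ in place of $3$, which only affects constants) and $\bigElement_\epsilon(x) \doteq 2|\regFct_{\tJoinProb}(x)-1/2|\cdot\mathbbm{1}\{|\regFct_{\tJoinProb}(x)-1/2|\leq 2\epsilon\}$. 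The terms $\mathbb{E}[\bigElement_1]+\mathbb{E}[\bigElement_2]$ are bounded by $\upConst\,\nn^{-(\tTsyExp+1)/2}$ exactly as in Lemma~\ref{lem:boundingVar} — after recentering, $G_1$ is still a sum of independent bounded mean-zero variables, so the Hoeffding step is unchanged — and $\mathbb{E}[\bigElement_3]$ is bounded by Lemma~\ref{lem:boundPhi3DM} (under $\dm$) or Lemma~\ref{lem:boundPhi3BCN} (under $\bcn$); the point is that those bounds, and the transfer exponent $\transMarginExp$ itself, depend only on the marginals $\sProb,\tProb$, which the conditional shift leaves untouched. Choosing $\nn = \Theta(\sN^{\rates/(\rates+\transMarginExp/\holderExp)}+\tN)^{2/\rates}$ and invoking Lemma~\ref{lem:boundImpliesRates} produces the first term of the claimed bound (with the $\log(2(n_P+n_Q))$ factor in the corner case $\holderExp=\tCovDim$, and under the hypothesis $\holderExp<\tCovDim$ for $\bcnFam$). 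Finally, by the Tsybakov noise condition for $\tJoinProb$ (Definition~\ref{def:noise}), $\tProb(0<Z\leq 2\epsilon)\leq\tTsyCoeff(2\epsilon)^{\tTsyExp}$ and $2|\regFct_{\tJoinProb}-1/2|\leq 4\epsilon$ on the support of $\bigElement_\epsilon$, so $\mathbb{E}[\bigElement_\epsilon] \leq 4\epsilon\cdot\tTsyCoeff(2\epsilon)^{\tTsyExp} = 2\tTsyCoeff(2\epsilon)^{\tTsyExp+1}$, which is the second term.

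The one point I expect to require genuine care is the claim that the conditional discrepancy enters \emph{only} as the flat additive $\epsilon$ and does not interact with the $\transMarginExp$-dependent bias analysis. A priori one might worry that $\empReg_{\nn}$ draws most of its neighbors from $\sFeatVect$ precisely in the regions where $\sProb$ dominates $\tProb$ — exactly the regime controlled by $\transMarginExp$ — and that there the ``wrong'' conditional $\regFct_{\sJoinProb}$ would bias the estimate more severely, so that $\epsilon$ would get amplified by some factor growing in $\transMarginExp$. The resolution is that the bound $|\bar\regFct_{(i)} - \regFct_{\tJoinProb}(\nnFeat{i}{})|\leq\epsilon$ holds uniformly over $i$ and over $x$ irrespective of how the neighborhood splits between the two samples, so the contribution is genuinely a single additive $\epsilon$ and no localization argument is needed. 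Everything else is the bookkeeping already carried out in Appendix~\ref{app:upperBound}.
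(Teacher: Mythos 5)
Your proposal is correct and follows essentially the same route as the paper's proof: both isolate the conditional mismatch as a uniform additive $\epsilon$ in the regression-error decomposition (since $|\mathbb{E}[\nnLab{i}{}\mid\nnFeat{i}{}]-\regFct_{Q}(\nnFeat{i}{})|\leq\epsilon$ regardless of which sample each neighbor comes from), split the indicator, bound the $\epsilon$-piece via the noise condition to get $2\tTsyCoeff(2\epsilon)^{\tTsyExp+1}$, and recycle the Theorem~\ref{thm:expErrRates} machinery for the remaining terms. Your write-up is in fact slightly more careful than the paper's (explicit recentering of the variance term at the true conditional mean, and the remark on why $\epsilon$ does not interact with $\transMarginExp$), but the argument is the same.
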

\begin{proof}
Note that in this setting we have:
\begin{equation*}
\exErr(\hPQ) \leq 2 \mathbb{E}_{\tJoinProb} \left|\regFct_{Q}(\featVar)-\frac{1}{2}\right|  \cdot \mathbbm{1} \left \{\left| \regFct_{Q}(\featVar)-\frac{1}{2} \right| \leq \left| \empReg_k(\featVar) - \regFct_{Q}(\featVar) \right| \right \}.
\end{equation*}
We then have the following bound on the regression error:
\begin{align*} 
\left| \empReg_k(x) - \regFct_Q(x) \right| 
 &\leq \frac{1}{\nn}\left| \sum_{i = 1}^{\nn} \nnLab{i}{} - \mathbb{E}[\nnLab{i}{} | \nnFeat{i}{}]  \right| + \frac{1}{\nn}\left| \regFct_{Q}(\nnFeat{i}{}) - \mathbb{E}[\nnLab{i}{} | \nnFeat{i}{}]  \right| \\
 & \quad + \frac{\holderCoeff}{\nn} \sum_{i = 1}^{\nn} \dist( \nnFeat{i}{} , x )^{\holderExp}. 
\end{align*}

Note that by assumption the middle term is bounded as follows:
\begin{align*} 
 \frac{1}{\nn}\left| \regFct_{Q}(\nnFeat{i}{}) - \mathbb{E}[\nnLab{i}{} | \nnFeat{i}{}]  \right| \leq \epsilon.
\end{align*}

Hence, by using $\mathbbm{1}\{x \leq a + b\} \leq \mathbbm{1}\{x \leq 2a\} +  \mathbbm{1}\{x \leq 2b\}$, and using low noise assumption, we get:
\begin{equation*}
\exErr(\hPQ) \leq 2 \mathbb{E}_{\tJoinProb} \left|\regFct_{Q}(\featVar)-\frac{1}{2}\right|  \cdot \mathbbm{1} \left \{\left| \regFct_{Q}(\featVar)-\frac{1}{2} \right| \leq 2A \right \}  + 2 \tTsyCoeff (2 \epsilon)^{\tTsyExp+1}, 
\end{equation*}
where $A =  \frac{1}{\nn}\left| \sum_{i = 1}^{\nn} \nnLab{i}{} - \mathbb{E}[\nnLab{i}{} | \nnFeat{i}{}]  \right| + \frac{\holderCoeff}{\nn} \sum_{i = 1}^{\nn} \dist( \nnFeat{i}{} , x )^{\holderExp}. $

To bound $A$, we just follow the lines of our previous upper-bound analysis.
\end{proof}


\end{document}